\newcolumntype{M}[1]{>{\centering\arraybackslash}m{#1}}
\newcolumntype{N}[2]{>{\hspace{#2}\centering\arraybackslash\hspace{#2}}m{#1}}
\newcolumntype{Q}{>{\centering\arraybackslash}m{0.5cm}}
\newcolumntype{E}{>{$}c<{$}}
\NewDocumentCommand{\reals}{o}{%
  \mathbb{R}\IfValueT{#1}{^{#1}}%
}
\NewDocumentCommand{\h}{o}{%
  h\IfValueT{#1}{^{#1}}%
}
\NewDocumentCommand{\x}{o}{%
  x\IfValueT{#1}{^{#1}}%
}
\NewDocumentCommand{\ve}{m o}{%
  #1\IfValueT{#2}{^{#2}}%
}
\NewDocumentCommand{\ma}{m o}{%
  \mathcal{#1}\IfValueT{#2}{^{#2}}%
}
\NewDocumentCommand{\emb}{m o}{%
  {#1}\IfValueT{#2}{^{[#2]}}%
}
\title{Revisiting Bi-Linear State Transitions \\ in Recurrent Neural Networks}
\author{%
  M.Reza Ebrahimi \\
  Qualcomm AI Research \\
  \texttt{ebrahimi@qti.qualcomm.com} \\
  \And
  Roland Memisevic \\
  Qualcomm AI Research \thanks{Qualcomm AI Research is an initiative of Qualcomm Technologies, Inc.}\\
  \texttt{rmemisev@qti.qualcomm.com} \\
}
\begin{document}

\maketitle

\begin{abstract}
The role of hidden units in recurrent neural networks is typically seen as modeling memory, with research focusing on enhancing information retention through gating mechanisms. A less explored perspective views hidden units as active participants in the computation performed by the network, rather than passive memory stores. In this work, we revisit bilinear operations, which involve multiplicative interactions between hidden units and input embeddings. We demonstrate theoretically and empirically that they constitute a natural inductive bias for representing the evolution of hidden states in state tracking tasks. These are the simplest type of tasks that require hidden units to actively contribute to the behavior of the network. We also show that bilinear state updates form a natural hierarchy corresponding to state tracking tasks of increasing complexity, with popular linear recurrent networks such as Mamba residing at the lowest-complexity center of that hierarchy. 
\end{abstract}

\section{Introduction}
State tracking is a fundamental requirement for performing sequential decision-making tasks, in which future actions 
depend on the consequences of past actions. The consequences of past actions are usually not 
directly observable, making state tracking a key ingredient in virtually every real-world 
multi-step interaction between an agent and its environment. This includes multi-hop dialogue, 
end-to-end learned robot control, and recent ``agentic LLM'' use-cases, in which a language 
model is trained to interact with an API. 

While state tracking is an ill-defined concept in general, a common way to define it formally, 
which shall suffice for the purpose of this work, is to treat it as the task of correctly 
representing the arbitrary-length sequence of states that a state machine takes on in response 
to observing a given sequence of inputs. 
This is equivalent to modeling Finite Automata (FA), or regular languages, in the Chomsky hierarchy of formal languages \citep{chomsky1956three, hopcroft_automata}. 


Although state tracking is a seemingly simple task for neural networks to learn, many models 
are surprisingly bad at learning it from data. 
The reason for its simplicity is that the task admits a simple inductive decomposition: 
For each input in the sequence, it suffices to update an internal representation of the 
state inferred from all inputs seen previously. 
As a result, it is possible, in principle, to learn state tracking for sequences of arbitrary 
length by simply learning the appropriate state transitions for every (input, state)-pair 
from the training data. 

However, in practice, this requires an inductive bias towards the input-by-input state update, 
which is not present in many models. 
For example, many popular sequence models, such as the Transformer cannot learn to perform state 
tracking \citep{faithandfate,anil_lengthgeneralization,NEURIPS2024_3107e4bd} on 
sequences longer than the training data. 
This includes very large, pre-trained Transformer-based language models, and 
it is the case even when trained to use chain-of-thought reasoning 
(``inference-time compute'') (e.g., \cite{ebrahimi2024your}).

Similarly, as shown by \cite{merrill2024illusion}, many linear recurrent networks fail to learn arbitrary-length state tracking tasks, 
which include large RNN-based 
pre-trained language models, such as Mamba \citep{gu2024mamba}, or the mLSTM \citep{xlstm}. 
Recent work has shown that certain linear RNNs can learn some state tracking tasks, 
if the hidden-to-hidden transition matrix satisfies two conditions: (i) it is a function of 
the input (and thus not time-invariant), and (ii) not all of its eigenvalues are positive \citep{sarrof2024the,grazzi2025unlocking}. 
However, the tasks that can be learned under these conditions are highly restricted as 
we shall show. 
A benefit of linear models, besides being amenable to analysis, is that they 
can be trained efficiently on parallel hardware (e.g., \cite{martin2018parallelizing}).
This is in contrast to standard (non-linear) recurrent networks (RNNs), due to 
the linear dependence between hidden states across time-steps. 

In this work, we revisit recurrent networks with \emph{bilinear} hidden-to-hidden transitions. 
The transition matrix in these models is a simple bilinear function of inputs and 
hidden activations of the previous time-step. 
Various types of bilinear recurrent models have been investigated in the past 
(e.g., \cite{SutskeverMRNN,downey2017predictive,MIRNN}), but they have not caught on 
as widely used models. This is in part due to instabilities and optimization difficulties 
owed to their inherent three-way multiplicative interactions. 

\begin{figure}[t]
    \centering
    \includegraphics[width=1\textwidth]{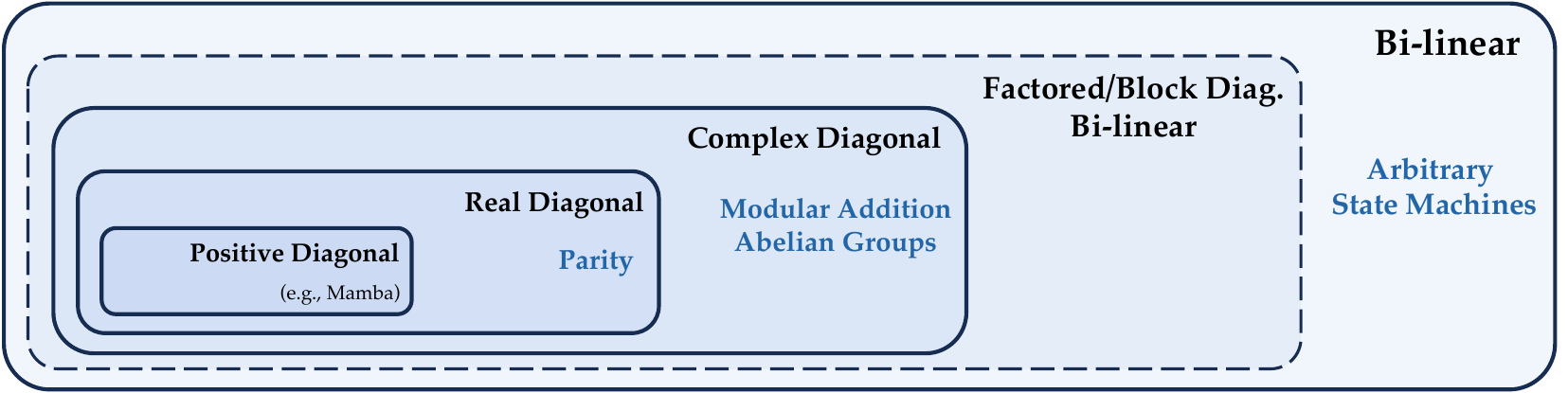}
    \caption{Taxonomy of bilinear RNNs studied in this paper, along with example regular language tasks they can learn (in blue).}
    \label{fig:bilinearmodels}
\end{figure}

We show that bilinear RNNs are highly effective at learning state tracking tasks if one leverages 
a few simple tricks to avoid instabilities during training and inference. 
This includes removing any additive components (``bias terms'' and other additive contributions to 
the hidden state), such that the hidden state is a true bilinear not an affine function of 
the previous time-step hidden state and input. 
We also show that bilinear models form a natural hierarchy of decreasing complexity, ranging from fully 
unconstrained but parameter-inefficient models to highly constrained and parameter-efficient models. 
The different model classes within the hierarchy correspond to increasingly narrow subclasses of regular language modeling tasks that can be learned from data (see Fig.~\ref{fig:bilinearmodels}). 
Several existing linear RNNs, such as Mamba \citep{gu2024mamba}, are at the center of the hierarchy, 
with no state tracking capability at all. 

A task that has received significant attention as a testbed for learning state tracking behavior 
with sequence models in the past is the task of computing the parity of a binary bit string. 
We show that a notable special case of learning bilinear state transitions without 
additive terms is that it can learn the parity task with a frozen (untrained) recurrence 
and only training final readout layer on as few as two training examples. 


Figure~\ref{fig:bilinearmodels} shows an overview of bilinear models along with task 
classes we study in this work, ranging from simulating arbitrary state machines (the broadest class learnable 
by unconstrained bilinear models) to parity 
(the most narrow class, which can be learned even by models with real-valued diagonal transition matrix). 
Further constraining the transition matrix to positive diagonal impedes a model's ability to perform 
state tracking (see, e.g., \cite{grazzi2025unlocking,sarrof2024the}). 
We summarize our contributions as follows: 
\begin{itemize}[leftmargin=2em, itemsep=0.5pt, topsep=0.5pt]
    \item We revisit bilinear state transitions in RNNs and present an extensive study, showing that they can learn state tracking tasks, unlike many existing linear recurrent models, albeit with the caveat that they can have a very large number of parameters.  
    \item We show that it is always sufficient (and in some cases necessary) for the hidden state to be a \emph{pure linear not affine} function of the hidden state at the previous time-step. 
    The absence of any additive terms makes hidden states scale-invariant, which in turn allows us to normalize hidden states during training and/or inference without sacrificing the linear recurrence. 
    \item We show that a pure linear (not affine) RNN with frozen random weights and a trained linear readout layer can learn parity with perfect accuracy from only two training examples.  
    \item We show that linear RNNs with diagonal transition matrices are a special case limited to learning state tracking tasks with commutative structure. This restriction is true even for complex-valued diagonal transition matrices. Hence, linear RNNs with block-diagonal transition matrices of size $2\times2$ are \emph{not} able to learn general state machines (negative result). 
\end{itemize} 


{\bf Related work: } Bilinear models have been studied extensively for unsupervised learning of transformations and relationships from data \citep{NIPS1996_70222949,2005neural,factoredGBM}. 
Bilinear state transitions have also been discussed in the context of recurrent networks by
\cite{SutskeverMRNN,downey2017predictive,MIRNN,grammarcells}. 
Besides the analysis, a key novelty in our practical results is the importance of 
using pure bilinear, not affine, state transitions. 
As a special case of bilinear state transitions, we study the use of 
two-dimensional subspaces in which hidden units are transformed through rotations 
only. This is similar to existing, but non-linear, networks with complex-valued or 
unitary transition matrices (e.g., \cite{pmlr-v48-arjovsky16,NEURIPS2018_652cf383}). 

Recent work has shown that a dependence of hidden state transitions on the inputs 
is necessary for a recurrent network to learn \emph{any} state tracking behavior
(\cite{grazzi2025unlocking,sarrof2024the,AdvancingRegularLanguageReasoning}), 
although the connection to bilinear models is absent in that work, and 
transition matrices are defined as neural network layers and include input-dependent 
additive terms (which we show to be detrimental to learning). \cite{terzic2025sdssm} propose a variant of state-space models where the transition matrix is constructed from an input-dependent linear combination of learned (but fixed) dense matrices, enabling some degree of length generalization on a set of regular language tasks.
Bilinear models learn to encode hidden-to-hidden transitions as linear functions of 
the input, making them reminiscent of observable operator 
models and predictive state representations \citep{OOM,PSR}. 

\section{Modeling hidden state dynamics using bilinear state transitions}
\label{sec:bilinear}




A linear recurrent neural network represents a sequence of observations $\x[t] \in \reals[D]$ via the temporal evolution of a vector of hidden variables (the ``hidden state'') $\h[t] \in \reals[H]$.  The most common form for modeling the temporal evolution is: 
\begin{equation}\label{rnn}
\h[t] = \ma{A} \h[t-1] + \ma{B} \x[t] + \ve{b},
\end{equation}

where $\ma{A} \in \reals[H\times H]$ is a hidden-to-hidden matrix, 
$\ma{B} \in \reals[H\times D]$ is an input-to-hidden matrix modeling input-dependent additive terms, and $\ve{b} \in \reals[H]$ is a vector of additive input-independent biases. 

Recently, it has been remarked that for a recurrent network of the form Eq.~\eqref{rnn} to be able to learn state tracking tasks, the hidden-to-hidden transformation $\ma{A}$ needs to depend on the input $\x$ \citep{gu2024mamba,sarrof2024the,AdvancingRegularLanguageReasoning,grazzi2025unlocking}. 
The necessity for input-dependence has been motivated by showing, both theoretically and empirically, that models fail to learn state tracking tasks in the absence of the input-dependence. The exact form of the dependence of $\ma{A}$ 
on $\x$ has been left open. Instead, it has been suggested to parameterize  $\ma{A}(\x)$ as a  neural network. 

We argue that a natural alternative for this dependence, while keeping the recurrence linear, 
is to make it multiplicative, such that a hidden unit $h^t_i$ at time $t$ is a function of the 
products $h^{t-1}_j \cdot x^t_k$ of the components of the hidden state at the previous time-step and the 
inputs at the current time-step. 
The reason is that this makes explicit the input-dependent transformations between hidden 
states across time-steps, and thereby makes it natural to simulate a state machine, as we shall discuss below. 

\subsection{Simulating finite-state machines and group structures}
A finite-state machine (FSM), or finite automaton (FA), can be formally defined as a tuple $\mathcal{S} = (Q, \Sigma, \delta, q_0)$, where $Q$ is a finite set of states, $\Sigma$ is a finite input alphabet, and $\delta: Q \times \Sigma \rightarrow Q$ is the state transition function. The machine starts in an initial state $q_0 \in Q$. Given an input sequence $\sigma = \{\sigma_1, \sigma_2, \ldots, \sigma_T\}$, the FSM undergoes a sequence of state transitions $\{q_1, q_2, \ldots, q_T\}$, where each state $q_t$ is determined by $q_t = \delta(q_{t-1}, \sigma_t)$. 
We shall define state tracking formally as the task of simulating a state machine. 
A model is said to simulate state machine $\mathcal{S}$ if, after observing the complete input sequence $\sigma$, it can produce the final state $q_T$ (see \citet{deletang2023neural, liu2023transformers}). 

As a special case, we can consider state machines representing a group structure. In this context, a group $(G, \cdot)$, where $\cdot$ denotes the group operation, can be modeled as an FSM where the set of states and the input alphabet are identical to the set of group elements, i.e., $\Sigma = Q = G$. The transition function is defined by the group operation itself: $\delta(g_1, g_2) = g_1 \cdot g_2$ for all $g_1, g_2 \in G$, representing the associative group operator with corresponding inverse and identity group elements.

Another important special case is that of an abelian group, where the group operator is commutative. We consider integer groups under addition modulo $m$, denoted as $\mathbb{Z}_m$. In this case, the state set and input alphabet are $Q = \Sigma = \mathcal{Z}_m = \{0, 1, \ldots, m-1\}$. The transition function $\delta: \mathcal{Z}_m \times \mathcal{Z}_m \rightarrow \mathcal{Z}_m$ is defined by addition modulo $m$: $\delta(a, b) = (a + b) \pmod{m}$, for all $a, b \in \mathcal{Z}_m$. Specifically, simulating the group $\mathbb{Z}_2$ is equivalent to computing the parity of a binary input sequence.
It is important to note that the operation of integer addition modulo $m$ is the canonical commutative operation to consider, as all finite abelian groups are structurally similar (isomorphic) to direct products of subgroups of $\mathbb{Z}_m$.

\subsection{Bilinear RNNs can learn arbitrary state machines}\label{sec:bilinear}
Formally, we consider the hidden state $\h[t]$ to be a bilinear function of the previous hidden state $\h[t-1]$ 
 and the current input $\x[t]$. As such, we model state transitions as: 
\begin{equation}\label{eqn:bilinear}
    h_i^t = (\h[t-1])^\top \ \ma{W}_{i} \ \x[t]  = \sum_{jk} \ma{W}_{ijk} x_k^t h_j^{t-1},
\end{equation}
where $\ma{W}_{ijk}$ are the components of a three-way parameter tensor $\ma{W}\in\reals[H\times H\times D]$, with matrix $\ma{W}_i\in\reals[H\times D]$ denoting the $i$-th slice of the tensor.
Note that Eq.~\eqref{eqn:bilinear} is equivalent to using an input-dependent transition matrix $\ma{A}_{\x}$ such that:
\begin{equation} \label{eqn:bilinear2}
\h[t]=\ma{A}_{\x} \h[t-1],
\end{equation}
with $(\ma{A}_{\x})_{ij} = \sum_k \ma{W}_{ijk} x_k$. In other words, the state transition matrix $\ma{A}$ is fully parameterized through a linear transformation of the input $\x$.
RNNs with bilinear state transitions (albeit typically in affine, not pure multiplicative form) have 
been studied previously (e.g., \cite{SutskeverMRNN,downey2017predictive,MIRNN}). 

We note that the multiplicative dependence, in particular in the absence of any additive contributions from the input,  
allows the inputs to ``route'' the information flow in the hidden states, or conversely represent transformations 
acting on them. The ability for layers in a network to elicit transformations acting on other layers 
has been a common motivation for studying trainable bilinear models in the past (e.g.,\cite{2005neural}). 
In the context of Eq.\eqref{eqn:bilinear}, it allows inputs to determine the temporal evolution of the hidden state 
(or elicit ``computations'' to be performed by the hidden layer), which is different from the somewhat more common view 
of the role of RNN hidden units as memorizing information. 
In the following we shall now make this perspective more concrete by showing that a bilinear RNN can simulate any state machine. 

\begin{restatable}{prop}{statemachine} \label{prop:statemachine}
The bilinear state transition model defined in Equation \eqref{eqn:bilinear} is capable of simulating any finite state machine $\mathcal{S} = (Q, \Sigma, \delta, q_0)$.
\end{restatable}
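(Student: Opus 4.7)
The plan is to prove this by an explicit construction that embeds the FSM directly into the bi-linear tensor via one-hot encodings. The key observation is that Eq.~\eqref{eqn:bilinear} computes, for each output coordinate $i$, a weighted sum of all pairwise products $h^{t-1}_j x^t_k$; when both $\h[t-1]$ and $\x[t]$ are one-hot, this sum selects a single tensor entry $\ma{W}_{ijk}$. Hence, if we can align coordinates with state/symbol indices, we can hard-code the transition table $\delta$ into the entries of $\ma{W}$.

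Concretely, I would set $H = |Q|$ and $D = |\Sigma|$, fix bijections between the index sets $\{1,\dots,H\}$ and $Q$, and between $\{1,\dots,D\}$ and $\Sigma$, and represent states and symbols by the corresponding standard basis vectors $e_q \in \reals[H]$ and $e_\sigma \in \reals[D]$. Initialize $\h[0] = e_{q_0}$ and feed the input sequence as $\x[t] = e_{\sigma_t}$. Define the parameter tensor by
\begin{equation*}
    \ma{W}_{ijk} \;=\; \begin{cases} 1 & \text{if } \delta(q_j, \sigma_k) = q_i, \\ 0 & \text{otherwise.} \end{cases}
\end{equation*}

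The main verification is then a short induction on $t$ showing that $\h[t] = e_{q_t}$, where $q_t = \delta(q_{t-1}, \sigma_t)$. Assuming $\h[t-1] = e_{q_{t-1}}$, Eq.~\eqref{eqn:bilinear} collapses the double sum to the single term $h^t_i = \ma{W}_{i,\, q_{t-1},\, \sigma_t}$, which by construction equals $1$ exactly when $q_i = \delta(q_{t-1}, \sigma_t)$, giving $\h[t] = e_{q_t}$. Equivalently, via the form \eqref{eqn:bilinear2}, the induced transition matrix $\ma{A}_{e_{\sigma_t}}$ is the permutation/function matrix of $\delta(\cdot, \sigma_t)$, so a single bi-linear step realizes one FSM step. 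After $T$ steps, $\h[T] = e_{q_T}$, from which the final state is recoverable by a trivial linear read-out.

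There is no real obstacle here beyond choosing the encoding: the proposition is really a statement about the expressive capacity of the tri-linear form, and once one-hot encoding is adopted, the construction is immediate. The most delicate point worth mentioning is that the absence of any additive bias in Eq.~\eqref{eqn:bilinear} is compatible with one-hot states (since $e_{q_0} \neq 0$), and that the above construction uses only $\{0,1\}$-valued weights, so no additive term is needed to break symmetries. If one wishes to handle arbitrary acceptance criteria rather than producing the final state vector, one appends a trained linear read-out layer mapping $\h[T]$ to the desired output, but this is orthogonal to the simulation claim itself.
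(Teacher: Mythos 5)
Your construction is correct and is essentially identical to the paper's own proof: both use one-hot encodings of states and input symbols and set $\ma{W}_{ijk}=1$ exactly when $\delta(q_j,\sigma_k)=q_i$, so that each induced transition matrix $\ma{A}_{\x}$ equals the FSM's transition matrix $\Delta_\sigma$. Your explicit induction on $t$ is a slightly more spelled-out version of the paper's observation that the bilinear dynamics then coincide with $\h[t]=\Delta_{\sigma_t}\h[t-1]$.
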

We refer to Appendix~\ref{appsec:proof1} for the proof. 
Besides allowing inputs to encode transformations on hidden units,  
the absence of any additive terms in Eq.~\eqref{eqn:bilinear} makes the hidden units scale-invariant. 
In other words, (up to floating point accuracy) one can multiply a hidden state vector by a constant at any
time-step and divide by the same constant at a later time-step without any effect on the final result. 
In Section~\ref{sec:experiments} we shall show that the scale-invariance 
allows us to keep hidden activations stable during training and inference. 


\subsubsection{Factorized state transition tensor}
The bilinear state update in Eq.~\eqref{eqn:bilinear} utilizes a three-way parameter tensor $\ma{W}$, whose $H^2D$ parameters often necessitate low-rank factorization for a more parsimonious model. One of the most common factorization methods proposed in the literature is the Canonical Polyadic (CP) decomposition, also known as Parallel Factor Analysis (PARAFAC) \citep{hitchcock1927expression}, which was 
used, for example, in the bilinear RNNs discussed in \cite{SutskeverMRNN,downey2017predictive}. 
The CP decomposition approximates the tensor as a sum of $R$ rank-1 tensors $\ma{W} = \sum_{r=1}^R \ve{w}^{(h_1)}_r \otimes \ve{w}^{(h_2)}_r \otimes \ve{w}^{(x)}_r$,
which in terms of individual components, is expressed as:
$\ma{W}_{ijk} = \sum_{r=1}^R \ma{W}^{(h_1)}_{ir} \ma{W}^{(h_2)}_{jr} \ma{W}^{(x)}_{kr}.
$
Here, $\otimes$ denotes the outer product, and vectors $\ve{w}^{(h_1)}_r$, $\ve{w}^{(h_2)}_r$, and $\ve{w}^{(x)}_r$ are the component vectors for the $r$-th rank-1 term. These component vectors are collected as columns in the factor matrices $\ma{W}^{(h_1)} \in \mathbb{R}^{H \times R}$, $\ma{W}^{(h_2)} \in \mathbb{R}^{H \times R}$, and $\ma{W}^{(x)} \in \mathbb{R}^{D \times R}$, respectively. As a result, the total number of parameters is reduced to $R(2H+D)$.
The input-dependent state transition matrix $\ma{A}_{\x}$ can then be expressed compactly as: 
\begin{equation}\label{eqn:factoredbilinear_corrected}
\ma{A}_{\x} = \ma{W}^{(h_1)} \mathrm{diag}\left( (\ma{W}^{(x)})^\top \x \right) (\ma{W}^{(h_2)})^\top,
\end{equation}
where input vector $\x \in \mathbb{R}^D$ and $\mathrm{diag}(\cdot)$ constructs a diagonal matrix from a vector. As shown in Appendix~\ref{appsec:numfactors}, an increasing number of factors ($R$), enables simulating state machines with larger states, as factored models with a larger $R$ provide a better approximation of a full bilinear model.

\subsubsection{Block-diagonal state transition tensor}
An alternative method for controlling the parameter count in the bilinear model involves imposing block structures on the effective state transition matrix $\ma{A}_{\x}$.  This is achieved by utilizing $B$ distinct three-way parameter tensors, denoted as $\ma{W}^{(b)} \in \mathbb{R}^{H' \times H' \times D}$, one for each block $b \in \{1, \ldots, B\}$. Here, $H' = H/B$ represents the dimensionality of each block's corresponding state subspace, assuming $H$ is an integer multiple of $B$.

Consequently, the overall state transition matrix $\ma{A}_{\x}$ adopts a block-diagonal structure, where each diagonal block, $\ma{A}^{(b)}_{\x} \in \mathbb{R}^{H' \times H'}$, is generated from its respective tensor $\ma{W}^{(b)}$ via the relation $(\ma{A}^{(b)}_{\x})_{ij} = \sum_k \ma{W}^{(b)}_{ijk} x_k$. As a result, the total number of parameters is reduced by a factor $B$. This block-diagonal parameterization can be conceptualized as employing $B$ independent ``heads'', each processing a distinct subspace of the hidden state vector using its own dense transition dynamics.
It is reminiscent of block-diagonal transition matrices studied by \cite{AdvancingRegularLanguageReasoning}, albeit defining them 
as a bilinear, non-additive function of the inputs. 

\subsection{Complex diagonal bilinear RNNs}\label{sec:complexdiag}

In Section~\ref{sec:bilinear}, we established that the bilinear state-transition form defines an input-dependent state-transition matrix $\ma{A}_{\x}$, whose entries are linear functions of the input $\x$, parameterized by a third-order tensor $\ma{W}$. In this section, we discuss how diagonalizing the state-transition matrix, a common simplification in many linear RNN variants, reduces a model's expressive capability to commutative operations.

First, consistent with common practice in the linear RNN literature (e.g., \cite{lru}), we consider state-transition matrices that are diagonalizable over the complex numbers. This focus is justified because the set of non-diagonalizable matrices has measure zero \citep{axler2024linear}; consequently, any matrix $A \in \mathbb{R}^{N \times N}$ can be made diagonalizable over $\mathbb{C}$ through an arbitrarily small perturbation of its entries \citep{zhinan2002jordan}. This implies that, based on the real Jordan Normal Form, the state-transition matrix $\ma{A}_{\x}$ can be expressed as:
\begin{equation}
\ma{A}_{\x} = \ma{P}_{\x} \ma{D}_{\x} \ma{P}_{\x}^{-1},
\end{equation}
where $\ma{P}_{\x} \in \mathbb{R}^{H \times H}$ is an invertible matrix and $\ma{D}_{\x} \in \mathbb{R}^{H \times H}$ is a real block-diagonal matrix with blocks of size $1\times1$ (for real eigenvalues) or $2\times2$ of the form $\ma{C}_{2} = \begin{pmatrix} a & -b \\ b & a \end{pmatrix}$ (for complex conjugate pairs of eigenvalues $a+ib$). Both $\ma{P}_{\x}$ and $\ma{D}_{\x}$ are generally parameterized by the input $\x$.

A particularly important special case is when the state-transition matrix $\ma{A}_{\x}$ is  orthogonal. This is highly desirable for linear RNNs, as it ensures stability by guaranteeing that all eigenvalues of $\ma{A}_{\x}$ have unit norm, preventing exploding or vanishing states during recurrent updates. In this scenario, the diagonal matrix $\ma{D}_{\x}$ will be entirely composed of 2-dimensional rotation matrices, which we denote as $\ma{R}_{2} = \begin{pmatrix} \cos\theta & -\sin\theta \\ \sin\theta & \cos\theta \end{pmatrix}$.

A common simplification is to assume that the transformation matrix $\ma{P}_{\x}$ is independent of the input $\x$ (i.e., $\ma{P}_{\x} = \ma{P}$). This fixed matrix $\ma{P}$ can then be canceled out in the recurrence steps and absorbed into the input and output transformations of the recurrent layer. The state dynamics are thus governed by $\ma{A}_{\x}=\ma{D}_{\x}$, which remains input-dependent and retains its block-diagonal structure, often simplified to purely diagonal with real, or even non-negative entries, e.g., in Mamba \citep{gu2024mamba}. \footnote{Note, however, that this restriction may not be as problematic when additive terms are present.}

It is important to recognize that fixing $\ma{P}$ while $\ma{D}_{\x}$ varies with the input implies that the overall state-transition matrices $\ma{A}_{\x} = \ma{P}\ma{D}_{\x}\ma{P}^{-1}$ and $\ma{A}_{\ve{y}} = \ma{P}\ma{D}_{\ve{y}}\ma{P}^{-1}$ for different inputs $\x$ and $\ve{y}$ will commute if and only if their corresponding block-diagonal components $\ma{D}_{\x}$ and $\ma{D}_{\ve{y}}$ commute.
Thus this architectural choice inherently restricts the model to commutative transition dynamics \citep{terzic2025sdssm}. 
In fact, a model whose transition matrix $\ma{A}_{\x}$ is directly parameterized as such a block-diagonal matrix $\ma{A}_{\x} = \ma{D}_{\x}$ (i.e., effectively $\ma{P}=\ma{I}$) can naturally represent operations from \textit{any} abelian group, as we show in the following proposition:

\begin{restatable}{prop}{complexabelian} \label{prop:complexabelian}
A linear RNN of the form in Eq.~\eqref{eqn:bilinear2} with orthogonal state-transition matrices $\ma{A}_{\x}$ that share a common, input-independent eigenbasis (i.e., $\ma{A}_{\x} = \ma{P}\ma{D}_{\x}\ma{P}^{-1}$ with fixed $\ma{P}$) can simulate any abelian group (commutative operation).
\end{restatable}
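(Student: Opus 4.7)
The plan is to exploit the structure theorem for finite abelian groups: any finite abelian group $G$ is isomorphic to a direct product of cyclic groups $\mathbb{Z}_{m_1} \times \cdots \times \mathbb{Z}_{m_k}$. Since each cyclic group $\mathbb{Z}_{m}$ admits a faithful orthogonal representation by $2\times 2$ rotations $\ma{R}_2(2\pi g / m)$, one can realize the whole group acting on $\reals[2k]$ via block-diagonal rotation matrices, one $2\times2$ block per cyclic factor. These block-diagonal matrices commute pairwise (blocks along matching axes commute, and blocks along distinct axes act on disjoint coordinates), matching the commutativity of $G$.

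Concretely, I would set $H=2k$, take $\ma{P}=\ma{I}$ (any fixed orthogonal $\ma{P}$ can be absorbed into the read-in/read-out, as the paper already notes), and use the standard one-hot encoding of the input alphabet $\Sigma = G$ in $\reals[|G|]$. The bi-linear parametrization of Eq.~\eqref{eqn:bilinear2} assigns a matrix $\ma{A}_{\x} = \sum_k \ma{W}_{\cdot,\cdot,k} x_k$ to each $\x$; under the one-hot encoding this reduces to choosing one matrix $\ma{A}_g$ per group element, so I simply set $\ma{W}_{\cdot,\cdot,g}$ to be the block-diagonal matrix whose $i$-th block is $\ma{R}_2(2\pi g_i / m_i)$, where $(g_1,\dots,g_k)$ is the image of $g$ under the isomorphism $G \cong \mathbb{Z}_{m_1}\times\cdots\times\mathbb{Z}_{m_k}$. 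This matrix is orthogonal with eigenbasis independent of $g$ (the standard block-pair basis), as required.

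Next I would verify the simulation. Initializing $\h[0] = (1,0,1,0,\dots,1,0)^\top$, the recurrence $\h[t]=\ma{A}_{\sigma_t}\h[t-1]$ unrolls to $\h[T] = \ma{A}_{\sigma_T}\cdots\ma{A}_{\sigma_1}\h[0]$. Block-wise commutativity of rotations implies this product is exactly $\ma{A}_{\sigma_1 \cdot \sigma_2 \cdots \sigma_T}$, so each coordinate pair of $\h[T]$ is $(\cos\theta_i,\sin\theta_i)$ with $\theta_i = 2\pi (\sum_t \sigma_{t,i}) / m_i \pmod{2\pi}$, faithfully encoding the group product $q_T$. Because the $|G|$ possible values of $\h[T]$ are distinct unit vectors in $\reals[2k]$, a fixed linear read-out followed by an argmax recovers $q_T$, which matches the paper's definition of simulation (as used for Proposition~\ref{prop:statemachine}).

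I expect the obstacles to be largely bookkeeping rather than conceptual. The one genuinely delicate point is the interplay between the orthogonality/eigenbasis constraint and the bi-linearity in $\x$: while rotation angles do not depend linearly on a scalar input, the one-hot encoding converts the required map ``symbol $\mapsto$ rotation'' into a linear lookup, which is the natural setting in which Eq.~\eqref{eqn:bilinear2} applies. I would also make explicit that the construction only uses the commutative special cases $\ma{R}_2$ (and trivial $1\times 1$ blocks if any $m_i = 1$), matching the block types discussed in Section~\ref{sec:complexdiag}, so that the resulting $\ma{A}_{\x}$ are simultaneously orthogonal, block-diagonal in a fixed basis, and realize exactly the abelian group $G$.
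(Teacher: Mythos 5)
Your proposal is correct and follows essentially the same route as the paper's proof: invoke the structure theorem for finite abelian groups, represent each cyclic factor $\mathbb{Z}_{m}$ by $2\times2$ rotations $\ma{R}_2(2\pi g/m)$ whose angles add under composition, and recover the group element from the final hidden state with a linear read-out plus argmax. The only difference is that you make the direct-product step explicit by building the block-diagonal matrix on $\reals[2k]$, whereas the paper proves the single cyclic case with $H=2$ and appeals to the structure theorem for the general case; your version is a slightly more complete write-up of the same argument.
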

We refer to Appendix~\ref{appsec:proof2} for the proof. In Section~\ref{sec:expangle}, we also present a visualization of the invariant subspaces and rotation angles learned by the model. 

\subsection{Real diagonal bilinear RNNs}
Finally, the block-diagonal transition matrix, $\ma{D}_{\x}$, is often further simplified to be purely diagonal with real values; e.g., the RG-LRU cell utilized in the Hawk architecture \citep{de2024griffin}. However, as noted by \cite{grazzi2025unlocking}, such models are incapable of learning modular addition.

Contrary to this limitation, we will show that learning parity is not only straightforward, but that it is in fact trivial for a linear RNN with real-valued diagonal state transitions which depend multiplicatively not additively on $x$. Length-generalization on the parity task is widely used to test the state tracking capabilities of sequence models (e.g., \cite{anil_lengthgeneralization,grazzi2025unlocking}).

\begin{restatable}{prop}{fixedparity} \label{prop:fixedparity}
A random network with frozen real-diagonal transition matrix (without additive terms) and learnable linear readout layer learns parity with probability $1-2^{-H}$, for arbitrary sequence length from only $2$ training examples of odd and even parity. 
\end{restatable}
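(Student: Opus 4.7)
My plan is to unroll the recurrence in closed form, exploit the fact that without additive terms the parity of the sequence factors out cleanly, and then reduce the whole claim to a simple sign calculation on the random diagonal entries.

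I encode input bits as $x_t \in \{-1,+1\}$ so that, with a frozen diagonal $D = \mathrm{diag}(d_1,\ldots,d_H)$ and no additive component, the purely multiplicative recurrence $h^t = x_t\, D\, h^{t-1}$ unrolls to
\begin{equation*}
h^T \;=\; \Bigl(\prod_{t=1}^T x_t\Bigr)\, D^T h^0 \;=\; p(\sigma)\, D^T h^0,
\end{equation*}
where $p(\sigma) := \prod_{t=1}^T x_t \in \{-1,+1\}$ is exactly the signed parity of the sequence. Componentwise, $\mathrm{sign}(h^T_i) = p(\sigma)\,\mathrm{sign}(d_i)^T\,\mathrm{sign}(h^0_i)$, which is independent of $T$ precisely when $d_i>0$. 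For any such ``good'' coordinate $i^*$, the single-coordinate readout $w = \mathrm{sign}(h^0_{i^*})\,e_{i^*}$ produces output with sign equal to $p(\sigma)$ for every input of every length. Since the two training examples (opposite parities at a common length $T_0$) yield antipodal hidden states $\pm D^{T_0} h^0$, this readout fits both training examples trivially and simultaneously length-generalizes.

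The probability calculation then reduces to an elementary i.i.d.\ sign count: under any symmetric continuous initialization of the $d_i$, we have $P(d_i > 0) = 1/2$ independently, so $P(\exists\, i^* : d_{i^*} > 0) = 1 - 2^{-H}$, matching the stated bound.

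The main obstacle is reconciling ``a generalizing readout exists'' with ``the two-example training actually returns such a readout'', since two antipodal examples impose only a single linear constraint, leaving an $(H{-}1)$-dimensional family of fitting readouts most of which fail to length-generalize. My resolution is existential: on the good event the generalizing $w = e_{i^*}$ already sits in the fitting hyperplane, and any training rule with even a mild simplicity bias (e.g.\ a sparse or non-negative readout) recovers it. To verify tightness of $1-2^{-H}$ I would further check the complementary event $\{d_i < 0 \text{ for all } i\}$: the output $w^\top h^T = p(\sigma)(-1)^T \sum_i w_i |d_i|^T h^0_i$ has its inner sum's sign asymptotically dictated by the entry with the largest $|d_i|$, so it stabilizes and the overall sign alternates with $T$ regardless of $w$, ruling out length generalization on the failure event.
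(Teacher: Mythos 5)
Your proof is correct and reaches the stated bound, but by a genuinely different construction than the paper's. The paper keeps the two input symbols as separate (implicitly one-hot) embeddings, so the frozen network supplies two independent random diagonals $a^{(0)}, a^{(1)}$; the hidden state unrolls to $h^T_i = h^0_i \prod_t a^{(\sigma^t)}_i$, whose sign flips once per observed $1$, and the ``good coordinate'' event is that $a^{(0)}_i$ and $a^{(1)}_i$ have opposite signs (probability $1/2$ per coordinate, hence $1-2^{-H}$). You instead hard-wire the antipodal encoding $x_t\in\{-1,+1\}$, which ties the two transition matrices together as $\pm D$ and collapses the hidden state to $p(\sigma)\,D^T h^0$; the good event becomes $d_i>0$, again probability $1/2$. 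The two routes coincide numerically, but yours is tighter in one respect: the paper's ``opposite signs'' event includes the configuration $a^{(0)}_i<0,\ a^{(1)}_i>0$, where the sign of $h^T_i$ encodes parity only up to a $(-1)^T$ factor, and the paper's own remark concedes that a genuinely length-independent sign readout then needs the stricter per-coordinate event of probability $1/4$, degrading the bound to $1-(3/4)^H$. Your construction achieves length-independence and the full $1-2^{-H}$ simultaneously, at the cost of treating the $\pm 1$ input encoding as designed rather than random. Both arguments share the same informality at the learning step: two training examples impose a single linear constraint on $w$, and neither proof shows that an actual training procedure selects the generalizing coordinate readout from the fitting hyperplane; you at least flag this explicitly, and your observation that all same-length inputs collapse to two antipodal hidden states makes the existential claim cleanest in your setting.
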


Freezing the recurrent weights turns the network effectively into a bilinear variant of an echo state 
network \citep{Jaeger:2007,lsm}. Our result shows that an echo state network with state transitions depending  \emph{only multiplicatively} on the input can learn parity.  We shall show experimental results confirming this result in practice in Section~\ref{sec:learningparity}.
This is in contrast to models like Mamba \citep{gu2024mamba}, in which state transitions are 
diagonal and positive,  
and which can therefore not learn parity even when adapting recurrent parameters during learning \citep{grazzi2025unlocking}. 









\section{Experiments} \label{sec:experiments}
{\bf Tasks: } To evaluate the state-tracking capabilities of the bilinear RNN model variants introduced previously, we use the following three tasks: modular addition, random state machine, and modular arithmetic.
In the modular addition task, the model processes a sequence of integers, each randomly drawn from the set $\mathcal{Z}_m=\{0,\cdots,m-1\}$, and is required to predict their sum modulo $m$. For the random state machine task, the model must simulate a randomly generated finite-state machine where both the input alphabet $\Sigma$ and the set of states $Q$ are identical to $\mathcal{Z}_m$; and for each $q\in Q$, the transition function is set to $\delta(q, \sigma) = \pi_q(\sigma)$, where $\pi_q$ is a random permutation of $\Sigma$. Finally, the modular arithmetic task involves processing a sequence alternating between integers from $\mathcal{Z}_m$ and arithmetic operators (from the set $\{+,\times,-\}$); the model must compute and output the result of these operations, applied sequentially, with all calculations performed modulo $m$. For all tasks, multi-digit integers are tokenized into single tokens.
We refer the reader to Appendix~\ref{appsec:tasks} for examples and additional details on each task.

{\bf Models: } We compare full, factored, and block-diagonal bilinear models against several baseline architectures: non-linear RNNs including LSTM \citep{hochreiter1997long} and Elman RNN \citep{elman1990finding}, Mamba, and Transformer models. All RNN-based models (including the bilinear, and non-linear ones) have a single recurrent layer followed by a linear classification head over the hidden states. For the Mamba and Transformer baselines, we evaluate configurations with 1, 2, and 4 layers. A consistent hidden and input dimensionality of 256 is used across all models. Further details of the experimental setup are provided in Appendix~\ref{appsec:impldetails}.

{\bf Recurrence stability: } The absence of additive terms in recurrent formulation of Eq.\eqref{eqn:bilinear} makes the introduced bilinear hidden states scale-invariant as discussed 
above.
For inference, we can therefore normalize hidden states during the recurrence, while not doing so during training, without introducing any inconsistency between training and inference. 

\subsection{Main results}\label{sec:mainexp}
We trained the models described above on the three tasks, on instances of lengths 2-10, and evaluated on instances of length 500. The training is done for 100,000 steps, where at each step 64 samples (of length 2-10) are generated for the task. Each model is trained using three different learning rates and picked the best-performing model. Table~\ref{tab:main1} summarizes the main results.
In all tables, we scale accuracy values such that $0$ represents random chance, and $1$ is perfect accuracy. 

We observe that bilinear models generally perform best across all tasks. Bilinear block-diagonal variants exhibit improved performance as the block size increases. Notably, the real-diagonal model (a bilinear block-diagonal model with block size 1) can only learn parity (i.e., modular addition with $m=2$); however, increasing the block size to two enables the learning of modular addition for larger values of $m$, as demonstrated in Proposition~\ref{prop:complexabelian}. Also, the $\ma{R}_2$ block-diagonal model explicitly parametrizes the state transition matrix as rotation blocks of the form $\ma{R}_2$, with angles parameterized by inputs. In contrast a block-diagonal bilinear with block size of 2 parametrizes 2D-blocks freely from the input.  

Non-linear recurrent models, such as LSTM and simple RNN, also perform well on these state-tracking tasks. It can be speculated that multiplicative interactions between hidden states and inputs arise from the gating mechanisms and non-linear activation functions within their recurrent structures. While Mamba can learn the tasks in-distribution for small state sizes $m$, it largely fails to generalize to longer sequences. Also, the failure of transformers in length-generalization is a well-known observation in the literature. 

Note that in Table~\ref{tab:main1}, we used a consistent hidden dimension of 256 across all models. In Appendix~\ref{appsec:parammatched}, we report the performance of models matched in parameter count by adjusting their hidden dimensions. In addition, Appendix~\ref{appsec:dihedral} presents results for simulating dihedral groups with various models. 

\begin{table}[htbp]
    \centering
    \caption{
    In-distribution and length-generalization performance (normalized such that $0$ indicates random chance) of various models on three state tracking tasks: modular addition, simulating random state machines, and modular arithmetic. Bilinear models outperform others, with block-diagonal variants improving as block size increases. Non-linear recurrent models (LSTM, RNN) also perform well, whereas Mamba and Transformer struggle with long-sequence generalization.
    }
    \vspace{-2pt}
    \label{tab:main1}
    
{\scriptsize
\renewcommand{\arraystretch}{1.5}  
\setlength{\tabcolsep}{6.26pt}      
\setlength{\arrayrulewidth}{0.1pt}

\begin{tabularx}{\textwidth}{M{1.6cm}M{.15cm}EEEEEE!{\vrule width 0.5pt}EEEEEE}
\specialrule{0.6pt}{0pt}{0pt}

&
& \multicolumn{6}{c!{\vrule width 0.5pt}}{Validation Accuracy (Length $2$-$10$)}
& \multicolumn{6}{c}{OOD Accuracy (Length $500$)} \\

\multicolumn{2}{c}{Modulus / State Size}
& 2
& 3
& 5
& 10
& 25
& 50
& 2
& 3
& 5
& 10
& 25
& 50 \\


\specialrule{0.6pt}{0pt}{1pt}

&
& \multicolumn{12}{c}{Modular Addition} \vspace{1pt} \\

Bilinear
&
& {\cellcolor[HTML]{CCE1D7}} \color[HTML]{000000} 1.00
& {\cellcolor[HTML]{CCE1D7}} \color[HTML]{000000} 1.00
& {\cellcolor[HTML]{CCE1D7}} \color[HTML]{000000} 1.00
& {\cellcolor[HTML]{CCE1D7}} \color[HTML]{000000} 1.00
& {\cellcolor[HTML]{CCE1D7}} \color[HTML]{000000} 1.00
& {\cellcolor[HTML]{CCE1D7}} \color[HTML]{000000} 1.00
& {\cellcolor[HTML]{CCE1D7}} \color[HTML]{000000} 1.00
& {\cellcolor[HTML]{CCE1D7}} \color[HTML]{000000} 1.00
& {\cellcolor[HTML]{CCE1D7}} \color[HTML]{000000} 1.00
& {\cellcolor[HTML]{CCE1D7}} \color[HTML]{000000} 1.00
& {\cellcolor[HTML]{CCE1D7}} \color[HTML]{000000} 1.00
& {\cellcolor[HTML]{CCE1D7}} \color[HTML]{000000} 1.00 \\
\cline{1-14}

\multicolumn{2}{c}{Factored Bilinear}
& {\cellcolor[HTML]{CCE1D7}} \color[HTML]{000000} 1.00
& {\cellcolor[HTML]{CCE1D7}} \color[HTML]{000000} 1.00
& {\cellcolor[HTML]{CCE1D7}} \color[HTML]{000000} 1.00
& {\cellcolor[HTML]{CCE1D7}} \color[HTML]{000000} 1.00
& {\cellcolor[HTML]{CCE1D7}} \color[HTML]{000000} 1.00
& {\cellcolor[HTML]{CCE1D7}} \color[HTML]{000000} 1.00
& {\cellcolor[HTML]{CCE1D7}} \color[HTML]{000000} 1.00
& {\cellcolor[HTML]{CCE1D7}} \color[HTML]{000000} 1.00
& {\cellcolor[HTML]{CCE1D7}} \color[HTML]{000000} 1.00
& {\cellcolor[HTML]{CCE1D7}} \color[HTML]{000000} 1.00
& {\cellcolor[HTML]{CCE1D7}} \color[HTML]{000000} 1.00
& {\cellcolor[HTML]{CFE6DA}} \color[HTML]{000000} 0.95 \\
\cline{1-14}

\multirow[c]{4}{*}{\makecell{Block Diag.\\[2pt] \tiny (block size)}}
& 1
& {\cellcolor[HTML]{CCE1D7}} \color[HTML]{000000} 1.00
& {\cellcolor[HTML]{CEE5D9}} \color[HTML]{000000} 0.96
& {\cellcolor[HTML]{D4ECDD}} \color[HTML]{000000} 0.88
& {\cellcolor[HTML]{D9EEDE}} \color[HTML]{000000} 0.85
& {\cellcolor[HTML]{FFFCED}} \color[HTML]{000000} 0.45
& {\cellcolor[HTML]{FFF1E1}} \color[HTML]{000000} 0.32
& {\cellcolor[HTML]{CCE1D7}} \color[HTML]{000000} 1.00
& {\cellcolor[HTML]{EDCCD4}} \color[HTML]{000000} 0.00
& {\cellcolor[HTML]{EDCCD4}} \color[HTML]{000000} 0.00
& {\cellcolor[HTML]{F6D5D4}} \color[HTML]{000000} 0.10
& {\cellcolor[HTML]{EDCCD4}} \color[HTML]{000000} 0.00
& {\cellcolor[HTML]{EFCED4}} \color[HTML]{000000} 0.02 \\

& 2
& {\cellcolor[HTML]{CCE1D7}} \color[HTML]{000000} 1.00
& {\cellcolor[HTML]{CCE1D7}} \color[HTML]{000000} 1.00
& {\cellcolor[HTML]{CCE1D7}} \color[HTML]{000000} 1.00
& {\cellcolor[HTML]{CCE1D7}} \color[HTML]{000000} 1.00
& {\cellcolor[HTML]{CCE1D7}} \color[HTML]{000000} 1.00
& {\cellcolor[HTML]{CCE1D7}} \color[HTML]{000000} 1.00
& {\cellcolor[HTML]{CCE1D7}} \color[HTML]{000000} 1.00
& {\cellcolor[HTML]{CCE1D7}} \color[HTML]{000000} 1.00
& {\cellcolor[HTML]{CCE1D7}} \color[HTML]{000000} 1.00
& {\cellcolor[HTML]{CCE1D7}} \color[HTML]{000000} 1.00
& {\cellcolor[HTML]{CCE1D7}} \color[HTML]{000000} 1.00
& {\cellcolor[HTML]{CCE1D7}} \color[HTML]{000000} 1.00 \\

& 8
& {\cellcolor[HTML]{CCE1D7}} \color[HTML]{000000} 1.00
& {\cellcolor[HTML]{CCE1D7}} \color[HTML]{000000} 1.00
& {\cellcolor[HTML]{CCE1D7}} \color[HTML]{000000} 1.00
& {\cellcolor[HTML]{CCE1D7}} \color[HTML]{000000} 1.00
& {\cellcolor[HTML]{CCE1D7}} \color[HTML]{000000} 1.00
& {\cellcolor[HTML]{CCE1D7}} \color[HTML]{000000} 1.00
& {\cellcolor[HTML]{CCE1D7}} \color[HTML]{000000} 1.00
& {\cellcolor[HTML]{CCE1D7}} \color[HTML]{000000} 1.00
& {\cellcolor[HTML]{CCE1D7}} \color[HTML]{000000} 1.00
& {\cellcolor[HTML]{CCE1D7}} \color[HTML]{000000} 1.00
& {\cellcolor[HTML]{CCE1D7}} \color[HTML]{000000} 1.00
& {\cellcolor[HTML]{CCE1D7}} \color[HTML]{000000} 1.00 \\

& 64
& {\cellcolor[HTML]{CCE1D7}} \color[HTML]{000000} 1.00
& {\cellcolor[HTML]{CCE1D7}} \color[HTML]{000000} 1.00
& {\cellcolor[HTML]{CCE1D7}} \color[HTML]{000000} 1.00
& {\cellcolor[HTML]{CCE1D7}} \color[HTML]{000000} 1.00
& {\cellcolor[HTML]{CCE1D7}} \color[HTML]{000000} 1.00
& {\cellcolor[HTML]{CCE1D7}} \color[HTML]{000000} 1.00
& {\cellcolor[HTML]{CCE1D7}} \color[HTML]{000000} 1.00
& {\cellcolor[HTML]{CCE1D7}} \color[HTML]{000000} 1.00
& {\cellcolor[HTML]{CCE1D7}} \color[HTML]{000000} 1.00
& {\cellcolor[HTML]{CCE1D7}} \color[HTML]{000000} 1.00
& {\cellcolor[HTML]{CCE1D7}} \color[HTML]{000000} 1.00
& {\cellcolor[HTML]{CCE1D7}} \color[HTML]{000000} 1.00 \\
\cline{1-14}

$\ma{R}_2$ Block Diag.
&
& {\cellcolor[HTML]{CCE1D7}} \color[HTML]{000000} 1.00
& {\cellcolor[HTML]{CCE1D7}} \color[HTML]{000000} 1.00
& {\cellcolor[HTML]{CCE1D7}} \color[HTML]{000000} 1.00
& {\cellcolor[HTML]{CCE1D7}} \color[HTML]{000000} 1.00
& {\cellcolor[HTML]{CCE1D7}} \color[HTML]{000000} 1.00
& {\cellcolor[HTML]{CCE1D7}} \color[HTML]{000000} 1.00
& {\cellcolor[HTML]{CCE1D7}} \color[HTML]{000000} 1.00
& {\cellcolor[HTML]{EDCCD4}} \color[HTML]{000000} 0.00
& {\cellcolor[HTML]{CCE1D7}} \color[HTML]{000000} 1.00
& {\cellcolor[HTML]{F1F9E3}} \color[HTML]{000000} 0.66
& {\cellcolor[HTML]{FFF6E5}} \color[HTML]{000000} 0.37
& {\cellcolor[HTML]{EDCCD4}} \color[HTML]{000000} 0.00 \\
\cline{1-14}

LSTM
&
& {\cellcolor[HTML]{CCE1D7}} \color[HTML]{000000} 1.00
& {\cellcolor[HTML]{CCE1D7}} \color[HTML]{000000} 1.00
& {\cellcolor[HTML]{CCE1D7}} \color[HTML]{000000} 1.00
& {\cellcolor[HTML]{CCE1D7}} \color[HTML]{000000} 1.00
& {\cellcolor[HTML]{CCE1D7}} \color[HTML]{000000} 1.00
& {\cellcolor[HTML]{CCE2D7}} \color[HTML]{000000} 0.99
& {\cellcolor[HTML]{CCE1D7}} \color[HTML]{000000} 1.00
& {\cellcolor[HTML]{CCE1D7}} \color[HTML]{000000} 1.00
& {\cellcolor[HTML]{CDE2D8}} \color[HTML]{000000} 0.98
& {\cellcolor[HTML]{CCE1D7}} \color[HTML]{000000} 1.00
& {\cellcolor[HTML]{EDCCD4}} \color[HTML]{000000} 0.00
& {\cellcolor[HTML]{EFCED4}} \color[HTML]{000000} 0.02 \\
\cline{1-14}

RNN
&
& {\cellcolor[HTML]{CCE1D7}} \color[HTML]{000000} 1.00
& {\cellcolor[HTML]{CCE1D7}} \color[HTML]{000000} 1.00
& {\cellcolor[HTML]{CCE1D7}} \color[HTML]{000000} 1.00
& {\cellcolor[HTML]{CCE1D7}} \color[HTML]{000000} 1.00
& {\cellcolor[HTML]{CCE1D7}} \color[HTML]{000000} 1.00
& {\cellcolor[HTML]{CCE1D7}} \color[HTML]{000000} 1.00
& {\cellcolor[HTML]{CCE1D7}} \color[HTML]{000000} 1.00
& {\cellcolor[HTML]{CCE1D7}} \color[HTML]{000000} 1.00
& {\cellcolor[HTML]{CCE1D7}} \color[HTML]{000000} 1.00
& {\cellcolor[HTML]{CDE3D8}} \color[HTML]{000000} 0.98
& {\cellcolor[HTML]{FFF6E5}} \color[HTML]{000000} 0.37
& {\cellcolor[HTML]{F4D3D4}} \color[HTML]{000000} 0.07 \\
\cline{1-14}

\multirow[c]{3}{*}{\makecell{Mamba\\[2pt] \tiny (layers)}}
& 1
& {\cellcolor[HTML]{CCE2D7}} \color[HTML]{000000} 0.99
& {\cellcolor[HTML]{D0E8DB}} \color[HTML]{000000} 0.92
& {\cellcolor[HTML]{CEE5D9}} \color[HTML]{000000} 0.96
& {\cellcolor[HTML]{D9EEDE}} \color[HTML]{000000} 0.85
& {\cellcolor[HTML]{E8F5E1}} \color[HTML]{000000} 0.74
& {\cellcolor[HTML]{F7FBE7}} \color[HTML]{000000} 0.61
& {\cellcolor[HTML]{EDCCD4}} \color[HTML]{000000} 0.00
& {\cellcolor[HTML]{EECDD4}} \color[HTML]{000000} 0.01
& {\cellcolor[HTML]{EDCCD4}} \color[HTML]{000000} 0.01
& {\cellcolor[HTML]{EDCCD4}} \color[HTML]{000000} 0.00
& {\cellcolor[HTML]{EDCCD4}} \color[HTML]{000000} 0.00
& {\cellcolor[HTML]{EDCCD4}} \color[HTML]{000000} 0.00 \\

& 2
& {\cellcolor[HTML]{CCE1D7}} \color[HTML]{000000} 1.00
& {\cellcolor[HTML]{CCE1D7}} \color[HTML]{000000} 1.00
& {\cellcolor[HTML]{CCE1D7}} \color[HTML]{000000} 1.00
& {\cellcolor[HTML]{CCE1D7}} \color[HTML]{000000} 1.00
& {\cellcolor[HTML]{CCE1D7}} \color[HTML]{000000} 1.00
& {\cellcolor[HTML]{CCE1D7}} \color[HTML]{000000} 1.00
& {\cellcolor[HTML]{EDCCD4}} \color[HTML]{000000} 0.00
& {\cellcolor[HTML]{EFCED4}} \color[HTML]{000000} 0.02
& {\cellcolor[HTML]{EDCCD4}} \color[HTML]{000000} 0.01
& {\cellcolor[HTML]{EDCCD4}} \color[HTML]{000000} 0.00
& {\cellcolor[HTML]{EDCCD4}} \color[HTML]{000000} 0.00
& {\cellcolor[HTML]{EDCCD4}} \color[HTML]{000000} 0.00 \\

& 4
& {\cellcolor[HTML]{CCE1D7}} \color[HTML]{000000} 1.00
& {\cellcolor[HTML]{CCE1D7}} \color[HTML]{000000} 1.00
& {\cellcolor[HTML]{CCE1D7}} \color[HTML]{000000} 1.00
& {\cellcolor[HTML]{CCE1D7}} \color[HTML]{000000} 1.00
& {\cellcolor[HTML]{CCE1D7}} \color[HTML]{000000} 1.00
& {\cellcolor[HTML]{FFFDF0}} \color[HTML]{000000} 0.47
& {\cellcolor[HTML]{EECDD4}} \color[HTML]{000000} 0.01
& {\cellcolor[HTML]{EECDD4}} \color[HTML]{000000} 0.01
& {\cellcolor[HTML]{EDCCD4}} \color[HTML]{000000} 0.00
& {\cellcolor[HTML]{EDCCD4}} \color[HTML]{000000} 0.01
& {\cellcolor[HTML]{EDCCD4}} \color[HTML]{000000} 0.00
& {\cellcolor[HTML]{EDCCD4}} \color[HTML]{000000} 0.00 \\

\cline{1-14}
\multirow[c]{3}{*}{\makecell{Transformer\\[2pt] \tiny (layers)}}
& 1
& {\cellcolor[HTML]{CCE1D7}} \color[HTML]{000000} 1.00
& {\cellcolor[HTML]{CCE1D7}} \color[HTML]{000000} 1.00
& {\cellcolor[HTML]{CCE1D7}} \color[HTML]{000000} 1.00
& {\cellcolor[HTML]{FFFDEF}} \color[HTML]{000000} 0.47
& {\cellcolor[HTML]{CDE2D8}} \color[HTML]{000000} 0.98
& {\cellcolor[HTML]{FCE1D9}} \color[HTML]{000000} 0.19
& {\cellcolor[HTML]{F0CFD4}} \color[HTML]{000000} 0.03
& {\cellcolor[HTML]{EDCCD4}} \color[HTML]{000000} 0.01
& {\cellcolor[HTML]{EDCCD4}} \color[HTML]{000000} 0.01
& {\cellcolor[HTML]{EDCCD4}} \color[HTML]{000000} 0.00
& {\cellcolor[HTML]{EDCCD4}} \color[HTML]{000000} 0.00
& {\cellcolor[HTML]{EDCCD4}} \color[HTML]{000000} 0.00 \\

& 2
& {\cellcolor[HTML]{CCE1D7}} \color[HTML]{000000} 1.00
& {\cellcolor[HTML]{CCE1D7}} \color[HTML]{000000} 1.00
& {\cellcolor[HTML]{CCE1D7}} \color[HTML]{000000} 1.00
& {\cellcolor[HTML]{CCE2D7}} \color[HTML]{000000} 0.99
& {\cellcolor[HTML]{D3EBDC}} \color[HTML]{000000} 0.89
& {\cellcolor[HTML]{EDCCD4}} \color[HTML]{000000} 0.00
& {\cellcolor[HTML]{EECDD4}} \color[HTML]{000000} 0.01
& {\cellcolor[HTML]{EFCED4}} \color[HTML]{000000} 0.02
& {\cellcolor[HTML]{EDCCD4}} \color[HTML]{000000} 0.00
& {\cellcolor[HTML]{EDCCD4}} \color[HTML]{000000} 0.00
& {\cellcolor[HTML]{EDCCD4}} \color[HTML]{000000} 0.00
& {\cellcolor[HTML]{EDCCD4}} \color[HTML]{000000} 0.00 \\

& 4
& {\cellcolor[HTML]{CCE1D7}} \color[HTML]{000000} 1.00
& {\cellcolor[HTML]{CCE1D7}} \color[HTML]{000000} 1.00
& {\cellcolor[HTML]{CCE1D7}} \color[HTML]{000000} 1.00
& {\cellcolor[HTML]{CCE2D7}} \color[HTML]{000000} 0.99
& {\cellcolor[HTML]{D0E9DB}} \color[HTML]{000000} 0.92
& {\cellcolor[HTML]{EFCED4}} \color[HTML]{000000} 0.02
& {\cellcolor[HTML]{F1CFD4}} \color[HTML]{000000} 0.04
& {\cellcolor[HTML]{EDCCD4}} \color[HTML]{000000} 0.00
& {\cellcolor[HTML]{EDCCD4}} \color[HTML]{000000} 0.00
& {\cellcolor[HTML]{EDCCD4}} \color[HTML]{000000} 0.00
& {\cellcolor[HTML]{EECDD4}} \color[HTML]{000000} 0.01
& {\cellcolor[HTML]{EDCCD4}} \color[HTML]{000000} 0.00 \\

\specialrule{0.6pt}{0pt}{1pt}

&
& \multicolumn{12}{c}{State Machine}  \vspace{1pt}\\

Bilinear
&
& {\cellcolor[HTML]{CCE1D7}} \color[HTML]{000000} 1.00
& {\cellcolor[HTML]{CCE1D7}} \color[HTML]{000000} 1.00
& {\cellcolor[HTML]{CCE1D7}} \color[HTML]{000000} 1.00
& {\cellcolor[HTML]{CCE1D7}} \color[HTML]{000000} 1.00
& {\cellcolor[HTML]{CCE1D7}} \color[HTML]{000000} 1.00
& {\cellcolor[HTML]{CCE1D7}} \color[HTML]{000000} 1.00
& {\cellcolor[HTML]{CCE1D7}} \color[HTML]{000000} 1.00
& {\cellcolor[HTML]{CCE1D7}} \color[HTML]{000000} 1.00
& {\cellcolor[HTML]{CCE1D7}} \color[HTML]{000000} 1.00
& {\cellcolor[HTML]{CCE1D7}} \color[HTML]{000000} 1.00
& {\cellcolor[HTML]{CCE1D7}} \color[HTML]{000000} 1.00
& {\cellcolor[HTML]{CCE1D7}} \color[HTML]{000000} 1.00 \\
\cline{1-14}

\multicolumn{2}{c}{Factored Bilinear}
& {\cellcolor[HTML]{CCE1D7}} \color[HTML]{000000} 1.00
& {\cellcolor[HTML]{CCE1D7}} \color[HTML]{000000} 1.00
& {\cellcolor[HTML]{CCE1D7}} \color[HTML]{000000} 1.00
& {\cellcolor[HTML]{CCE1D7}} \color[HTML]{000000} 1.00
& {\cellcolor[HTML]{CCE1D7}} \color[HTML]{000000} 1.00
& {\cellcolor[HTML]{FCE0D9}} \color[HTML]{000000} 0.19
& {\cellcolor[HTML]{CCE1D7}} \color[HTML]{000000} 1.00
& {\cellcolor[HTML]{CCE1D7}} \color[HTML]{000000} 1.00
& {\cellcolor[HTML]{CCE1D7}} \color[HTML]{000000} 1.00
& {\cellcolor[HTML]{CCE1D7}} \color[HTML]{000000} 1.00
& {\cellcolor[HTML]{CCE1D7}} \color[HTML]{000000} 1.00
& {\cellcolor[HTML]{EECDD4}} \color[HTML]{000000} 0.01 \\
\cline{1-14}

\multirow[c]{4}{*}{\makecell{Block Diag. \\[2pt] \tiny (block size)}}
& 1
& {\cellcolor[HTML]{CCE1D7}} \color[HTML]{000000} 1.00
& {\cellcolor[HTML]{FEEDDE}} \color[HTML]{000000} 0.28
& {\cellcolor[HTML]{FDE2DA}} \color[HTML]{000000} 0.20
& {\cellcolor[HTML]{F9DAD6}} \color[HTML]{000000} 0.14
& {\cellcolor[HTML]{F6D5D4}} \color[HTML]{000000} 0.09
& {\cellcolor[HTML]{F4D2D4}} \color[HTML]{000000} 0.07
& {\cellcolor[HTML]{CCE1D7}} \color[HTML]{000000} 1.00
& {\cellcolor[HTML]{EFCED4}} \color[HTML]{000000} 0.03
& {\cellcolor[HTML]{EDCCD4}} \color[HTML]{000000} 0.00
& {\cellcolor[HTML]{EDCCD4}} \color[HTML]{000000} 0.00
& {\cellcolor[HTML]{EDCCD4}} \color[HTML]{000000} 0.00
& {\cellcolor[HTML]{EDCCD4}} \color[HTML]{000000} 0.00 \\

& 2
& {\cellcolor[HTML]{CCE1D7}} \color[HTML]{000000} 1.00
& {\cellcolor[HTML]{CCE1D7}} \color[HTML]{000000} 1.00
& {\cellcolor[HTML]{DAEFDE}} \color[HTML]{000000} 0.84
& {\cellcolor[HTML]{FFFEF1}} \color[HTML]{000000} 0.49
& {\cellcolor[HTML]{FEE8DC}} \color[HTML]{000000} 0.25
& {\cellcolor[HTML]{FADBD6}} \color[HTML]{000000} 0.15
& {\cellcolor[HTML]{CCE1D7}} \color[HTML]{000000} 1.00
& {\cellcolor[HTML]{FFF3E3}} \color[HTML]{000000} 0.34
& {\cellcolor[HTML]{FBDDD7}} \color[HTML]{000000} 0.16
& {\cellcolor[HTML]{F2D1D4}} \color[HTML]{000000} 0.06
& {\cellcolor[HTML]{F2D1D4}} \color[HTML]{000000} 0.06
& {\cellcolor[HTML]{EFCED4}} \color[HTML]{000000} 0.02 \\

& 8
& {\cellcolor[HTML]{CCE1D7}} \color[HTML]{000000} 1.00
& {\cellcolor[HTML]{CCE1D7}} \color[HTML]{000000} 1.00
& {\cellcolor[HTML]{CCE1D7}} \color[HTML]{000000} 1.00
& {\cellcolor[HTML]{CCE1D7}} \color[HTML]{000000} 1.00
& {\cellcolor[HTML]{FFFEF0}} \color[HTML]{000000} 0.48
& {\cellcolor[HTML]{FDE3DA}} \color[HTML]{000000} 0.21
& {\cellcolor[HTML]{CCE1D7}} \color[HTML]{000000} 1.00
& {\cellcolor[HTML]{CCE1D7}} \color[HTML]{000000} 1.00
& {\cellcolor[HTML]{CCE1D7}} \color[HTML]{000000} 1.00
& {\cellcolor[HTML]{FFFAE9}} \color[HTML]{000000} 0.41
& {\cellcolor[HTML]{F8D9D5}} \color[HTML]{000000} 0.13
& {\cellcolor[HTML]{F1CFD4}} \color[HTML]{000000} 0.04 \\

& 64
& {\cellcolor[HTML]{CCE1D7}} \color[HTML]{000000} 1.00
& {\cellcolor[HTML]{CCE1D7}} \color[HTML]{000000} 1.00
& {\cellcolor[HTML]{CCE1D7}} \color[HTML]{000000} 1.00
& {\cellcolor[HTML]{CCE1D7}} \color[HTML]{000000} 1.00
& {\cellcolor[HTML]{CCE1D7}} \color[HTML]{000000} 1.00
& {\cellcolor[HTML]{CCE1D7}} \color[HTML]{000000} 1.00
& {\cellcolor[HTML]{CCE1D7}} \color[HTML]{000000} 1.00
& {\cellcolor[HTML]{CCE1D7}} \color[HTML]{000000} 1.00
& {\cellcolor[HTML]{CCE1D7}} \color[HTML]{000000} 1.00
& {\cellcolor[HTML]{CCE1D7}} \color[HTML]{000000} 1.00
& {\cellcolor[HTML]{CCE1D7}} \color[HTML]{000000} 1.00
& {\cellcolor[HTML]{CCE1D7}} \color[HTML]{000000} 1.00 \\

\cline{1-14}
$\ma{R}_2$ Block Diag.
&
& {\cellcolor[HTML]{CCE1D7}} \color[HTML]{000000} 1.00
& {\cellcolor[HTML]{FEEEDF}} \color[HTML]{000000} 0.29
& {\cellcolor[HTML]{FCE1D9}} \color[HTML]{000000} 0.19
& {\cellcolor[HTML]{F7D6D4}} \color[HTML]{000000} 0.11
& {\cellcolor[HTML]{F4D3D4}} \color[HTML]{000000} 0.07
& {\cellcolor[HTML]{EFCED4}} \color[HTML]{000000} 0.02
& {\cellcolor[HTML]{CCE1D7}} \color[HTML]{000000} 1.00
& {\cellcolor[HTML]{EDCCD4}} \color[HTML]{000000} 0.00
& {\cellcolor[HTML]{EDCCD4}} \color[HTML]{000000} 0.00
& {\cellcolor[HTML]{EDCCD4}} \color[HTML]{000000} 0.00
& {\cellcolor[HTML]{EDCCD4}} \color[HTML]{000000} 0.00
& {\cellcolor[HTML]{EDCCD4}} \color[HTML]{000000} 0.01 \\

\cline{1-14}
LSTM
&
& {\cellcolor[HTML]{CCE1D7}} \color[HTML]{000000} 1.00
& {\cellcolor[HTML]{CCE1D7}} \color[HTML]{000000} 1.00
& {\cellcolor[HTML]{CCE1D7}} \color[HTML]{000000} 1.00
& {\cellcolor[HTML]{CCE1D7}} \color[HTML]{000000} 1.00
& {\cellcolor[HTML]{CCE1D7}} \color[HTML]{000000} 1.00
& {\cellcolor[HTML]{FFEFDF}} \color[HTML]{000000} 0.30
& {\cellcolor[HTML]{CCE1D7}} \color[HTML]{000000} 1.00
& {\cellcolor[HTML]{CCE1D7}} \color[HTML]{000000} 1.00
& {\cellcolor[HTML]{CCE1D7}} \color[HTML]{000000} 1.00
& {\cellcolor[HTML]{CCE1D7}} \color[HTML]{000000} 1.00
& {\cellcolor[HTML]{F3FAE5}} \color[HTML]{000000} 0.64
& {\cellcolor[HTML]{F6D4D4}} \color[HTML]{000000} 0.09 \\

\cline{1-14}
RNN
&
& {\cellcolor[HTML]{CCE1D7}} \color[HTML]{000000} 1.00
& {\cellcolor[HTML]{CCE1D7}} \color[HTML]{000000} 1.00
& {\cellcolor[HTML]{CCE1D7}} \color[HTML]{000000} 1.00
& {\cellcolor[HTML]{CCE1D7}} \color[HTML]{000000} 1.00
& {\cellcolor[HTML]{FFF9E9}} \color[HTML]{000000} 0.41
& {\cellcolor[HTML]{FBDFD8}} \color[HTML]{000000} 0.18
& {\cellcolor[HTML]{CCE1D7}} \color[HTML]{000000} 1.00
& {\cellcolor[HTML]{CCE1D7}} \color[HTML]{000000} 1.00
& {\cellcolor[HTML]{CCE1D7}} \color[HTML]{000000} 1.00
& {\cellcolor[HTML]{CCE1D7}} \color[HTML]{000000} 0.99
& {\cellcolor[HTML]{FCE0D9}} \color[HTML]{000000} 0.19
& {\cellcolor[HTML]{F4D3D4}} \color[HTML]{000000} 0.07 \\

\cline{1-14}
\multirow[c]{3}{*}{\makecell{Mamba\\[2pt] \tiny (layers)}}
& 1
& {\cellcolor[HTML]{CCE1D7}} \color[HTML]{000000} 1.00
& {\cellcolor[HTML]{CCE1D7}} \color[HTML]{000000} 1.00
& {\cellcolor[HTML]{CEE4D9}} \color[HTML]{000000} 0.96
& {\cellcolor[HTML]{FBFDED}} \color[HTML]{000000} 0.55
& {\cellcolor[HTML]{FFF3E3}} \color[HTML]{000000} 0.34
& {\cellcolor[HTML]{FCE0D9}} \color[HTML]{000000} 0.19
& {\cellcolor[HTML]{EDCCD4}} \color[HTML]{000000} 0.00
& {\cellcolor[HTML]{CDE2D8}} \color[HTML]{000000} 0.99
& {\cellcolor[HTML]{D5ECDD}} \color[HTML]{000000} 0.87
& {\cellcolor[HTML]{FFF0E1}} \color[HTML]{000000} 0.31
& {\cellcolor[HTML]{FBDDD7}} \color[HTML]{000000} 0.16
& {\cellcolor[HTML]{F4D3D4}} \color[HTML]{000000} 0.07 \\

& 2
& {\cellcolor[HTML]{CCE1D7}} \color[HTML]{000000} 1.00
& {\cellcolor[HTML]{CCE1D7}} \color[HTML]{000000} 1.00
& {\cellcolor[HTML]{CCE1D7}} \color[HTML]{000000} 1.00
& {\cellcolor[HTML]{E1F2E0}} \color[HTML]{000000} 0.79
& {\cellcolor[HTML]{FFFBEB}} \color[HTML]{000000} 0.44
& {\cellcolor[HTML]{FFEFE0}} \color[HTML]{000000} 0.30
& {\cellcolor[HTML]{EDCCD4}} \color[HTML]{000000} 0.00
& {\cellcolor[HTML]{CCE1D7}} \color[HTML]{000000} 1.00
& {\cellcolor[HTML]{CEE5D9}} \color[HTML]{000000} 0.96
& {\cellcolor[HTML]{FFFAEA}} \color[HTML]{000000} 0.42
& {\cellcolor[HTML]{FCDFD8}} \color[HTML]{000000} 0.18
& {\cellcolor[HTML]{F6D4D4}} \color[HTML]{000000} 0.09 \\

& 4
& {\cellcolor[HTML]{CCE1D7}} \color[HTML]{000000} 1.00
& {\cellcolor[HTML]{CCE1D7}} \color[HTML]{000000} 1.00
& {\cellcolor[HTML]{CCE1D7}} \color[HTML]{000000} 1.00
& {\cellcolor[HTML]{CDE2D8}} \color[HTML]{000000} 0.99
& {\cellcolor[HTML]{F5FBE6}} \color[HTML]{000000} 0.62
& {\cellcolor[HTML]{FFF9E9}} \color[HTML]{000000} 0.41
& {\cellcolor[HTML]{F0CFD4}} \color[HTML]{000000} 0.03
& {\cellcolor[HTML]{CCE1D7}} \color[HTML]{000000} 0.99
& {\cellcolor[HTML]{CDE3D8}} \color[HTML]{000000} 0.97
& {\cellcolor[HTML]{FFFDEF}} \color[HTML]{000000} 0.47
& {\cellcolor[HTML]{FEE7DC}} \color[HTML]{000000} 0.24
& {\cellcolor[HTML]{F7D5D4}} \color[HTML]{000000} 0.10 \\

\cline{1-14}
\multirow[c]{3}{*}{\makecell{Transformer\\[2pt] \tiny (layers)}}
&
1
& {\cellcolor[HTML]{CCE1D7}} \color[HTML]{000000} 1.00
& {\cellcolor[HTML]{CFE7DA}} \color[HTML]{000000} 0.94
& {\cellcolor[HTML]{DCF0DF}} \color[HTML]{000000} 0.83
& {\cellcolor[HTML]{FFFDEE}} \color[HTML]{000000} 0.46
& {\cellcolor[HTML]{FEEADD}} \color[HTML]{000000} 0.27
& {\cellcolor[HTML]{FCDFD8}} \color[HTML]{000000} 0.18
& {\cellcolor[HTML]{F0CFD4}} \color[HTML]{000000} 0.03
& {\cellcolor[HTML]{EECDD4}} \color[HTML]{000000} 0.01
& {\cellcolor[HTML]{EFCED4}} \color[HTML]{000000} 0.02
& {\cellcolor[HTML]{EECDD4}} \color[HTML]{000000} 0.01
& {\cellcolor[HTML]{EDCCD4}} \color[HTML]{000000} 0.00
& {\cellcolor[HTML]{EDCCD4}} \color[HTML]{000000} 0.00 \\

& 2
& {\cellcolor[HTML]{CCE1D7}} \color[HTML]{000000} 1.00
& {\cellcolor[HTML]{CCE1D7}} \color[HTML]{000000} 1.00
& {\cellcolor[HTML]{CEE4D9}} \color[HTML]{000000} 0.97
& {\cellcolor[HTML]{F6FBE7}} \color[HTML]{000000} 0.61
& {\cellcolor[HTML]{FFF8E7}} \color[HTML]{000000} 0.39
& {\cellcolor[HTML]{FBDED8}} \color[HTML]{000000} 0.17
& {\cellcolor[HTML]{EECDD4}} \color[HTML]{000000} 0.01
& {\cellcolor[HTML]{EDCCD4}} \color[HTML]{000000} 0.01
& {\cellcolor[HTML]{EDCCD4}} \color[HTML]{000000} 0.01
& {\cellcolor[HTML]{EECDD4}} \color[HTML]{000000} 0.01
& {\cellcolor[HTML]{EDCCD4}} \color[HTML]{000000} 0.00
& {\cellcolor[HTML]{EDCCD4}} \color[HTML]{000000} 0.00 \\

& 4
& {\cellcolor[HTML]{CCE1D7}} \color[HTML]{000000} 1.00
& {\cellcolor[HTML]{CCE1D7}} \color[HTML]{000000} 1.00
& {\cellcolor[HTML]{CCE1D7}} \color[HTML]{000000} 1.00
& {\cellcolor[HTML]{DAEFDE}} \color[HTML]{000000} 0.84
& {\cellcolor[HTML]{FFFEF1}} \color[HTML]{000000} 0.49
& {\cellcolor[HTML]{FBDED8}} \color[HTML]{000000} 0.17
& {\cellcolor[HTML]{EDCCD4}} \color[HTML]{000000} 0.00
& {\cellcolor[HTML]{EFCED4}} \color[HTML]{000000} 0.02
& {\cellcolor[HTML]{EECDD4}} \color[HTML]{000000} 0.01
& {\cellcolor[HTML]{EDCCD4}} \color[HTML]{000000} 0.00
& {\cellcolor[HTML]{EDCCD4}} \color[HTML]{000000} 0.00
& {\cellcolor[HTML]{EDCCD4}} \color[HTML]{000000} 0.00 \\

\specialrule{0.6pt}{0pt}{1pt}

&
& \multicolumn{12}{c}{Modular Arithmetic}  \vspace{1pt}\\

Bilinear
&
& {\cellcolor[HTML]{CCE1D7}} \color[HTML]{000000} 1.00
& {\cellcolor[HTML]{CCE1D7}} \color[HTML]{000000} 1.00
& {\cellcolor[HTML]{CCE1D7}} \color[HTML]{000000} 1.00
& {\cellcolor[HTML]{CCE1D7}} \color[HTML]{000000} 1.00
& {\cellcolor[HTML]{CCE1D7}} \color[HTML]{000000} 1.00
& {\cellcolor[HTML]{CCE1D7}} \color[HTML]{000000} 1.00
& {\cellcolor[HTML]{CCE1D7}} \color[HTML]{000000} 1.00
& {\cellcolor[HTML]{CCE1D7}} \color[HTML]{000000} 1.00
& {\cellcolor[HTML]{CCE1D7}} \color[HTML]{000000} 1.00
& {\cellcolor[HTML]{CCE1D7}} \color[HTML]{000000} 1.00
& {\cellcolor[HTML]{CCE1D7}} \color[HTML]{000000} 1.00
& {\cellcolor[HTML]{CCE1D7}} \color[HTML]{000000} 0.99 \\
\cline{1-14}

\multicolumn{2}{c}{Factored Bilinear}
& {\cellcolor[HTML]{CCE1D7}} \color[HTML]{000000} 1.00
& {\cellcolor[HTML]{FFF3E3}} \color[HTML]{000000} 0.34
& {\cellcolor[HTML]{D1EADC}} \color[HTML]{000000} 0.90
& {\cellcolor[HTML]{F6D5D4}} \color[HTML]{000000} 0.09
& {\cellcolor[HTML]{F0CFD4}} \color[HTML]{000000} 0.03
& {\cellcolor[HTML]{F0CFD4}} \color[HTML]{000000} 0.03
& {\cellcolor[HTML]{CCE1D7}} \color[HTML]{000000} 1.00
& {\cellcolor[HTML]{FEE7DC}} \color[HTML]{000000} 0.24
& {\cellcolor[HTML]{FFF6E5}} \color[HTML]{000000} 0.37
& {\cellcolor[HTML]{F3D2D4}} \color[HTML]{000000} 0.06
& {\cellcolor[HTML]{F1CFD4}} \color[HTML]{000000} 0.04
& {\cellcolor[HTML]{F0CFD4}} \color[HTML]{000000} 0.03 \\
\cline{1-14}

\multirow[c]{4}{*}{\makecell{Block Diag.\\[2pt] \tiny (block size)}}
& 1
& {\cellcolor[HTML]{F7FCE8}} \color[HTML]{000000} 0.60
& {\cellcolor[HTML]{FFEFE0}} \color[HTML]{000000} 0.30
& {\cellcolor[HTML]{FEE7DC}} \color[HTML]{000000} 0.24
& {\cellcolor[HTML]{FEEBDE}} \color[HTML]{000000} 0.27
& {\cellcolor[HTML]{FADDD7}} \color[HTML]{000000} 0.16
& {\cellcolor[HTML]{F9DAD6}} \color[HTML]{000000} 0.14
& {\cellcolor[HTML]{FCE1D9}} \color[HTML]{000000} 0.19
& {\cellcolor[HTML]{FADBD6}} \color[HTML]{000000} 0.15
& {\cellcolor[HTML]{F5D4D4}} \color[HTML]{000000} 0.09
& {\cellcolor[HTML]{F7D6D4}} \color[HTML]{000000} 0.11
& {\cellcolor[HTML]{F1CFD4}} \color[HTML]{000000} 0.04
& {\cellcolor[HTML]{F0CFD4}} \color[HTML]{000000} 0.03 \\

& 2
& {\cellcolor[HTML]{CCE1D7}} \color[HTML]{000000} 0.99
& {\cellcolor[HTML]{E4F4E0}} \color[HTML]{000000} 0.77
& {\cellcolor[HTML]{FCFEEF}} \color[HTML]{000000} 0.53
& {\cellcolor[HTML]{FDFEF0}} \color[HTML]{000000} 0.52
& {\cellcolor[HTML]{FEEBDE}} \color[HTML]{000000} 0.27
& {\cellcolor[HTML]{FDE3DA}} \color[HTML]{000000} 0.21
& {\cellcolor[HTML]{FFF5E5}} \color[HTML]{000000} 0.37
& {\cellcolor[HTML]{EDCCD4}} \color[HTML]{000000} 0.00
& {\cellcolor[HTML]{F5D4D4}} \color[HTML]{000000} 0.08
& {\cellcolor[HTML]{F8D8D5}} \color[HTML]{000000} 0.12
& {\cellcolor[HTML]{F0CFD4}} \color[HTML]{000000} 0.03
& {\cellcolor[HTML]{F1D0D4}} \color[HTML]{000000} 0.05 \\

& 8
& {\cellcolor[HTML]{CCE1D7}} \color[HTML]{000000} 1.00
& {\cellcolor[HTML]{CCE1D7}} \color[HTML]{000000} 1.00
& {\cellcolor[HTML]{CCE1D7}} \color[HTML]{000000} 1.00
& {\cellcolor[HTML]{CCE1D7}} \color[HTML]{000000} 1.00
& {\cellcolor[HTML]{FCFEEE}} \color[HTML]{000000} 0.54
& {\cellcolor[HTML]{FFFDEF}} \color[HTML]{000000} 0.47
& {\cellcolor[HTML]{CCE1D7}} \color[HTML]{000000} 1.00
& {\cellcolor[HTML]{CCE1D7}} \color[HTML]{000000} 1.00
& {\cellcolor[HTML]{FFF9E9}} \color[HTML]{000000} 0.41
& {\cellcolor[HTML]{FEE7DC}} \color[HTML]{000000} 0.24
& {\cellcolor[HTML]{F2D1D4}} \color[HTML]{000000} 0.06
& {\cellcolor[HTML]{F4D3D4}} \color[HTML]{000000} 0.08 \\

& 64
& {\cellcolor[HTML]{CCE1D7}} \color[HTML]{000000} 1.00
& {\cellcolor[HTML]{CCE1D7}} \color[HTML]{000000} 1.00
& {\cellcolor[HTML]{CCE1D7}} \color[HTML]{000000} 1.00
& {\cellcolor[HTML]{CCE1D7}} \color[HTML]{000000} 1.00
& {\cellcolor[HTML]{CCE1D7}} \color[HTML]{000000} 1.00
& {\cellcolor[HTML]{F1F9E4}} \color[HTML]{000000} 0.66
& {\cellcolor[HTML]{CCE1D7}} \color[HTML]{000000} 1.00
& {\cellcolor[HTML]{CCE1D7}} \color[HTML]{000000} 1.00
& {\cellcolor[HTML]{CCE1D7}} \color[HTML]{000000} 1.00
& {\cellcolor[HTML]{CCE1D7}} \color[HTML]{000000} 1.00
& {\cellcolor[HTML]{FFF9E8}} \color[HTML]{000000} 0.40
& {\cellcolor[HTML]{FDE3DA}} \color[HTML]{000000} 0.21 \\

\cline{1-14}
$\ma{R}_2$ Block Diag.
&
& {\cellcolor[HTML]{F7FBE7}} \color[HTML]{000000} 0.61
& {\cellcolor[HTML]{FFF3E3}} \color[HTML]{000000} 0.34
& {\cellcolor[HTML]{FDE3DA}} \color[HTML]{000000} 0.21
& {\cellcolor[HTML]{FDE6DB}} \color[HTML]{000000} 0.23
& {\cellcolor[HTML]{EFCED4}} \color[HTML]{000000} 0.03
& {\cellcolor[HTML]{F1CFD4}} \color[HTML]{000000} 0.04
& {\cellcolor[HTML]{EFCED4}} \color[HTML]{000000} 0.02
& {\cellcolor[HTML]{EDCCD4}} \color[HTML]{000000} 0.00
& {\cellcolor[HTML]{F1CFD4}} \color[HTML]{000000} 0.04
& {\cellcolor[HTML]{F1CFD4}} \color[HTML]{000000} 0.04
& {\cellcolor[HTML]{EFCED4}} \color[HTML]{000000} 0.02
& {\cellcolor[HTML]{EFCED4}} \color[HTML]{000000} 0.03 \\

\cline{1-14}
LSTM
&
& {\cellcolor[HTML]{CCE1D7}} \color[HTML]{000000} 1.00
& {\cellcolor[HTML]{CCE1D7}} \color[HTML]{000000} 1.00
& {\cellcolor[HTML]{CCE1D7}} \color[HTML]{000000} 1.00
& {\cellcolor[HTML]{CCE1D7}} \color[HTML]{000000} 1.00
& {\cellcolor[HTML]{CCE1D7}} \color[HTML]{000000} 1.00
& {\cellcolor[HTML]{D1EADC}} \color[HTML]{000000} 0.90
& {\cellcolor[HTML]{CCE1D7}} \color[HTML]{000000} 1.00
& {\cellcolor[HTML]{CCE1D7}} \color[HTML]{000000} 1.00
& {\cellcolor[HTML]{CCE1D7}} \color[HTML]{000000} 1.00
& {\cellcolor[HTML]{CCE1D7}} \color[HTML]{000000} 1.00
& {\cellcolor[HTML]{CCE2D7}} \color[HTML]{000000} 0.99
& {\cellcolor[HTML]{F4FAE5}} \color[HTML]{000000} 0.64 \\

\cline{1-14}
RNN
&
& {\cellcolor[HTML]{CCE1D7}} \color[HTML]{000000} 1.00
& {\cellcolor[HTML]{CCE1D7}} \color[HTML]{000000} 1.00
& {\cellcolor[HTML]{CCE1D7}} \color[HTML]{000000} 1.00
& {\cellcolor[HTML]{CCE1D7}} \color[HTML]{000000} 1.00
& {\cellcolor[HTML]{DCF0DF}} \color[HTML]{000000} 0.82
& {\cellcolor[HTML]{FDE4DA}} \color[HTML]{000000} 0.22
& {\cellcolor[HTML]{CCE1D7}} \color[HTML]{000000} 1.00
& {\cellcolor[HTML]{CCE1D7}} \color[HTML]{000000} 1.00
& {\cellcolor[HTML]{CCE1D7}} \color[HTML]{000000} 1.00
& {\cellcolor[HTML]{CCE1D7}} \color[HTML]{000000} 1.00
& {\cellcolor[HTML]{FCFEEE}} \color[HTML]{000000} 0.55
& {\cellcolor[HTML]{FBDDD7}} \color[HTML]{000000} 0.16 \\

\cline{1-14}
\multirow[c]{3}{*}{\makecell{Mamba\\[2pt] \tiny (layers)}}
& 1
& {\cellcolor[HTML]{CDE2D8}} \color[HTML]{000000} 0.99
& {\cellcolor[HTML]{DCF0DF}} \color[HTML]{000000} 0.83
& {\cellcolor[HTML]{FBFDEC}} \color[HTML]{000000} 0.56
& {\cellcolor[HTML]{F9FCE9}} \color[HTML]{000000} 0.59
& {\cellcolor[HTML]{FDE6DC}} \color[HTML]{000000} 0.24
& {\cellcolor[HTML]{F9DAD6}} \color[HTML]{000000} 0.14
& {\cellcolor[HTML]{E7F5E1}} \color[HTML]{000000} 0.74
& {\cellcolor[HTML]{FBFDED}} \color[HTML]{000000} 0.55
& {\cellcolor[HTML]{FEEEDF}} \color[HTML]{000000} 0.29
& {\cellcolor[HTML]{FFF1E1}} \color[HTML]{000000} 0.32
& {\cellcolor[HTML]{F4D3D4}} \color[HTML]{000000} 0.08
& {\cellcolor[HTML]{F5D4D4}} \color[HTML]{000000} 0.08 \\

& 2
& {\cellcolor[HTML]{CCE1D7}} \color[HTML]{000000} 1.00
& {\cellcolor[HTML]{CCE2D7}} \color[HTML]{000000} 0.99
& {\cellcolor[HTML]{E1F2E0}} \color[HTML]{000000} 0.80
& {\cellcolor[HTML]{CFE7DA}} \color[HTML]{000000} 0.93
& {\cellcolor[HTML]{FFF3E3}} \color[HTML]{000000} 0.35
& {\cellcolor[HTML]{FFF1E2}} \color[HTML]{000000} 0.33
& {\cellcolor[HTML]{D9EEDE}} \color[HTML]{000000} 0.85
& {\cellcolor[HTML]{FFF6E6}} \color[HTML]{000000} 0.38
& {\cellcolor[HTML]{FFFAE9}} \color[HTML]{000000} 0.41
& {\cellcolor[HTML]{FEEEDF}} \color[HTML]{000000} 0.29
& {\cellcolor[HTML]{F8D7D4}} \color[HTML]{000000} 0.11
& {\cellcolor[HTML]{F4D2D4}} \color[HTML]{000000} 0.07 \\

& 4
& {\cellcolor[HTML]{CCE1D7}} \color[HTML]{000000} 1.00
& {\cellcolor[HTML]{CCE1D7}} \color[HTML]{000000} 1.00
& {\cellcolor[HTML]{CCE2D7}} \color[HTML]{000000} 0.99
& {\cellcolor[HTML]{CCE2D7}} \color[HTML]{000000} 0.99
& {\cellcolor[HTML]{FCFEEE}} \color[HTML]{000000} 0.55
& {\cellcolor[HTML]{FEEDDF}} \color[HTML]{000000} 0.29
& {\cellcolor[HTML]{D0E8DB}} \color[HTML]{000000} 0.92
& {\cellcolor[HTML]{E6F4E0}} \color[HTML]{000000} 0.75
& {\cellcolor[HTML]{FFFEF0}} \color[HTML]{000000} 0.48
& {\cellcolor[HTML]{FEFFF1}} \color[HTML]{000000} 0.51
& {\cellcolor[HTML]{FBDDD7}} \color[HTML]{000000} 0.17
& {\cellcolor[HTML]{F5D4D4}} \color[HTML]{000000} 0.09 \\

\cline{1-14}
\multirow[c]{3}{*}{\makecell{Transformer\\[2pt] \tiny (layers)}}
& 1
& {\cellcolor[HTML]{D4ECDD}} \color[HTML]{000000} 0.88
& {\cellcolor[HTML]{F4FAE6}} \color[HTML]{000000} 0.63
& {\cellcolor[HTML]{FFFCEE}} \color[HTML]{000000} 0.46
& {\cellcolor[HTML]{FFF1E1}} \color[HTML]{000000} 0.32
& {\cellcolor[HTML]{F7D5D4}} \color[HTML]{000000} 0.10
& {\cellcolor[HTML]{F5D4D4}} \color[HTML]{000000} 0.08
& {\cellcolor[HTML]{FCE1D9}} \color[HTML]{000000} 0.19
& {\cellcolor[HTML]{EFCED4}} \color[HTML]{000000} 0.02
& {\cellcolor[HTML]{F1D0D4}} \color[HTML]{000000} 0.04
& {\cellcolor[HTML]{EDCCD4}} \color[HTML]{000000} 0.01
& {\cellcolor[HTML]{F0CFD4}} \color[HTML]{000000} 0.03
& {\cellcolor[HTML]{EECDD4}} \color[HTML]{000000} 0.01 \\

& 2
& {\cellcolor[HTML]{CCE1D7}} \color[HTML]{000000} 1.00
& {\cellcolor[HTML]{CDE3D8}} \color[HTML]{000000} 0.97
& {\cellcolor[HTML]{DEF1DF}} \color[HTML]{000000} 0.81
& {\cellcolor[HTML]{FFF1E2}} \color[HTML]{000000} 0.32
& {\cellcolor[HTML]{F7D6D4}} \color[HTML]{000000} 0.11
& {\cellcolor[HTML]{F5D4D4}} \color[HTML]{000000} 0.08
& {\cellcolor[HTML]{FCE1D9}} \color[HTML]{000000} 0.19
& {\cellcolor[HTML]{FADBD6}} \color[HTML]{000000} 0.15
& {\cellcolor[HTML]{F1D0D4}} \color[HTML]{000000} 0.05
& {\cellcolor[HTML]{F1D0D4}} \color[HTML]{000000} 0.04
& {\cellcolor[HTML]{EFCED4}} \color[HTML]{000000} 0.02
& {\cellcolor[HTML]{EDCCD4}} \color[HTML]{000000} 0.01 \\

& 4
& {\cellcolor[HTML]{CCE1D7}} \color[HTML]{000000} 1.00
& {\cellcolor[HTML]{CCE1D7}} \color[HTML]{000000} 0.99
& {\cellcolor[HTML]{E6F4E0}} \color[HTML]{000000} 0.75
& {\cellcolor[HTML]{FFF1E1}} \color[HTML]{000000} 0.32
& {\cellcolor[HTML]{F4D3D4}} \color[HTML]{000000} 0.07
& {\cellcolor[HTML]{F5D4D4}} \color[HTML]{000000} 0.08
& {\cellcolor[HTML]{FCE0D9}} \color[HTML]{000000} 0.19
& {\cellcolor[HTML]{F8D8D5}} \color[HTML]{000000} 0.12
& {\cellcolor[HTML]{F0CFD4}} \color[HTML]{000000} 0.03
& {\cellcolor[HTML]{F0CFD4}} \color[HTML]{000000} 0.03
& {\cellcolor[HTML]{EFCED4}} \color[HTML]{000000} 0.02
& {\cellcolor[HTML]{F0CFD4}} \color[HTML]{000000} 0.03 \\

\specialrule{0.6pt}{0pt}{0pt}

\end{tabularx}
}
\end{table}

\subsection{Data Efficiency}\label{sec:dataeffexp}
We showed in the previous section that bilinear models are effective at learning state tracking tasks. However, since the number of parameters grows as the product of the input embedding and hidden dimension, their parameter counts can be extraordinarily large. While this may prove to be unproblematic in large-scale multi-task and language modeling tasks, data efficiency is a concern. 

To gain insights on the data efficiency of bilinear models, we train and evaluate the models on the tasks discussed in the previous section, using fixed training set sizes. We also compare to LSTM. All models were trained on an input sequence length of $10$, using the optimal learning rate found in the previous experiment. The results are shown in Figure~\ref{fig:dataeff}. They show that despite the large number of parameters, the models are not less data efficient than the LSTM. This is true even of the full bilinear model (denoted ``Bilinear'' in the figure). 

\begin{figure}[h]
    \centering
    \hspace*{-2cm}
    \includegraphics[width=.8\textwidth]{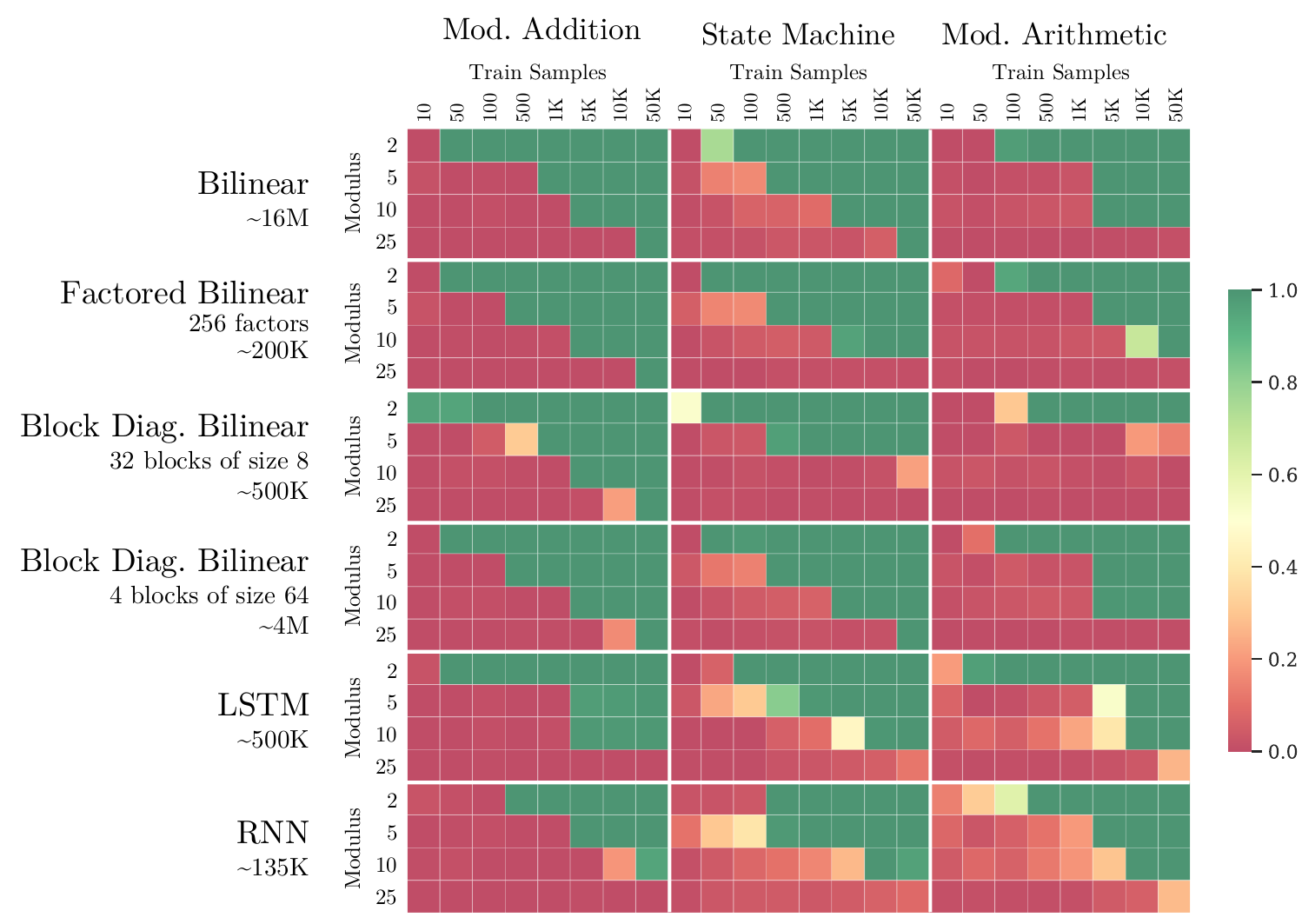}
    \caption{
    Data efficiency comparison between bilinear models and LSTM/RNN on state tracking tasks. All models were trained on sequences of length 10 and evaluated on length 500, with varying training set sizes. Despite their large parameter counts, bilinear models including the full variant, exhibit better data efficiency compared to LSTM.}
    \label{fig:dataeff}
\end{figure}

%
%
%
%
%
%
%

\subsection{Multiplicative versus additive interactions}
Our results on bilinear models are based solely on pure bilinear transformations,  
such that hidden units do not have any additive bias terms or any other additive 
dependencies on the inputs. 
Our results imply that such pure bilinear transformations are \emph{sufficient} 
for learning state tracking tasks. 
In the following experiment, we study the performance of models with and without 
additive terms, and we show empirically that for some models, the absence of additive terms 
is also necessary (performance degrades in the presence of additive terms). 

Figure~\ref{fig:addandparity} (left) shows the OOD performance of real diagonal, 2D block-diagonal, and full bilinear models on the modular addition tasks discussed previously. 
As observed, and in line with the proposed hierarchy, the diagonal model is only capable of learning the parity task (only when the additive terms are excluded), and the 2D block-diagonal model can learn modular addition (again only without the additive terms). In contrast, additive contributions do not affect performance for the bilinear model. For a more comprehensive set of experiments and additional discussion on the effect of additive terms, refer to Appendix~\ref{appsec:additiveterms}.

\begin{figure}[!b]
    \centering
    \caption{(Left) Effect of (input-dependent) additive terms in the hidden state update rule on the OOD accuracy of modular addition task.
    (Right) Length generalization performance on parity with a random multiplicative RNN with and without additive terms.
    }
    \vspace{12pt}
    \label{fig:addandparity}
    \begin{minipage}[t]{0.51\textwidth}
        \centering
        \begin{center}
            \scriptsize{Effect of Additive Terms in Recurrence}\vspace{1pt}
        \end{center}
        {
\scriptsize
\renewcommand{\arraystretch}{1.34}  
\setlength{\tabcolsep}{4.1pt}      
\setlength{\arrayrulewidth}{0.1pt}

\begin{tabularx}{\linewidth}{|M{1cm}M{1.5cm}|E|EEEE|}
\specialrule{0.3pt}{0pt}{0pt}


& Dataset 
& \text{Parity} 
& \multicolumn{4}{c|}{\text{Modular Addition}} \\

& Modulus 
& 2 
& 3 
& 5 
& 10 
& 25 
\\

Model 
& Additive Terms 
&  
&  
&  
& 
& \\

\specialrule{0.6pt}{0pt}{0pt}

\multirow[c]{2}{*}{Real Diag.} 
& Yes 
& {\cellcolor[HTML]{EDCCD4}} \color[HTML]{000000} 0.00 
& {\cellcolor[HTML]{EDCCD4}} \color[HTML]{000000} 0.00 
& {\cellcolor[HTML]{EECDD4}} \color[HTML]{000000} 0.01 
& {\cellcolor[HTML]{EDCCD4}} \color[HTML]{000000} 0.00 
& {\cellcolor[HTML]{EDCCD4}} \color[HTML]{000000} 0.00  \\

& No 
& {\cellcolor[HTML]{CCE1D7}} \color[HTML]{000000} 1.00 
& {\cellcolor[HTML]{EECDD4}} \color[HTML]{000000} 0.01 
& {\cellcolor[HTML]{EDCCD4}} \color[HTML]{000000} 0.00 
& {\cellcolor[HTML]{EDCCD4}} \color[HTML]{000000} 0.00 
& {\cellcolor[HTML]{EDCCD4}} \color[HTML]{000000} 0.00  \\

\specialrule{0.3pt}{0pt}{0pt}

\multirow[c]{2}{*}{2D Block Diag.} 
& Yes 
& {\cellcolor[HTML]{EDCCD4}} \color[HTML]{000000} 0.00 
& {\cellcolor[HTML]{FFEFE0}} \color[HTML]{000000} 0.30 
& {\cellcolor[HTML]{EDCCD4}} \color[HTML]{000000} 0.00 
& {\cellcolor[HTML]{EDCCD4}} \color[HTML]{000000} 0.00 
& {\cellcolor[HTML]{EDCCD4}} \color[HTML]{000000} 0.00  \\

& No 
& {\cellcolor[HTML]{CCE1D7}} \color[HTML]{000000} 1.00 
& {\cellcolor[HTML]{CCE1D7}} \color[HTML]{000000} 1.00 
& {\cellcolor[HTML]{CCE1D7}} \color[HTML]{000000} 1.00 
& {\cellcolor[HTML]{CCE1D7}} \color[HTML]{000000} 1.00 
& {\cellcolor[HTML]{CDE2D8}} \color[HTML]{000000} 0.98  \\

\specialrule{0.3pt}{0pt}{0pt}

\multirow[c]{2}{*}{Bilinear} 
& Yes 
& {\cellcolor[HTML]{CCE1D7}} \color[HTML]{000000} 1.00 
& {\cellcolor[HTML]{CCE1D7}} \color[HTML]{000000} 1.00 
& {\cellcolor[HTML]{CCE1D7}} \color[HTML]{000000} 1.00 
& {\cellcolor[HTML]{CCE1D7}} \color[HTML]{000000} 1.00 
& {\cellcolor[HTML]{CCE1D7}} \color[HTML]{000000} 1.00  \\

& No 
& {\cellcolor[HTML]{CCE1D7}} \color[HTML]{000000} 1.00 
& {\cellcolor[HTML]{CCE1D7}} \color[HTML]{000000} 1.00 
& {\cellcolor[HTML]{CCE1D7}} \color[HTML]{000000} 1.00 
& {\cellcolor[HTML]{CCE1D7}} \color[HTML]{000000} 1.00 
& {\cellcolor[HTML]{CCE1D7}} \color[HTML]{000000} 1.00  \\

\specialrule{0.3pt}{0pt}{0pt}

\end{tabularx}
}
    \end{minipage}
    \hfill
    \begin{minipage}[t]{0.48\textwidth}
        \centering
        \begin{center}
            \scriptsize{Performance of a Frozen Random Network on Parity}
        \end{center}
        {
\scriptsize
\renewcommand{\arraystretch}{1.72}  
\setlength{\tabcolsep}{3pt}      
\setlength{\arrayrulewidth}{0.1pt}

\begin{tabularx}{\linewidth}{|X|EEE|EEE|}
\specialrule{0.3pt}{0pt}{0pt}

\hfill Training Examples 
& \multicolumn{3}{c|}{2} 
& \multicolumn{3}{c|}{100} \\
\hfill Training Length
& 10 
& 20 
& 50 
& 10 
& 20 
& 50 \\
Additive Terms 
&  
&  
&  
&  
&  
&  \\
\specialrule{0.6pt}{0pt}{0pt}
Input Dependent  
& {\cellcolor[HTML]{F2D1D4}} \color[HTML]{000000} 0.05 
& {\cellcolor[HTML]{F4D3D4}} \color[HTML]{000000} 0.07 
& {\cellcolor[HTML]{F2D1D4}} \color[HTML]{000000} 0.05 
& {\cellcolor[HTML]{F0CFD4}} \color[HTML]{000000} 0.03 
& {\cellcolor[HTML]{F2D1D4}} \color[HTML]{000000} 0.05 
& {\cellcolor[HTML]{F1D0D4}} \color[HTML]{000000} 0.04 \\
\cline{1-7}

Input Dep. + Constant
& {\cellcolor[HTML]{F2D1D4}} \color[HTML]{000000} 0.06 
& {\cellcolor[HTML]{F2D1D4}} \color[HTML]{000000} 0.06 
& {\cellcolor[HTML]{EFCED4}} \color[HTML]{000000} 0.02 
& {\cellcolor[HTML]{F0CFD4}} \color[HTML]{000000} 0.03 
& {\cellcolor[HTML]{F1CFD4}} \color[HTML]{000000} 0.04 
& {\cellcolor[HTML]{F3D2D4}} \color[HTML]{000000} 0.06 \\
\cline{1-7}

Constant
& {\cellcolor[HTML]{F1D0D4}} \color[HTML]{000000} 0.04 
& {\cellcolor[HTML]{F3D2D4}} \color[HTML]{000000} 0.06 
& {\cellcolor[HTML]{F2D1D4}} \color[HTML]{000000} 0.05 
& {\cellcolor[HTML]{D5ECDD}} \color[HTML]{000000} 0.87 
& {\cellcolor[HTML]{E8F5E1}} \color[HTML]{000000} 0.74 
& {\cellcolor[HTML]{EECDD4}} \color[HTML]{000000} 0.01 \\
\cline{1-7}

None 
& {\cellcolor[HTML]{CCE1D7}} \color[HTML]{000000} 1.00 
& {\cellcolor[HTML]{CCE1D7}} \color[HTML]{000000} 1.00 
& {\cellcolor[HTML]{CCE1D7}} \color[HTML]{000000} 1.00 
& {\cellcolor[HTML]{CCE1D7}} \color[HTML]{000000} 1.00 
& {\cellcolor[HTML]{CCE1D7}} \color[HTML]{000000} 1.00 
& {\cellcolor[HTML]{CCE1D7}} \color[HTML]{000000} 1.00 \\

\specialrule{0.3pt}{0pt}{0pt}

\end{tabularx}

}

    \end{minipage}
\end{figure}

\subsection{Learning parity with a random network}
\label{sec:learningparity}
Figure~\ref{fig:addandparity} (right) shows the OOD performance (testing length $400$, best across 3 seeds and 2 learning rates) of this type of model, after training on sequences of lengths $10-50$. 
It shows, in line with the theoretical result, that the pure bilinear model can solve the task, even though recurrent parameters are frozen during training (only the readout layer is trained). 
It also shows the detrimental effect of additive terms for comparison.

\subsection{Representing commutative tasks by rotating phase angles}\label{sec:expangle}

Figure~\ref{fig:angles_small} illustrates the rotation angles learned by the $\ma{R}_2$ block-diagonal model for the modular addition task with $m=10$. The figure displays these angles for each input integer ($0, 1, \dots, 9$) across several 2-dimensional hidden state subspaces (12 out of 128). These subspaces are ordered based on the magnitude of the weights associated with these in the linear readout layer. The harmonic value, reported for each subspace, is calculated as ${\theta_1}/(2\pi/m)$, where $\theta_1$ is the learned rotation angle for the input integer \texttt{1} (rotation angles for other integers are multiples of $\theta_1$ in the representations learned by the model). This harmonic value indicates how closely $\theta_1$ aligns with an integer multiple of the fundamental angle $2\pi/m$ required for ideal cyclic group representation.

\begin{figure}[h]
    \centering
    \includegraphics[width=\textwidth]{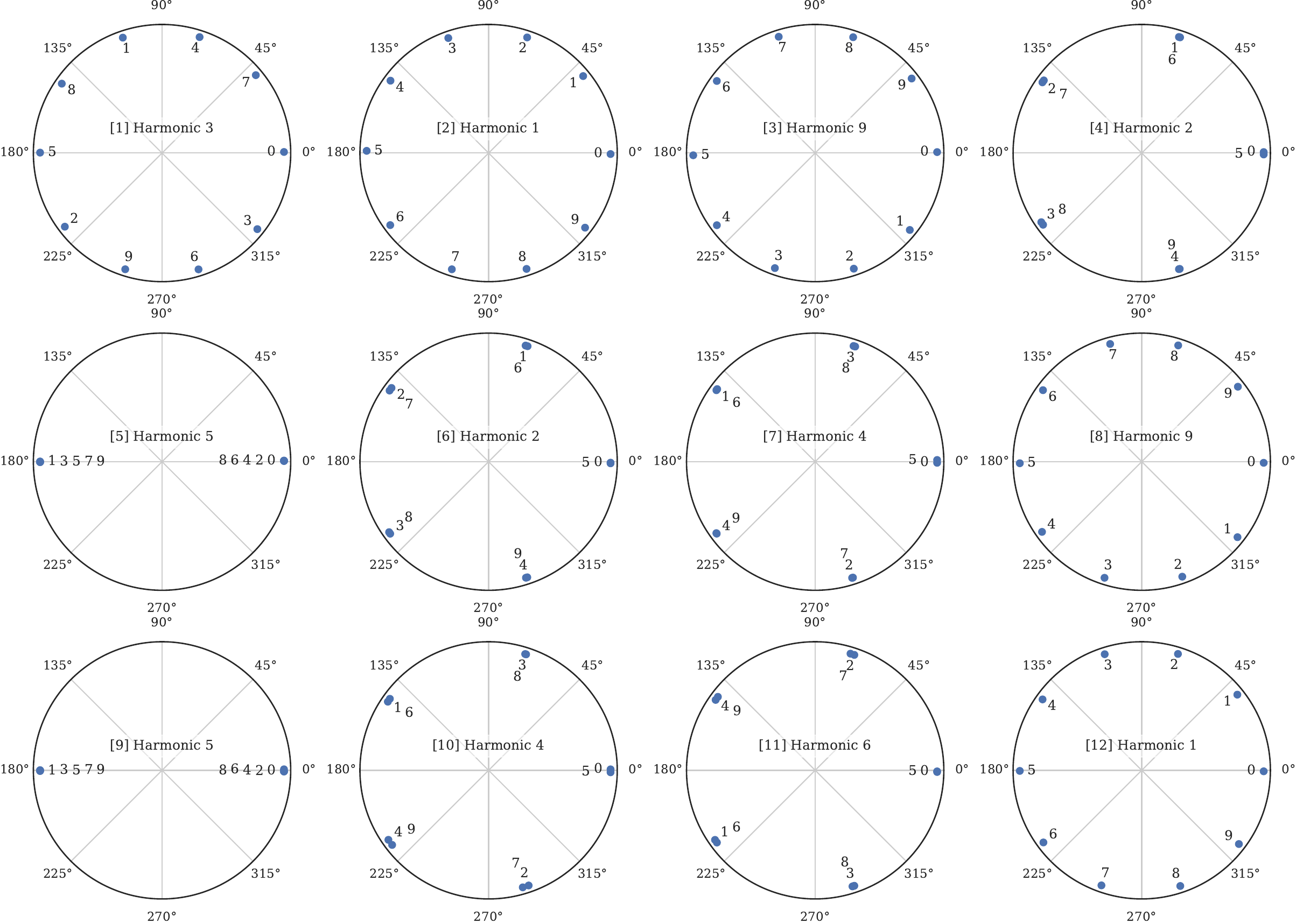}
    \caption{Visualization of the rotation angles learned by the $\ma{R}_2$ block-diagonal model for each input integer in the $m=10$ modular addition task. Each subplot corresponds to a distinct 2-dimensional hidden state subspace. These subspaces are ordered based on the magnitude of the classifier weights. The ``harmonic'' is $\frac{\theta_1}{2\pi/m}$, where $\theta_1$ is the learned rotation angle for the integer \texttt{1}.}
    \label{fig:angles_small}
\end{figure}
\FloatBarrier

\section{Discussion}
Our work shows that models in which hidden states depend bilinearly on previous hidden states 
and inputs 
can learn state tracking tasks. This is in contrast to many linear RNNs, such as 
Mamba \citep{gu2024mamba}, LRU \citep{lru}, and others. 
It can also be viewed as extending upon the studies 
by \cite{grazzi2025unlocking,sarrof2024the,AdvancingRegularLanguageReasoning} 
to improve state tracking behavior beyond that work. 
However, it is important to note that bilinear models come at the cost of a parameter 
count that grows roughly cubically in the number of hidden states. Whether there are ways to reduce 
the number of parameters while retaining strong performance across a wide range of state tracking 
tasks is an important question for future research. 
A closely related question is whether such a reduction may be counterproductive (or conversely the large
number of parameters even be beneficial) in massive multi-task scenarios such as 
language modeling, which is possible with linear models due to the possibility for efficient, parallel training.

\bibliography{phasorunits}
\bibliographystyle{abbrvnat}


\newpage
\appendix
\section{Proofs}\label{appsec:proofs}

\subsection{Proof of Proposition~\ref{prop:statemachine}}\label{appsec:proof1}
\statemachine*

\begin{proof}

Given any state machine $\mathcal{S} = (Q, \Sigma, \delta, q_0)$, and for all inputs $\sigma \in \Sigma$, we define the state transition matrix $\Delta_{\sigma} \in \{0, 1\}^{|Q|\times|Q|}$ with the $(i,j)$-th element given by:
\begin{equation} \label{eqn:delta}
    \left( \Delta_{\sigma} \right) _{ij} = 
    \begin{cases}
        1 & \text{if} \ \ \delta(q_j, \sigma) = q_i, \\
        0 & \text{otherwise}. \\
    \end{cases}
\end{equation}
This means that if the state machine is in state $q_j$, and and receives input $\sigma$, the one-hot representation of the next state is precisely the $j$-th column of $\Delta_{\sigma}$.
Consequently, if $q_t \in Q$ is the state at time $t$ and $\h[t] \in \{0, 1\}^{|Q|}$ is its one-hot encoded representation, the state dynamics can be expressed as:
\begin{equation}\label{eqn:deltastatetrans}
    q_t = \delta(q_{t-1}, \sigma_t) \quad \Leftrightarrow \quad \h[t] = \Delta_{\sigma_t} \h[t-1]
\end{equation}
To demonstrate that the bilinear state-transition form from Eq.~\eqref{eqn:bilinear} can represent any state machine, it suffices to show that the third-order tensor $\ma{W}$ can be constructed such that its resulting state transition matrix $\ma{A}_{\x}$ equals $\Delta_{\sigma}$ for every $\sigma \in \Sigma$, where $\x$ is the embedding corresponding to $\sigma$.

Without loss of generality, let the input alphabet be $\Sigma = \{1, 2, \dots, |\Sigma|\}$. We define the input embedding $\x \in \{0, 1\}^{|\Sigma|}$ as the one-hot vector for the current input symbol $\sigma \in \Sigma$ (thus, the input dimension $D$ is effectively $|\Sigma|$).
The third-order tensor $\ma{W}$ is then constructed such that for each $\sigma \in \Sigma$, the slice $\ma{W}_{\cdot\cdot\sigma}$ is set equal to the state machine's transition matrix $\Delta_{\sigma}$. 
From Eq.~\eqref{eqn:bilinear2}, we will have:
\begin{equation}
    {(\ma{A}_x)}_{ij} = \sum_k \ma{W}_{ijk} x_k = \ma{W}_{ij\sigma} = (\Delta_{\sigma})_{ij}.
\end{equation}
This is because $\x$ is the one-hot vector for input symbol $\sigma$ (meaning $x_k=1$ if $k=\sigma$, and $x_k=0$ otherwise). Therefore, this construction yields $\ma{A}_{\x} = \Delta_{\sigma}$. Consequently, the state-transition dynamics of the bilinear model, $\h[t] = \ma{A}_{\x_t} \h[t-1]$, become equivalent to those of the state machine, $\h[t] = \Delta_{\sigma_t} \h[t-1]$.

\end{proof}


\subsection{Proof of Proposition~\ref{prop:complexabelian}}\label{appsec:proof2}

\complexabelian*

\begin{proof}

We show by construction that a bilinear model with state transition matrix that is block-diagonal with $\ma{R}_2$ rotation blocks can represent modular addition, and hence any cyclic group. Based on the fundamental theorem of finite abelian groups, every finite abelian group can be expressed as the direct sum of cyclic groups (typically of prime-power order) \citep{kurzweil2004theory}. Therefore, any model capable of simulating modular addition can, in principle, simulate any finite abelian group.

First, let's clarify how an orthogonal transition matrix simplifies to a block-diagonal form composed of 2-dimensional rotation blocks. (Further details are available in Section~\ref{sec:complexdiag}). Based on the Real Jordan Normal Form, an orthogonal matrix $\ma{A}_{\x}$ is similar to a real block-diagonal matrix $\ma{D}_{\x}$ \citep{axler2024linear}. This means:
\begin{equation}
\ma{A}_{\x} = \ma{P}_{\x} \ma{D}_{\x} \ma{P}_{\x}^{-1},
\end{equation}
where $\ma{P}_{\x} \in \mathbb{R}^{H \times H}$ is the invertible transformation matrix, and $\ma{D}_{\x} \in \mathbb{R}^{H \times H}$ is a real block-diagonal matrix composed entirely of 2-dimensional rotation matrices (assuming $H$ is even), which we denote as
\begin{equation}
\ma{R}_{2}(\theta) = \begin{pmatrix} \cos\theta & -\sin\theta \\ \sin\theta & \cos\theta \end{pmatrix}.
\end{equation}
Now, if the transformation matrix $\ma{P}_{\x}$ is independent of the input $\x$ (i.e., $\ma{P}_{\x} = \ma{P}$), the fixed matrix $\ma{P}$ can be canceled out in the recurrence steps and absorbed into the input and output transformations of the recurrent layer. Therefore, for an orthogonal transition matrix with such input-independent transformations, we can effectively model $\ma{A}_{\x}$ as being block-diagonal, with its blocks being 2-dimensional rotation matrices $\ma{R}_2(\theta(\x))$, where the rotation angles $\theta(\x)$ are parameterized by the input $\x$.

Next, we present a simple construction with $H=2$ (i.e., $\ma{A}_{\x} = \ma{R}_2(\theta(\x))$) that simulates modular addition. Let $\x_\sigma$ be the embedding corresponding to an input integer $\sigma \in \ma{Z}_m=\{0, 1, \dots, m-1\}$. We define the rotation angle for input $\sigma$ as $\theta(\x_\sigma) = 2 \pi \sigma / m$. Consequently, after observing a sequence of inputs $\sigma^1, \sigma^2, \dots, \sigma^T$, the hidden state vector $\h \in \mathbb{R}^2$ (with initial state $\h[0]$) evolves as follows:
\begin{align}
    \h[T] = \ma{A}_{\x^{T}} \h[T-1] 
    &= \ma{A}_{\x^1} \ma{A}_{\x^2} \cdots \ma{A}_{\x^{T}} \h[0] \\
    &= \ma{R}_2 \left( \theta \left( \x^1 \right) \right) \ma{R}_2 \left( \theta \left( \x^2 \right) \right) \cdots \ma{R}_2 \left( \theta \left( \x^T \right) \right) \h[0] \\
    &= \ma{R}_2\left(\sum_{t=1}^{T} \theta \left( \x^t \right) \right) \h[0] \\
    &= \ma{R}_2\left(\sum_{t=1}^{T} \frac{2\pi \sigma^t}{m} \right) \h[0] \\
    \llap{\text{\footnotesize $\ma{R}_2\left( \phi \right) = \ma{R}_2 \left( \phi \bmod{2\pi} \right)$ \hspace{6em}}}
    &= \ma{R}_2\left( \left( \sum_{t=1}^{T} \frac{2\pi \sigma^t}{m} \right) \bmod{2\pi} \right) \h[0] \\
    \llap{\text{\footnotesize $(2 \pi \frac{\sigma}{m}) \bmod{2\pi} = (\sigma\bmod{m})(\frac{2\pi}{m})$ \hspace{2em}}}
    &= \ma{R}_2\left( \frac{2\pi}{m}  \left( \sum_{t=1}^{T} \sigma^t \bmod{m} \right) \right) \h[0] 
\end{align}
Let $y = (\sum_{t=1}^{T} \sigma^t) \bmod m$ be the target sum modulo $m$.  This means $\h[T]$ is $\h[0]$ rotated by $\frac{2\pi}{m} y$:
\begin{align}
    h^T = \ma{R}_2 \left( \frac{2\pi}{m} y \right) \h[0].
\end{align}
Finally, since this rotation is unique for every possible value of $y\in\ma{Z}_m$, a linear readout layer (i.e., an $m$-class linear classifier) can perfectly extract $y$ from $h^T$:
\begin{equation}
    y = \underset{k\in\ma{Z}}{\mathrm{argmax}} \ \ w_k^\top \h[T], 
\end{equation}
with
\begin{equation}
w_k = \ma{R}_2 \left( \frac{2\pi}{m} k \right) \h[0], \quad \forall k\in\ma{Z}_m.
\end{equation}
\end{proof}


\subsection{Proof of Proposition~\ref{prop:fixedparity}}\label{appsec:proof3}

\fixedparity*

\begin{proof}
For the parity task, the model observes an input sequence $\sigma^1, \sigma^2, \dots, \sigma^T$, with each $\sigma^t \in \{0, 1\}$. The objective is to output the parity of this sequence, which is $\left( \sum_{t=1}^{T} \sigma^t\right) \bmod{2}$.

Let $\emb{\ma{A}}[0]$ and $\emb{\ma{A}}[1]$ be the diagonal state-transition matrices corresponding to inputs $0$ and $1$, with $emb{a}[0]$ and $emb{a}[1]$ denoting the diagonal elements:
\begin{align}
    \emb{\ma{A}}[0] = \text{diag}(\emb{a}[0]), \quad \emb{\ma{A}}[1] = \text{diag}(\emb{a}[1])
\end{align}
The hidden state evolves according to $\h[t] = \emb{\ma{A}}[\sigma^t] \h[t-1]$. For the $i$-th component of the hidden state, this evolution is:
\begin{equation}
    \h[T]_i = \emb{a}[\sigma^t]_i \h[T-1]_i = \h[0]_i \prod_{t=1}^T \emb{a}[\sigma^t]_i,
\end{equation}
where $a_i^{(\sigma)}$ denotes the $i$-th diagonal element of $\ma{A}^{(\sigma)}$ (i.e., the $i$-th element of $\emb{a}[\sigma]$).

Crucially, the sign of the product $\prod_{t=1}^T \emb{a}[\sigma^t]_i$ can encode parity. If, for a given component $i$, we have $\emb{a}[0]_i > 0$ and $\emb{a}[1]_i < 0$, then $\text{sgn}\left(\prod_{t=1}^T \emb{a}[\sigma^t]_i\right)$ will be positive for an even number of $1$s (even parity) and negative for an odd number of $1$s (odd parity). This is because the number of negative terms ($\emb{a}[1]_i$) in the product matches the count of $1$s in the input sequence. Conversely, if $\emb{a}[0]_i < 0$ and $\emb{a}[1]_i > 0$, the sign of the product becomes $(-1)^{T - \text{count of } 1s}$, which also encodes parity, albeit in a manner dependent on the sequence length $T$.
Since $\text{sgn}\left(\prod_{t=1}^T \emb{a}[\sigma^t]_i\right) = \text{sgn}(\h[T]_i \h[0]_i)$, if $\h[0]_i$ is initialized with a fixed sign (e.g., positive), then in either case where $\emb{a}[0]_i$ and $\emb{a}[1]_i$ have opposite signs (i.e., $\emb{a}[0]_i\emb{a}[1]_i < 0$), the sign of $\h[T]_i$ contains sufficient information to determine the parity of the input sequence. The model then only needs to update its read-out layer (e.g., with one example of even and one of odd parity) to decode parity from $\h[T]_i$; all other recurrent parameters could remain fixed.

Assuming that the elements of $\emb{a}^{(0)}$ and $\emb{a}^{(1)}$ are i.i.d. and symmetrically distributed around 0 at initialization, we analyze the probability of finding such a suitable component $i$. The probability that an arbitrary component $i$ has $\emb{a}[0]_i$ and $\emb{a}[1]_i$ with opposite signs is $0.5$. Therefore, given the independence across the $H$ components, the probability that there exists at least one component $i$ for which $\emb{a}[0]_i\emb{a}[1]_i < 0$ is $1 - (1-0.5)^H = 1 - 2^{-H}$.

\end{proof}

\textbf{Remark:} For the model to learn parity for arbitrary sequence lengths using a simple sign-based readout from a single component $i$, the ideal condition is $\emb{a}[1]_i < 0$ and $\emb{a}[0]_i > 0$. Under the same i.i.d. symmetric initialization assumptions, this specific configuration for a component $i$ occurs with probability $1/4$. Therefore, the probability that at least one such ideally suited component $i$ exists is $1 - (1-1/4)^H = 1 - (3/4)^H$.

\newpage
\section{Implementation details}\label{appsec:impldetails}

\subsection{Tasks}\label{appsec:tasks}
For all tasks, to generate a training sample, we first randomly select the number of inputs $n \sim \mathcal{U}(2,N)$, where $N$ is the maximum training sequence length. We then select $n$ input symbols from $\{0, 1, \dots, m-1\}$ uniformly at random with replacement. For the modular arithmetic task specifically, we also sample $n-1$ operators uniformly with replacement from the set $\{+, -, \times\}$, which are then interleaved with the $n$ input symbols.

Each sample is structured with special tokens: it begins with a \texttt{[BOS]} (beginning of sequence) token and the input portion concludes with an \texttt{[EOI]} (end of input) token, immediately followed by the TARGET, as shown:
\begin{center}
    \texttt{[BOS] INPUT$_1$ INPUT$_2$ INPUT$_3$ $\cdots$ INPUT$_n$ [EOI] TARGET}
\end{center}

Each input symbols (including multi-digit integers and potentially operators), and special tokens \texttt{[BOS]} and \texttt{[BOI]} are tokenized as single tokens.
During training and inference, all model outputs are disregarded except for the output corresponding to the \texttt{[EOI]} token; this output is taken as the model's prediction for \texttt{TARGET}. Consequently, during training, the loss is calculated only on this final target prediction.

We evaluate the models on the following three tasks:

\textbf{Modular addition: }
The target is the sum of input integers modulo $m$. For example, with $n=5$ and $m=20$:
\begin{center}
    \texttt{[BOS] 8 0 12 18 5 [EOI] 3}
\end{center}

\textbf{Modular arithmetic: }
This task involves processing a sequence alternating between $n$ integers from $\{0, 1, \dots, m-1\}$ and $n-1$ arithmetic operators from $\{+, \times, -\}$. The target is the result of these operations applied sequentially from left to right, with all calculations performed modulo $m$. For example, with $n=5$ and $m=20$:
\begin{center}
    \texttt{[BOS] 3 * 9 - 17 + 6 + 12 [EOI] 8}
\end{center}

The target is calculated as:
\begin{align*}
(3 \times 9) \bmod{20} &= 7 \\
(7 - 17)  \bmod{20} &= 10 \\
(10 + 6)  \bmod{20} &= 16 \\
(16 + 12)  \bmod{20} &= 8
\end{align*}

\textbf{Simulating state machines: }
The objective is to simulate a randomly generated finite state machine (FSM). Both the input alphabet $\Sigma$ and the set of states $Q$ are identical to $\{0, 1, \dots, m-1\}$. For each state $q \in Q$, the transition function $\delta(q, \sigma)$ is defined as $\pi_q(\sigma)$, where $\pi_q$ is a random permutation of $\Sigma$. This transition function $\delta$ is generated once per FSM definition and remains fixed for all samples related to that FSM. The first input symbol in the sequence, $\texttt{INPUT}_1$, determines the initial state of the FSM. Subsequent symbols $\texttt{INPUT}_2, \dots, \texttt{INPUT}_n$ are processed as inputs to the FSM, and the target is the FSM's final state.
For example, consider $m=6$ and the following randomly generated transition function:

\begin{table}[h]
\centering
{\renewcommand{\arraystretch}{1.3}
\begin{tabular}{c c|cccccc}
  \multicolumn{2}{c}{} & \multicolumn{6}{c}{{Input Symbol ($\sigma$)}} \\
  & & {0} & {1} & {2} & {3} & {4} & {5} \\
  \cline{2-8}
  \multirow{6}{*}{\rotatebox[origin=c]{90}{{Current State ($q$)}}} &
  {0} & 3 & 0 & 4 & 5 & 1 & 2 \\
& {1} & 2 & 1 & 0 & 3 & 5 & 4 \\
& {2} & 5 & 0 & 2 & 1 & 3 & 4 \\
& {3} & 5 & 0 & 1 & 2 & 4 & 3 \\
& {4} & 1 & 0 & 3 & 4 & 2 & 5 \\
& {5} & 5 & 4 & 0 & 3 & 1 & 2 \\
\end{tabular}
}
\end{table}
\FloatBarrier

An example sequence for this FSM would be:
\begin{center}
    \texttt{[BOS] 4 1 2 5 5 [EOI] 2}
\end{center}
The initial state is the first input (\texttt{4} in this example), and upon observing each input, the state transitions occur based on the transition table:
\begin{align*}
\text{State: } 4, \text{ Input: } 1 &\xrightarrow{\delta(4,1)=0} \text{New State: } 0 \\
\text{State: } 0, \text{ Input: } 2 &\xrightarrow{\delta(0,2)=4} \text{New State: } 4 \\
\text{State: } 4, \text{ Input: } 5 &\xrightarrow{\delta(4,5)=5} \text{New State: } 5 \\
\text{State: } 5, \text{ Input: } 5 &\xrightarrow{\delta(5,5)=2} \text{New State: } 2 \quad (\text{Target})
\end{align*}

\subsection{Experiment Setup}\label{appsec:exps}

For the experiments discussed in Section~\ref{sec:mainexp} and reported in Table~\ref{tab:main1}, all models were trained using the ADAM optimizer with three learning rates ($10^{-3}, 10^{-4}, 10^{-5}$), and the configuration yielding the best performance was selected for reporting. All models were trained from random initializations, without learning rate scheduling, weight decay, or dropout. In addition, the parameters of bilinear models (and variants) were initialized from a uniform distribution $\mathcal{U}(-0.01, 0.01)$. Training was conducted for 100,000 steps with a batch size of 64. An early stopping criterion was applied if the validation loss fell below $10^{-5}$. For these experiments, training  examples were randomly sampled at each training step with input sequence lengths ranging from 2 to 10, while models were evaluated on inputs of length 500. 

For the data efficiency experiments detailed in Section~\ref{sec:dataeffexp} (results in Figure~\ref{fig:dataeff}), we used the optimal learning rate identified for each model and task from the previous experiment. We constructed fixed training sets of specified sizes and trained models for 1000 epochs over each set. Other settings were kept consistent with those in the previous experiment.

Regarding the baseline models in Table~\ref{tab:main1}, the Transformer baseline is based on the GPT-2 architecture \citep{radford2019language} with configurations of 1, 2, and 4 layers, and a model (embedding/hidden) dimension of 256, consistent with other models. Other parameters, such as an MLP inner expansion factor of 4, followed default GPT-2 (small) settings. We also used Mamba-1 \citep{gu2024mamba} with 1, 2, and 4 layers, setting its model and hidden dimensions to 256. For other configurations, we adopted default values of Mamba-130M, including an intermediate expansion size of 512, a state space dimension of 16, and a convolution kernel size of 4.

All experiments were conducted on a cluster of A100 GPU nodes. A single training and evaluation run for a given model configuration, task, and setting typically completed on a single GPU within an hour in most cases, or up to a few hours in the worst case.

\newpage
\section{Additional experiments}

\subsection{Parameter-matched models}\label{appsec:parammatched}

The large parameter counts for some of our models (e.g., the full bilinear variant) are a direct result of matching hidden dimension (256) rather than parameters in experiments presented in Section~\ref{sec:experiments}. To address the concern about parameter efficiency, we conducted another set of experiments, similar to those in Table~\ref{tab:main1}, in which models were matched in parameter count by adjusting their hidden dimension. In Table~\ref{tab:parammatched} we report the validation and OOD test accuracy on sequences of length 500, with training performed on sequences up to length 10. These new results still show superior state-tracking performance of bilinear models in most tasks (a slight degradation for modular arithmetic). 

\begin{table}[htbp]
    \centering
    \caption{Validation and OOD test accuracy for models matched by parameter count via adjusted hidden dimensions. Bilinear models maintain superior state-tracking performance across most tasks.}
    \label{tab:parammatched}
    {
\scriptsize
\renewcommand{\arraystretch}{1.1}  
\setlength{\tabcolsep}{4pt}      
\setlength{\arrayrulewidth}{0.1pt}

\begin{tabularx}{0.63\textwidth}{
>{\raggedright\arraybackslash}p{2.4cm}
M{2cm}
EEE
}

\toprule

Model & Spec. & \text{Hidden Dim.} & \text{Layers} & \text{Parameters} \\

\midrule

Bilinear 
& 
& 80 & 1 & 513{,}207 \\

Factored Bilinear 
& Rank $700$
& 256 & 1 & 541{,}447 \\

Block Diag. Bilinear
& $4$ Block of Size $32$
& 128 & 1 & 526{,}215 \\

Block Diag. Bilinear 
& $32$ Blocks of Size $8$
& 256 & 1 & 528{,}135 \\

LSTM 
& 
& 256 & 1 & 529{,}927 \\

RNN 
& 
& 512 & 1 & 532{,}487 \\

Mamba 
& 
& 128 & 4 & 549{,}376 \\

Transformer 
& 4 Heads
& 96 & 4 & 546{,}528 \\


\end{tabularx}

}
    \vspace{0.5em}
    {
\scriptsize
\renewcommand{\arraystretch}{1.65}  
\setlength{\tabcolsep}{5.9pt}      
\setlength{\arrayrulewidth}{0.1pt}

\begin{tabularx}{\textwidth}{N{2.5cm}{0.01cm}EEEEEE!{\vrule width 0.6pt}EEEEEE}
\specialrule{0.6pt}{0pt}{0pt}
& \multicolumn{6}{c}{Validation Accuracy (Length $2$-$10$)}
& \multicolumn{6}{c}{OOD Accuracy (Length $500$)} \\
\cline{1-13}

Modulus / State Size
& 2
& 3
& 5
& 10
& 25
& 50
& 2
& 3
& 5
& 10
& 25
& 50 \\

\cline{1-13}

& \multicolumn{12}{c}{Modular Addition} \vspace{1pt} \\
\cline{1-13}

Bilinear
& {\cellcolor[HTML]{CCE1D7}} \color[HTML]{000000} 1.00
& {\cellcolor[HTML]{CCE1D7}} \color[HTML]{000000} 1.00
& {\cellcolor[HTML]{CCE1D7}} \color[HTML]{000000} 1.00
& {\cellcolor[HTML]{CCE1D7}} \color[HTML]{000000} 1.00
& {\cellcolor[HTML]{CCE1D7}} \color[HTML]{000000} 1.00
& {\cellcolor[HTML]{CCE1D7}} \color[HTML]{000000} 1.00
& {\cellcolor[HTML]{CCE1D7}} \color[HTML]{000000} 1.00
& {\cellcolor[HTML]{CCE1D7}} \color[HTML]{000000} 1.00
& {\cellcolor[HTML]{CCE1D7}} \color[HTML]{000000} 1.00
& {\cellcolor[HTML]{CCE1D7}} \color[HTML]{000000} 1.00
& {\cellcolor[HTML]{CCE1D7}} \color[HTML]{000000} 1.00
& {\cellcolor[HTML]{CCE1D7}} \color[HTML]{000000} 1.00 \\
\cline{1-13}

Factored Bil. \fontsize{5.4pt}{5pt}\selectfont{(700 factors)}
& {\cellcolor[HTML]{CCE1D7}} \color[HTML]{000000} 1.00
& {\cellcolor[HTML]{CCE1D7}} \color[HTML]{000000} 1.00
& {\cellcolor[HTML]{CCE1D7}} \color[HTML]{000000} 1.00
& {\cellcolor[HTML]{CCE1D7}} \color[HTML]{000000} 1.00
& {\cellcolor[HTML]{CCE1D7}} \color[HTML]{000000} 1.00
& {\cellcolor[HTML]{CCE1D7}} \color[HTML]{000000} 1.00
& {\cellcolor[HTML]{CCE1D7}} \color[HTML]{000000} 1.00
& {\cellcolor[HTML]{CCE1D7}} \color[HTML]{000000} 1.00
& {\cellcolor[HTML]{CCE1D7}} \color[HTML]{000000} 1.00
& {\cellcolor[HTML]{CCE1D7}} \color[HTML]{000000} 1.00
& {\cellcolor[HTML]{CCE1D7}} \color[HTML]{000000} 1.00
& {\cellcolor[HTML]{CCE1D7}} \color[HTML]{000000} 1.00 \\
\cline{1-13}

Block Diag.  \fontsize{5.4pt}{5pt}\selectfont{(Block Size 8)}
& {\cellcolor[HTML]{CCE1D7}} \color[HTML]{000000} 1.00
& {\cellcolor[HTML]{CCE1D7}} \color[HTML]{000000} 1.00
& {\cellcolor[HTML]{CCE1D7}} \color[HTML]{000000} 1.00
& {\cellcolor[HTML]{CCE1D7}} \color[HTML]{000000} 1.00
& {\cellcolor[HTML]{CCE1D7}} \color[HTML]{000000} 1.00
& {\cellcolor[HTML]{CCE1D7}} \color[HTML]{000000} 1.00
& {\cellcolor[HTML]{CCE1D7}} \color[HTML]{000000} 1.00
& {\cellcolor[HTML]{CCE1D7}} \color[HTML]{000000} 1.00
& {\cellcolor[HTML]{CCE1D7}} \color[HTML]{000000} 1.00
& {\cellcolor[HTML]{CCE1D7}} \color[HTML]{000000} 1.00
& {\cellcolor[HTML]{CCE1D7}} \color[HTML]{000000} 1.00
& {\cellcolor[HTML]{CCE1D7}} \color[HTML]{000000} 1.00 \\
\cline{1-13}

Block Diag. \fontsize{5.4pt}{5pt}\selectfont{(Block Size 32)}
& {\cellcolor[HTML]{CCE1D7}} \color[HTML]{000000} 1.00
& {\cellcolor[HTML]{CCE1D7}} \color[HTML]{000000} 1.00
& {\cellcolor[HTML]{CCE1D7}} \color[HTML]{000000} 1.00
& {\cellcolor[HTML]{CCE1D7}} \color[HTML]{000000} 1.00
& {\cellcolor[HTML]{CCE1D7}} \color[HTML]{000000} 1.00
& {\cellcolor[HTML]{CCE1D7}} \color[HTML]{000000} 1.00
& {\cellcolor[HTML]{CCE1D7}} \color[HTML]{000000} 1.00
& {\cellcolor[HTML]{CCE1D7}} \color[HTML]{000000} 1.00
& {\cellcolor[HTML]{CCE1D7}} \color[HTML]{000000} 1.00
& {\cellcolor[HTML]{CCE1D7}} \color[HTML]{000000} 1.00
& {\cellcolor[HTML]{CCE1D7}} \color[HTML]{000000} 1.00
& {\cellcolor[HTML]{CCE1D7}} \color[HTML]{000000} 1.00 \\
\cline{1-13}

LSTM
& {\cellcolor[HTML]{CCE1D7}} \color[HTML]{000000} 1.00
& {\cellcolor[HTML]{CCE1D7}} \color[HTML]{000000} 1.00
& {\cellcolor[HTML]{CCE1D7}} \color[HTML]{000000} 1.00
& {\cellcolor[HTML]{CCE1D7}} \color[HTML]{000000} 1.00
& {\cellcolor[HTML]{CCE1D7}} \color[HTML]{000000} 1.00
& {\cellcolor[HTML]{CCE2D7}} \color[HTML]{000000} 0.99
& {\cellcolor[HTML]{CCE1D7}} \color[HTML]{000000} 1.00
& {\cellcolor[HTML]{CCE1D7}} \color[HTML]{000000} 1.00
& {\cellcolor[HTML]{CDE2D8}} \color[HTML]{000000} 0.98
& {\cellcolor[HTML]{CCE1D7}} \color[HTML]{000000} 1.00
& {\cellcolor[HTML]{EDCCD4}} \color[HTML]{000000} 0.00
& {\cellcolor[HTML]{EFCED4}} \color[HTML]{000000} 0.02 \\
\cline{1-13}

RNN
& {\cellcolor[HTML]{CCE1D7}} \color[HTML]{000000} 1.00
& {\cellcolor[HTML]{CCE1D7}} \color[HTML]{000000} 1.00
& {\cellcolor[HTML]{CCE1D7}} \color[HTML]{000000} 1.00
& {\cellcolor[HTML]{CCE1D7}} \color[HTML]{000000} 1.00
& {\cellcolor[HTML]{CCE1D7}} \color[HTML]{000000} 1.00
& {\cellcolor[HTML]{CCE1D7}} \color[HTML]{000000} 1.00
& {\cellcolor[HTML]{CCE1D7}} \color[HTML]{000000} 1.00
& {\cellcolor[HTML]{CCE1D7}} \color[HTML]{000000} 1.00
& {\cellcolor[HTML]{CCE1D7}} \color[HTML]{000000} 1.00
& {\cellcolor[HTML]{CCE1D7}} \color[HTML]{000000} 1.00
& {\cellcolor[HTML]{FFF4E4}} \color[HTML]{000000} 0.36
& {\cellcolor[HTML]{F8D7D5}} \color[HTML]{000000} 0.11 \\
\cline{1-13}

Mamba
& {\cellcolor[HTML]{CCE1D7}} \color[HTML]{000000} 1.00
& {\cellcolor[HTML]{CCE1D7}} \color[HTML]{000000} 1.00
& {\cellcolor[HTML]{CCE1D7}} \color[HTML]{000000} 1.00
& {\cellcolor[HTML]{CCE1D7}} \color[HTML]{000000} 1.00
& {\cellcolor[HTML]{CCE1D7}} \color[HTML]{000000} 1.00
& {\cellcolor[HTML]{CDE3D8}} \color[HTML]{000000} 0.98
& {\cellcolor[HTML]{EECDD4}} \color[HTML]{000000} 0.01
& {\cellcolor[HTML]{EDCCD4}} \color[HTML]{000000} 0.00
& {\cellcolor[HTML]{EECDD4}} \color[HTML]{000000} 0.01
& {\cellcolor[HTML]{EECDD4}} \color[HTML]{000000} 0.01
& {\cellcolor[HTML]{EDCCD4}} \color[HTML]{000000} 0.00
& {\cellcolor[HTML]{EDCCD4}} \color[HTML]{000000} 0.00 \\
\cline{1-13}

Transformer
& {\cellcolor[HTML]{CCE1D7}} \color[HTML]{000000} 1.00
& {\cellcolor[HTML]{CCE1D7}} \color[HTML]{000000} 1.00
& {\cellcolor[HTML]{CCE1D7}} \color[HTML]{000000} 1.00
& {\cellcolor[HTML]{CEE4D9}} \color[HTML]{000000} 0.97
& {\cellcolor[HTML]{F4FAE6}} \color[HTML]{000000} 0.63
& {\cellcolor[HTML]{EDCCD4}} \color[HTML]{000000} 0.01
& {\cellcolor[HTML]{F0CFD4}} \color[HTML]{000000} 0.03
& {\cellcolor[HTML]{F1CFD4}} \color[HTML]{000000} 0.04
& {\cellcolor[HTML]{EDCCD4}} \color[HTML]{000000} 0.00
& {\cellcolor[HTML]{EDCCD4}} \color[HTML]{000000} 0.00
& {\cellcolor[HTML]{EDCCD4}} \color[HTML]{000000} 0.00
& {\cellcolor[HTML]{EDCCD4}} \color[HTML]{000000} 0.00 \\
\specialrule{0.6pt}{0pt}{0pt}

& \multicolumn{12}{c}{State Machine} \vspace{1pt} \\
\cline{1-13}

Bilinear
& {\cellcolor[HTML]{CCE1D7}} \color[HTML]{000000} 1.00
& {\cellcolor[HTML]{CCE1D7}} \color[HTML]{000000} 1.00
& {\cellcolor[HTML]{CCE1D7}} \color[HTML]{000000} 1.00
& {\cellcolor[HTML]{CCE1D7}} \color[HTML]{000000} 1.00
& {\cellcolor[HTML]{CCE1D7}} \color[HTML]{000000} 1.00
& {\cellcolor[HTML]{CCE1D7}} \color[HTML]{000000} 1.00
& {\cellcolor[HTML]{CCE1D7}} \color[HTML]{000000} 1.00
& {\cellcolor[HTML]{CCE1D7}} \color[HTML]{000000} 1.00
& {\cellcolor[HTML]{CCE1D7}} \color[HTML]{000000} 1.00
& {\cellcolor[HTML]{CCE1D7}} \color[HTML]{000000} 1.00
& {\cellcolor[HTML]{CCE1D7}} \color[HTML]{000000} 1.00
& {\cellcolor[HTML]{CCE1D7}} \color[HTML]{000000} 1.00 \\
\cline{1-13}

Factored Bil. \fontsize{5.4pt}{5pt}\selectfont{(700 factors)}
& {\cellcolor[HTML]{CCE1D7}} \color[HTML]{000000} 1.00
& {\cellcolor[HTML]{CCE1D7}} \color[HTML]{000000} 1.00
& {\cellcolor[HTML]{CCE1D7}} \color[HTML]{000000} 1.00
& {\cellcolor[HTML]{CCE1D7}} \color[HTML]{000000} 1.00
& {\cellcolor[HTML]{CCE1D7}} \color[HTML]{000000} 1.00
& {\cellcolor[HTML]{CDE2D8}} \color[HTML]{000000} 0.98
& {\cellcolor[HTML]{CCE1D7}} \color[HTML]{000000} 1.00
& {\cellcolor[HTML]{CCE1D7}} \color[HTML]{000000} 1.00
& {\cellcolor[HTML]{CCE1D7}} \color[HTML]{000000} 1.00
& {\cellcolor[HTML]{CCE1D7}} \color[HTML]{000000} 1.00
& {\cellcolor[HTML]{CCE1D7}} \color[HTML]{000000} 1.00
& {\cellcolor[HTML]{FCE0D9}} \color[HTML]{000000} 0.18 \\
\cline{1-13}

Block Diag.  \fontsize{5.4pt}{5pt}\selectfont{(Block Size 8)}
& {\cellcolor[HTML]{CCE1D7}} \color[HTML]{000000} 1.00
& {\cellcolor[HTML]{CCE1D7}} \color[HTML]{000000} 1.00
& {\cellcolor[HTML]{CCE1D7}} \color[HTML]{000000} 1.00
& {\cellcolor[HTML]{CCE1D7}} \color[HTML]{000000} 1.00
& {\cellcolor[HTML]{F7FCE8}} \color[HTML]{000000} 0.60
& {\cellcolor[HTML]{FFF0E0}} \color[HTML]{000000} 0.31
& {\cellcolor[HTML]{CCE1D7}} \color[HTML]{000000} 1.00
& {\cellcolor[HTML]{CCE1D7}} \color[HTML]{000000} 1.00
& {\cellcolor[HTML]{CCE1D7}} \color[HTML]{000000} 1.00
& {\cellcolor[HTML]{FFF8E7}} \color[HTML]{000000} 0.39
& {\cellcolor[HTML]{F7D6D4}} \color[HTML]{000000} 0.11
& {\cellcolor[HTML]{EFCED4}} \color[HTML]{000000} 0.02 \\
\cline{1-13}

Block Diag. \fontsize{5.4pt}{5pt}\selectfont{(Block Size 32)}
& {\cellcolor[HTML]{CCE1D7}} \color[HTML]{000000} 1.00
& {\cellcolor[HTML]{CCE1D7}} \color[HTML]{000000} 1.00
& {\cellcolor[HTML]{CCE1D7}} \color[HTML]{000000} 1.00
& {\cellcolor[HTML]{CCE1D7}} \color[HTML]{000000} 1.00
& {\cellcolor[HTML]{CCE1D7}} \color[HTML]{000000} 1.00
& {\cellcolor[HTML]{E9F6E1}} \color[HTML]{000000} 0.73
& {\cellcolor[HTML]{CCE1D7}} \color[HTML]{000000} 1.00
& {\cellcolor[HTML]{CCE1D7}} \color[HTML]{000000} 1.00
& {\cellcolor[HTML]{CCE1D7}} \color[HTML]{000000} 1.00
& {\cellcolor[HTML]{CCE1D7}} \color[HTML]{000000} 1.00
& {\cellcolor[HTML]{CCE1D7}} \color[HTML]{000000} 1.00
& {\cellcolor[HTML]{F8D7D4}} \color[HTML]{000000} 0.11 \\
\cline{1-13}

LSTM
& {\cellcolor[HTML]{CCE1D7}} \color[HTML]{000000} 1.00
& {\cellcolor[HTML]{CCE1D7}} \color[HTML]{000000} 1.00
& {\cellcolor[HTML]{CCE1D7}} \color[HTML]{000000} 1.00
& {\cellcolor[HTML]{CCE1D7}} \color[HTML]{000000} 1.00
& {\cellcolor[HTML]{CCE1D7}} \color[HTML]{000000} 1.00
& {\cellcolor[HTML]{FFEFDF}} \color[HTML]{000000} 0.30
& {\cellcolor[HTML]{CCE1D7}} \color[HTML]{000000} 1.00
& {\cellcolor[HTML]{CCE1D7}} \color[HTML]{000000} 1.00
& {\cellcolor[HTML]{CCE1D7}} \color[HTML]{000000} 1.00
& {\cellcolor[HTML]{CCE1D7}} \color[HTML]{000000} 1.00
& {\cellcolor[HTML]{F1F9E4}} \color[HTML]{000000} 0.66
& {\cellcolor[HTML]{F6D4D4}} \color[HTML]{000000} 0.09 \\
\cline{1-13}

RNN
& {\cellcolor[HTML]{CCE1D7}} \color[HTML]{000000} 1.00
& {\cellcolor[HTML]{CCE1D7}} \color[HTML]{000000} 1.00
& {\cellcolor[HTML]{CCE1D7}} \color[HTML]{000000} 1.00
& {\cellcolor[HTML]{CCE1D7}} \color[HTML]{000000} 1.00
& {\cellcolor[HTML]{F0F9E3}} \color[HTML]{000000} 0.67
& {\cellcolor[HTML]{FDE4DA}} \color[HTML]{000000} 0.22
& {\cellcolor[HTML]{CCE1D7}} \color[HTML]{000000} 1.00
& {\cellcolor[HTML]{CCE1D7}} \color[HTML]{000000} 1.00
& {\cellcolor[HTML]{CCE1D7}} \color[HTML]{000000} 1.00
& {\cellcolor[HTML]{CCE1D7}} \color[HTML]{000000} 1.00
& {\cellcolor[HTML]{FEE8DC}} \color[HTML]{000000} 0.25
& {\cellcolor[HTML]{F5D4D4}} \color[HTML]{000000} 0.08 \\
\cline{1-13}

Mamba
& {\cellcolor[HTML]{CCE1D7}} \color[HTML]{000000} 1.00
& {\cellcolor[HTML]{CCE1D7}} \color[HTML]{000000} 1.00
& {\cellcolor[HTML]{CCE1D7}} \color[HTML]{000000} 1.00
& {\cellcolor[HTML]{CDE2D8}} \color[HTML]{000000} 0.99
& {\cellcolor[HTML]{F6FBE7}} \color[HTML]{000000} 0.61
& {\cellcolor[HTML]{FFF1E2}} \color[HTML]{000000} 0.33
& {\cellcolor[HTML]{EDCCD4}} \color[HTML]{000000} 0.00
& {\cellcolor[HTML]{CCE1D7}} \color[HTML]{000000} 1.00
& {\cellcolor[HTML]{CEE4D9}} \color[HTML]{000000} 0.96
& {\cellcolor[HTML]{FFFEF0}} \color[HTML]{000000} 0.48
& {\cellcolor[HTML]{FDE6DC}} \color[HTML]{000000} 0.24
& {\cellcolor[HTML]{F5D4D4}} \color[HTML]{000000} 0.08 \\
\cline{1-13}

Transformer
& {\cellcolor[HTML]{CCE1D7}} \color[HTML]{000000} 1.00
& {\cellcolor[HTML]{CCE1D7}} \color[HTML]{000000} 1.00
& {\cellcolor[HTML]{CCE1D7}} \color[HTML]{000000} 0.99
& {\cellcolor[HTML]{E7F5E1}} \color[HTML]{000000} 0.74
& {\cellcolor[HTML]{FFF6E6}} \color[HTML]{000000} 0.38
& {\cellcolor[HTML]{FBDED8}} \color[HTML]{000000} 0.17
& {\cellcolor[HTML]{EDCCD4}} \color[HTML]{000000} 0.00
& {\cellcolor[HTML]{EFCED4}} \color[HTML]{000000} 0.02
& {\cellcolor[HTML]{EFCED4}} \color[HTML]{000000} 0.02
& {\cellcolor[HTML]{EECDD4}} \color[HTML]{000000} 0.01
& {\cellcolor[HTML]{EECDD4}} \color[HTML]{000000} 0.01
& {\cellcolor[HTML]{EDCCD4}} \color[HTML]{000000} 0.00 \\
\specialrule{0.6pt}{0pt}{0pt}
& \multicolumn{12}{c}{Modular Arithmetic} \vspace{1pt} \\
\cline{1-13}

Bilinear
& {\cellcolor[HTML]{CCE1D7}} \color[HTML]{000000} 1.00
& {\cellcolor[HTML]{CCE1D7}} \color[HTML]{000000} 1.00
& {\cellcolor[HTML]{CCE1D7}} \color[HTML]{000000} 1.00
& {\cellcolor[HTML]{CCE1D7}} \color[HTML]{000000} 1.00
& {\cellcolor[HTML]{CCE1D7}} \color[HTML]{000000} 1.00
& {\cellcolor[HTML]{EDF7E1}} \color[HTML]{000000} 0.70
& {\cellcolor[HTML]{CCE1D7}} \color[HTML]{000000} 1.00
& {\cellcolor[HTML]{CCE1D7}} \color[HTML]{000000} 1.00
& {\cellcolor[HTML]{CCE1D7}} \color[HTML]{000000} 1.00
& {\cellcolor[HTML]{CCE1D7}} \color[HTML]{000000} 1.00
& {\cellcolor[HTML]{FFF4E4}} \color[HTML]{000000} 0.35
& {\cellcolor[HTML]{FCE0D9}} \color[HTML]{000000} 0.19 \\
\cline{1-13}

Factored Bil. \fontsize{5.4pt}{5pt}\selectfont{(700 factors)}
& {\cellcolor[HTML]{CCE1D7}} \color[HTML]{000000} 1.00
& {\cellcolor[HTML]{CCE1D7}} \color[HTML]{000000} 1.00
& {\cellcolor[HTML]{CCE1D7}} \color[HTML]{000000} 1.00
& {\cellcolor[HTML]{CCE1D7}} \color[HTML]{000000} 1.00
& {\cellcolor[HTML]{CCE1D7}} \color[HTML]{000000} 1.00
& {\cellcolor[HTML]{FEECDE}} \color[HTML]{000000} 0.28
& {\cellcolor[HTML]{CCE1D7}} \color[HTML]{000000} 1.00
& {\cellcolor[HTML]{CCE1D7}} \color[HTML]{000000} 1.00
& {\cellcolor[HTML]{CCE1D7}} \color[HTML]{000000} 1.00
& {\cellcolor[HTML]{CCE1D7}} \color[HTML]{000000} 1.00
& {\cellcolor[HTML]{FFF2E2}} \color[HTML]{000000} 0.33
& {\cellcolor[HTML]{FEEDDF}} \color[HTML]{000000} 0.29 \\
\cline{1-13}

Block Diag.  \fontsize{5.4pt}{5pt}\selectfont{(Block Size 8)}
& {\cellcolor[HTML]{CCE1D7}} \color[HTML]{000000} 1.00
& {\cellcolor[HTML]{CCE1D7}} \color[HTML]{000000} 1.00
& {\cellcolor[HTML]{CCE1D7}} \color[HTML]{000000} 1.00
& {\cellcolor[HTML]{CCE1D7}} \color[HTML]{000000} 1.00
& {\cellcolor[HTML]{F1F9E4}} \color[HTML]{000000} 0.66
& {\cellcolor[HTML]{F7FCE8}} \color[HTML]{000000} 0.60
& {\cellcolor[HTML]{CCE1D7}} \color[HTML]{000000} 1.00
& {\cellcolor[HTML]{CCE1D7}} \color[HTML]{000000} 1.00
& {\cellcolor[HTML]{F8D8D5}} \color[HTML]{000000} 0.12
& {\cellcolor[HTML]{FDE2DA}} \color[HTML]{000000} 0.20
& {\cellcolor[HTML]{F0CFD4}} \color[HTML]{000000} 0.03
& {\cellcolor[HTML]{F4D2D4}} \color[HTML]{000000} 0.07 \\
\cline{1-13}

Block Diag. \fontsize{5.4pt}{5pt}\selectfont{(Block Size 32)}
& {\cellcolor[HTML]{CCE1D7}} \color[HTML]{000000} 1.00
& {\cellcolor[HTML]{CCE1D7}} \color[HTML]{000000} 1.00
& {\cellcolor[HTML]{CCE1D7}} \color[HTML]{000000} 1.00
& {\cellcolor[HTML]{CCE1D7}} \color[HTML]{000000} 1.00
& {\cellcolor[HTML]{CFE6DA}} \color[HTML]{000000} 0.94
& {\cellcolor[HTML]{E1F2E0}} \color[HTML]{000000} 0.79
& {\cellcolor[HTML]{CCE1D7}} \color[HTML]{000000} 1.00
& {\cellcolor[HTML]{CCE1D7}} \color[HTML]{000000} 1.00
& {\cellcolor[HTML]{CCE1D7}} \color[HTML]{000000} 1.00
& {\cellcolor[HTML]{CCE1D7}} \color[HTML]{000000} 1.00
& {\cellcolor[HTML]{F6D5D4}} \color[HTML]{000000} 0.09
& {\cellcolor[HTML]{F8D9D5}} \color[HTML]{000000} 0.13 \\
\cline{1-13}

LSTM
& {\cellcolor[HTML]{CCE1D7}} \color[HTML]{000000} 1.00
& {\cellcolor[HTML]{CCE1D7}} \color[HTML]{000000} 1.00
& {\cellcolor[HTML]{CCE1D7}} \color[HTML]{000000} 1.00
& {\cellcolor[HTML]{CCE1D7}} \color[HTML]{000000} 1.00
& {\cellcolor[HTML]{CCE1D7}} \color[HTML]{000000} 1.00
& {\cellcolor[HTML]{D1EADC}} \color[HTML]{000000} 0.90
& {\cellcolor[HTML]{CCE1D7}} \color[HTML]{000000} 1.00
& {\cellcolor[HTML]{CCE1D7}} \color[HTML]{000000} 1.00
& {\cellcolor[HTML]{CCE1D7}} \color[HTML]{000000} 1.00
& {\cellcolor[HTML]{CCE1D7}} \color[HTML]{000000} 1.00
& {\cellcolor[HTML]{CFE6DA}} \color[HTML]{000000} 0.94
& {\cellcolor[HTML]{F3FAE5}} \color[HTML]{000000} 0.64 \\
\cline{1-13}

RNN
& {\cellcolor[HTML]{CCE1D7}} \color[HTML]{000000} 1.00
& {\cellcolor[HTML]{CCE1D7}} \color[HTML]{000000} 1.00
& {\cellcolor[HTML]{CCE1D7}} \color[HTML]{000000} 1.00
& {\cellcolor[HTML]{CCE1D7}} \color[HTML]{000000} 1.00
& {\cellcolor[HTML]{CCE1D7}} \color[HTML]{000000} 1.00
& {\cellcolor[HTML]{FFF4E4}} \color[HTML]{000000} 0.35
& {\cellcolor[HTML]{CCE1D7}} \color[HTML]{000000} 1.00
& {\cellcolor[HTML]{CCE1D7}} \color[HTML]{000000} 1.00
& {\cellcolor[HTML]{CCE1D7}} \color[HTML]{000000} 1.00
& {\cellcolor[HTML]{CCE1D7}} \color[HTML]{000000} 1.00
& {\cellcolor[HTML]{CDE2D8}} \color[HTML]{000000} 0.98
& {\cellcolor[HTML]{FEEEDF}} \color[HTML]{000000} 0.29 \\
\cline{1-13}

Mamba
& {\cellcolor[HTML]{CCE1D7}} \color[HTML]{000000} 1.00
& {\cellcolor[HTML]{CCE1D7}} \color[HTML]{000000} 1.00
& {\cellcolor[HTML]{CDE3D8}} \color[HTML]{000000} 0.97
& {\cellcolor[HTML]{CCE2D7}} \color[HTML]{000000} 0.99
& {\cellcolor[HTML]{FFF7E6}} \color[HTML]{000000} 0.38
& {\cellcolor[HTML]{FDE6DB}} \color[HTML]{000000} 0.23
& {\cellcolor[HTML]{CFE6DA}} \color[HTML]{000000} 0.95
& {\cellcolor[HTML]{E3F3E0}} \color[HTML]{000000} 0.78
& {\cellcolor[HTML]{FFF5E5}} \color[HTML]{000000} 0.36
& {\cellcolor[HTML]{FFF6E5}} \color[HTML]{000000} 0.37
& {\cellcolor[HTML]{FCDFD8}} \color[HTML]{000000} 0.18
& {\cellcolor[HTML]{F4D2D4}} \color[HTML]{000000} 0.07 \\
\cline{1-13}

Transformer
& {\cellcolor[HTML]{CCE1D7}} \color[HTML]{000000} 1.00
& {\cellcolor[HTML]{F9FDEA}} \color[HTML]{000000} 0.58
& {\cellcolor[HTML]{FEEADD}} \color[HTML]{000000} 0.26
& {\cellcolor[HTML]{FEE8DC}} \color[HTML]{000000} 0.25
& {\cellcolor[HTML]{F4D2D4}} \color[HTML]{000000} 0.07
& {\cellcolor[HTML]{F6D5D4}} \color[HTML]{000000} 0.09
& {\cellcolor[HTML]{FCE1D9}} \color[HTML]{000000} 0.19
& {\cellcolor[HTML]{F1CFD4}} \color[HTML]{000000} 0.04
& {\cellcolor[HTML]{EECDD4}} \color[HTML]{000000} 0.01
& {\cellcolor[HTML]{EFCED4}} \color[HTML]{000000} 0.02
& {\cellcolor[HTML]{EECDD4}} \color[HTML]{000000} 0.01
& {\cellcolor[HTML]{EDCCD4}} \color[HTML]{000000} 0.01 \\
\specialrule{0.6pt}{0pt}{0pt}
\end{tabularx}
}

\end{table}
\FloatBarrier

\subsection{Simulating dihedral groups}\label{appsec:dihedral}

Consider a finite state-machine with $Q = \{0, \cdots, m - 1\} \times \{-1, +1\}$ and input
alphabet $\Sigma = \{\texttt{advance}, \texttt{reverse}\}$. The state consists of a "value" and a binary "direction". Upon receiving input \texttt{advance}, the value will be incremented or decremented by 1 (modulo $m$) depending on the current direction. The input \texttt{reverse} flips the direction while leaving the value unchanged. Formally, the transition function is defined as follows: 
\begin{align*}
    \delta((s, d), \texttt{advance}) &= ((s + d) \ \text{mod} \ m, d)  \\
    \delta((s, d), \texttt{reverse}) &= (s, -d)
\end{align*}
See Example~6 in \cite{liu2023transformers} for more details on dihedral groups.
Table~\ref{tab:dihedral} reports the in-distribution and length-generalization performance of various models when simulating dihedral groups with different moduli. As before, all models were trained on sequences of length 10 and evaluated on sequences of length 500.

\begin{table}[htbp]
\centering
    \caption{In-distribution and length-generalization performance of various models on simulating dihedral groups with different moduli.}
    \label{tab:dihedral}
    {
\scriptsize
\renewcommand{\arraystretch}{1.65}  
\setlength{\tabcolsep}{5.9pt}      
\setlength{\arrayrulewidth}{0.1pt}

\begin{tabularx}{\textwidth}{N{2.5cm}{0.01cm}EEEEEE!{\vrule width 0.6pt}EEEEEE}

\specialrule{0.6pt}{0pt}{0pt}

& \multicolumn{6}{c}{Validation Accuracy (Length $2$-$10$)}
& \multicolumn{6}{c}{OOD Accuracy (Length $500$)} \\
\cline{1-13}

Modulus
& 2
& 3
& 5
& 10
& 25
& 50
& 2
& 3
& 5
& 10
& 25
& 50 \\

\specialrule{0.6pt}{0pt}{0pt}

Bilinear
& {\cellcolor[HTML]{CCE1D7}} \color[HTML]{000000} 1.00
& {\cellcolor[HTML]{CCE1D7}} \color[HTML]{000000} 1.00
& {\cellcolor[HTML]{CCE1D7}} \color[HTML]{000000} 1.00
& {\cellcolor[HTML]{CCE1D7}} \color[HTML]{000000} 1.00
& {\cellcolor[HTML]{CCE1D7}} \color[HTML]{000000} 1.00
& {\cellcolor[HTML]{CCE1D7}} \color[HTML]{000000} 1.00
& {\cellcolor[HTML]{CCE1D7}} \color[HTML]{000000} 1.00
& {\cellcolor[HTML]{CCE1D7}} \color[HTML]{000000} 1.00
& {\cellcolor[HTML]{CCE1D7}} \color[HTML]{000000} 1.00
& {\cellcolor[HTML]{CCE1D7}} \color[HTML]{000000} 1.00
& {\cellcolor[HTML]{CCE1D7}} \color[HTML]{000000} 1.00
& {\cellcolor[HTML]{CCE1D7}} \color[HTML]{000000} 1.00 \\
\cline{1-13}

Factored Bil. \fontsize{5.4pt}{5pt}\selectfont{(256 factors)}
& {\cellcolor[HTML]{CCE1D7}} \color[HTML]{000000} 1.00
& {\cellcolor[HTML]{CCE1D7}} \color[HTML]{000000} 1.00
& {\cellcolor[HTML]{CCE1D7}} \color[HTML]{000000} 1.00
& {\cellcolor[HTML]{CCE1D7}} \color[HTML]{000000} 1.00
& {\cellcolor[HTML]{CCE1D7}} \color[HTML]{000000} 1.00
& {\cellcolor[HTML]{CCE1D7}} \color[HTML]{000000} 1.00
& {\cellcolor[HTML]{CCE1D7}} \color[HTML]{000000} 1.00
& {\cellcolor[HTML]{CCE1D7}} \color[HTML]{000000} 1.00
& {\cellcolor[HTML]{CCE1D7}} \color[HTML]{000000} 1.00
& {\cellcolor[HTML]{CCE1D7}} \color[HTML]{000000} 1.00
& {\cellcolor[HTML]{CCE1D7}} \color[HTML]{000000} 1.00
& {\cellcolor[HTML]{CDE2D8}} \color[HTML]{000000} 0.99 \\
\cline{1-13}

Block Diag. \fontsize{5.4pt}{5pt}\selectfont{(block size 64)}
& {\cellcolor[HTML]{CCE1D7}} \color[HTML]{000000} 1.00
& {\cellcolor[HTML]{CCE1D7}} \color[HTML]{000000} 1.00
& {\cellcolor[HTML]{CCE1D7}} \color[HTML]{000000} 1.00
& {\cellcolor[HTML]{CCE1D7}} \color[HTML]{000000} 1.00
& {\cellcolor[HTML]{CCE1D7}} \color[HTML]{000000} 1.00
& {\cellcolor[HTML]{CCE1D7}} \color[HTML]{000000} 1.00
& {\cellcolor[HTML]{CCE1D7}} \color[HTML]{000000} 1.00
& {\cellcolor[HTML]{CCE1D7}} \color[HTML]{000000} 1.00
& {\cellcolor[HTML]{CCE1D7}} \color[HTML]{000000} 1.00
& {\cellcolor[HTML]{CCE1D7}} \color[HTML]{000000} 1.00
& {\cellcolor[HTML]{CCE1D7}} \color[HTML]{000000} 1.00
& {\cellcolor[HTML]{CCE1D7}} \color[HTML]{000000} 1.00 \\
\cline{1-13}

$\ma{R}_2$ Block Diag.
& {\cellcolor[HTML]{E3F3E0}} \color[HTML]{000000} 0.78
& {\cellcolor[HTML]{FFF1E1}} \color[HTML]{000000} 0.32
& {\cellcolor[HTML]{FFF4E4}} \color[HTML]{000000} 0.35
& {\cellcolor[HTML]{FFFAEA}} \color[HTML]{000000} 0.42
& {\cellcolor[HTML]{FFF9E8}} \color[HTML]{000000} 0.40
& {\cellcolor[HTML]{FFF9E9}} \color[HTML]{000000} 0.41
& {\cellcolor[HTML]{EFCED4}} \color[HTML]{000000} 0.02
& {\cellcolor[HTML]{F0CFD4}} \color[HTML]{000000} 0.03
& {\cellcolor[HTML]{EDCCD4}} \color[HTML]{000000} 0.00
& {\cellcolor[HTML]{F3D2D4}} \color[HTML]{000000} 0.07
& {\cellcolor[HTML]{EECDD4}} \color[HTML]{000000} 0.01
& {\cellcolor[HTML]{EDCCD4}} \color[HTML]{000000} 0.01 \\
\cline{1-13}

RNN
& {\cellcolor[HTML]{CCE1D7}} \color[HTML]{000000} 1.00
& {\cellcolor[HTML]{CCE1D7}} \color[HTML]{000000} 1.00
& {\cellcolor[HTML]{CCE1D7}} \color[HTML]{000000} 1.00
& {\cellcolor[HTML]{CCE1D7}} \color[HTML]{000000} 1.00
& {\cellcolor[HTML]{CCE1D7}} \color[HTML]{000000} 1.00
& {\cellcolor[HTML]{CCE1D7}} \color[HTML]{000000} 1.00
& {\cellcolor[HTML]{CCE1D7}} \color[HTML]{000000} 1.00
& {\cellcolor[HTML]{CCE1D7}} \color[HTML]{000000} 1.00
& {\cellcolor[HTML]{CCE1D7}} \color[HTML]{000000} 1.00
& {\cellcolor[HTML]{CCE1D7}} \color[HTML]{000000} 1.00
& {\cellcolor[HTML]{CCE1D7}} \color[HTML]{000000} 1.00
& {\cellcolor[HTML]{F4D3D4}} \color[HTML]{000000} 0.07 \\
\cline{1-13}

LSTM
& {\cellcolor[HTML]{CCE1D7}} \color[HTML]{000000} 1.00
& {\cellcolor[HTML]{CCE1D7}} \color[HTML]{000000} 1.00
& {\cellcolor[HTML]{CCE1D7}} \color[HTML]{000000} 1.00
& {\cellcolor[HTML]{CCE1D7}} \color[HTML]{000000} 1.00
& {\cellcolor[HTML]{CCE1D7}} \color[HTML]{000000} 1.00
& {\cellcolor[HTML]{CCE1D7}} \color[HTML]{000000} 1.00
& {\cellcolor[HTML]{CCE1D7}} \color[HTML]{000000} 1.00
& {\cellcolor[HTML]{CCE1D7}} \color[HTML]{000000} 1.00
& {\cellcolor[HTML]{CCE1D7}} \color[HTML]{000000} 1.00
& {\cellcolor[HTML]{CCE1D7}} \color[HTML]{000000} 1.00
& {\cellcolor[HTML]{FDE3DA}} \color[HTML]{000000} 0.21
& {\cellcolor[HTML]{FDE6DC}} \color[HTML]{000000} 0.24 \\
\cline{1-13}

Mamba
& {\cellcolor[HTML]{CCE1D7}} \color[HTML]{000000} 1.00
& {\cellcolor[HTML]{CCE1D7}} \color[HTML]{000000} 1.00
& {\cellcolor[HTML]{CCE1D7}} \color[HTML]{000000} 1.00
& {\cellcolor[HTML]{CCE1D7}} \color[HTML]{000000} 1.00
& {\cellcolor[HTML]{CCE1D7}} \color[HTML]{000000} 1.00
& {\cellcolor[HTML]{CCE1D7}} \color[HTML]{000000} 1.00
& {\cellcolor[HTML]{FADBD6}} \color[HTML]{000000} 0.15
& {\cellcolor[HTML]{EDCCD4}} \color[HTML]{000000} 0.01
& {\cellcolor[HTML]{EDCCD4}} \color[HTML]{000000} 0.00
& {\cellcolor[HTML]{EDCCD4}} \color[HTML]{000000} 0.00
& {\cellcolor[HTML]{EDCCD4}} \color[HTML]{000000} 0.00
& {\cellcolor[HTML]{EDCCD4}} \color[HTML]{000000} 0.00 \\
\cline{1-13}

Transformer
& {\cellcolor[HTML]{CCE1D7}} \color[HTML]{000000} 1.00
& {\cellcolor[HTML]{CCE1D7}} \color[HTML]{000000} 1.00
& {\cellcolor[HTML]{CCE1D7}} \color[HTML]{000000} 1.00
& {\cellcolor[HTML]{CCE1D7}} \color[HTML]{000000} 1.00
& {\cellcolor[HTML]{CCE1D7}} \color[HTML]{000000} 1.00
& {\cellcolor[HTML]{CCE1D7}} \color[HTML]{000000} 1.00
& {\cellcolor[HTML]{FDE6DB}} \color[HTML]{000000} 0.23
& {\cellcolor[HTML]{EFCED4}} \color[HTML]{000000} 0.02
& {\cellcolor[HTML]{EDCCD4}} \color[HTML]{000000} 0.01
& {\cellcolor[HTML]{EDCCD4}} \color[HTML]{000000} 0.00
& {\cellcolor[HTML]{EDCCD4}} \color[HTML]{000000} 0.01
& {\cellcolor[HTML]{EECDD4}} \color[HTML]{000000} 0.01 \\

\specialrule{0.6pt}{0pt}{0pt}

\end{tabularx}
}
\end{table}
\FloatBarrier

\subsection{Effect of number of factors}\label{appsec:numfactors}

Similar to the results presented in Section~\ref{sec:mainexp}, Table~\ref{tab:factors} presents the validation and out-of-distribution accuracy of factored bilinear models on the state machine simulation task, considering an increasing number of factors ($R$) across various state space sizes ($m$). As these results indicate, increasing the number of factors enables the simulation of larger state machines, as a factored model with a higher $R$ more closely approximates a full bilinear model.

\begin{table}[htbp]
    \centering
    \caption{In-distribution and length-generalization (normalized) accuracy of factored bilinear models with different number of factors, on the state machine simulation task.}
    \label{tab:factors}
    {\scriptsize
\renewcommand{\arraystretch}{1.5}  
\setlength{\tabcolsep}{6.3pt}      
\setlength{\arrayrulewidth}{0.1pt}

\begin{tabularx}{\textwidth}{M{.6cm}M{.9cm}EEEEEE|EEEEEE}
\specialrule{0.6pt}{0pt}{0pt}
& &
\multicolumn{6}{c}{Validation Accuracy (Length $2$-$10$)} &
\multicolumn{6}{c}{OOD Accuracy (Length $500$)} \\
& \hfill\# States & 2 & 3 & 5 & 10 & 25 & 50 & 2 & 3 & 5 & 10 & 25 & 50 \\
Model & \# Factors & & & & & & & & & & & &\\
\specialrule{0.6pt}{0pt}{0pt}
\multirow[c]{11}{*}{\makecell{Factored\\Bilinear }} 
&
1 &
{\cellcolor[HTML]{EDCCD4}} \color[HTML]{000000} 0.00 &
{\cellcolor[HTML]{EFCED4}} \color[HTML]{000000} 0.02 &
{\cellcolor[HTML]{EDCCD4}} \color[HTML]{000000} 0.00 &
{\cellcolor[HTML]{EFCED4}} \color[HTML]{000000} 0.02 &
{\cellcolor[HTML]{EDCCD4}} \color[HTML]{000000} 0.01 &
{\cellcolor[HTML]{EDCCD4}} \color[HTML]{000000} 0.00 &
{\cellcolor[HTML]{F0CFD4}} \color[HTML]{000000} 0.03 &
{\cellcolor[HTML]{EDCCD4}} \color[HTML]{000000} 0.00 &
{\cellcolor[HTML]{EDCCD4}} \color[HTML]{000000} 0.00 &
{\cellcolor[HTML]{EDCCD4}} \color[HTML]{000000} 0.00 &
{\cellcolor[HTML]{EDCCD4}} \color[HTML]{000000} 0.01 &
{\cellcolor[HTML]{EDCCD4}} \color[HTML]{000000} 0.00 \\

&
2 &
{\cellcolor[HTML]{EDCCD4}} \color[HTML]{000000} 0.00 &
{\cellcolor[HTML]{F0CFD4}} \color[HTML]{000000} 0.03 &
{\cellcolor[HTML]{EDCCD4}} \color[HTML]{000000} 0.00 &
{\cellcolor[HTML]{EECDD4}} \color[HTML]{000000} 0.01 &
{\cellcolor[HTML]{EDCCD4}} \color[HTML]{000000} 0.00 &
{\cellcolor[HTML]{EDCCD4}} \color[HTML]{000000} 0.00 &
{\cellcolor[HTML]{EDCCD4}} \color[HTML]{000000} 0.00 &
{\cellcolor[HTML]{EECDD4}} \color[HTML]{000000} 0.01 &
{\cellcolor[HTML]{EDCCD4}} \color[HTML]{000000} 0.00 &
{\cellcolor[HTML]{EDCCD4}} \color[HTML]{000000} 0.00 &
{\cellcolor[HTML]{EDCCD4}} \color[HTML]{000000} 0.00 &
{\cellcolor[HTML]{EDCCD4}} \color[HTML]{000000} 0.01 \\

 &
4 &
{\cellcolor[HTML]{CCE1D7}} \color[HTML]{000000} 1.00 &
{\cellcolor[HTML]{CCE1D7}} \color[HTML]{000000} 1.00 &
{\cellcolor[HTML]{F2FAE4}} \color[HTML]{000000} 0.65 &
{\cellcolor[HTML]{EFCED4}} \color[HTML]{000000} 0.02 &
{\cellcolor[HTML]{EECDD4}} \color[HTML]{000000} 0.01 &
{\cellcolor[HTML]{EDCCD4}} \color[HTML]{000000} 0.00 &
{\cellcolor[HTML]{CCE1D7}} \color[HTML]{000000} 1.00 &
{\cellcolor[HTML]{CEE4D9}} \color[HTML]{000000} 0.97 &
{\cellcolor[HTML]{FEEEDF}} \color[HTML]{000000} 0.29 &
{\cellcolor[HTML]{EECDD4}} \color[HTML]{000000} 0.01 &
{\cellcolor[HTML]{EDCCD4}} \color[HTML]{000000} 0.00 &
{\cellcolor[HTML]{EDCCD4}} \color[HTML]{000000} 0.00 \\

 &
8 &
{\cellcolor[HTML]{CCE1D7}} \color[HTML]{000000} 1.00 &
{\cellcolor[HTML]{CCE1D7}} \color[HTML]{000000} 1.00 &
{\cellcolor[HTML]{D0E9DB}} \color[HTML]{000000} 0.92 &
{\cellcolor[HTML]{FBDED8}} \color[HTML]{000000} 0.17 &
{\cellcolor[HTML]{EECDD4}} \color[HTML]{000000} 0.01 &
{\cellcolor[HTML]{EDCCD4}} \color[HTML]{000000} 0.01 &
{\cellcolor[HTML]{CCE1D7}} \color[HTML]{000000} 1.00 &
{\cellcolor[HTML]{CCE1D7}} \color[HTML]{000000} 1.00 &
{\cellcolor[HTML]{F9FCEA}} \color[HTML]{000000} 0.58 &
{\cellcolor[HTML]{F1D0D4}} \color[HTML]{000000} 0.04 &
{\cellcolor[HTML]{EDCCD4}} \color[HTML]{000000} 0.00 &
{\cellcolor[HTML]{EDCCD4}} \color[HTML]{000000} 0.00 \\

 &
16 &
{\cellcolor[HTML]{CCE1D7}} \color[HTML]{000000} 1.00 &
{\cellcolor[HTML]{CCE1D7}} \color[HTML]{000000} 1.00 &
{\cellcolor[HTML]{CCE1D7}} \color[HTML]{000000} 1.00 &
{\cellcolor[HTML]{FFFBEC}} \color[HTML]{000000} 0.44 &
{\cellcolor[HTML]{F2D1D4}} \color[HTML]{000000} 0.06 &
{\cellcolor[HTML]{EECDD4}} \color[HTML]{000000} 0.01 &
{\cellcolor[HTML]{CFE6DA}} \color[HTML]{000000} 0.95 &
{\cellcolor[HTML]{CCE1D7}} \color[HTML]{000000} 1.00 &
{\cellcolor[HTML]{CCE1D7}} \color[HTML]{000000} 1.00 &
{\cellcolor[HTML]{FEEADD}} \color[HTML]{000000} 0.26 &
{\cellcolor[HTML]{EECDD4}} \color[HTML]{000000} 0.01 &
{\cellcolor[HTML]{EDCCD4}} \color[HTML]{000000} 0.01 \\

 &
64 &
{\cellcolor[HTML]{CCE1D7}} \color[HTML]{000000} 1.00 &
{\cellcolor[HTML]{CCE1D7}} \color[HTML]{000000} 1.00 &
{\cellcolor[HTML]{CCE1D7}} \color[HTML]{000000} 1.00 &
{\cellcolor[HTML]{CCE1D7}} \color[HTML]{000000} 1.00 &
{\cellcolor[HTML]{FEEADD}} \color[HTML]{000000} 0.26 &
{\cellcolor[HTML]{F1D0D4}} \color[HTML]{000000} 0.04 &
{\cellcolor[HTML]{CCE1D7}} \color[HTML]{000000} 1.00 &
{\cellcolor[HTML]{CCE1D7}} \color[HTML]{000000} 1.00 &
{\cellcolor[HTML]{CCE1D7}} \color[HTML]{000000} 1.00 &
{\cellcolor[HTML]{CCE1D7}} \color[HTML]{000000} 1.00 &
{\cellcolor[HTML]{F4D3D4}} \color[HTML]{000000} 0.08 &
{\cellcolor[HTML]{EDCCD4}} \color[HTML]{000000} 0.00 \\

 &
128 &
{\cellcolor[HTML]{CCE1D7}} \color[HTML]{000000} 1.00 &
{\cellcolor[HTML]{CCE1D7}} \color[HTML]{000000} 1.00 &
{\cellcolor[HTML]{CCE1D7}} \color[HTML]{000000} 1.00 &
{\cellcolor[HTML]{CCE1D7}} \color[HTML]{000000} 1.00 &
{\cellcolor[HTML]{F0F9E3}} \color[HTML]{000000} 0.68 &
{\cellcolor[HTML]{F7D5D4}} \color[HTML]{000000} 0.10 &
{\cellcolor[HTML]{CCE1D7}} \color[HTML]{000000} 1.00 &
{\cellcolor[HTML]{CCE1D7}} \color[HTML]{000000} 1.00 &
{\cellcolor[HTML]{CCE1D7}} \color[HTML]{000000} 1.00 &
{\cellcolor[HTML]{CCE1D7}} \color[HTML]{000000} 1.00 &
{\cellcolor[HTML]{FEEADD}} \color[HTML]{000000} 0.26 &
{\cellcolor[HTML]{EDCCD4}} \color[HTML]{000000} 0.00 \\

 &
256 &
{\cellcolor[HTML]{CCE1D7}} \color[HTML]{000000} 1.00 &
{\cellcolor[HTML]{CCE1D7}} \color[HTML]{000000} 1.00 &
{\cellcolor[HTML]{CCE1D7}} \color[HTML]{000000} 1.00 &
{\cellcolor[HTML]{CCE1D7}} \color[HTML]{000000} 1.00 &
{\cellcolor[HTML]{CCE1D7}} \color[HTML]{000000} 1.00 &
{\cellcolor[HTML]{FCE0D9}} \color[HTML]{000000} 0.19 &
{\cellcolor[HTML]{CCE1D7}} \color[HTML]{000000} 1.00 &
{\cellcolor[HTML]{CCE1D7}} \color[HTML]{000000} 1.00 &
{\cellcolor[HTML]{CCE1D7}} \color[HTML]{000000} 1.00 &
{\cellcolor[HTML]{CCE1D7}} \color[HTML]{000000} 1.00 &
{\cellcolor[HTML]{CCE1D7}} \color[HTML]{000000} 1.00 &
{\cellcolor[HTML]{EECDD4}} \color[HTML]{000000} 0.01 \\

&
512 &
{\cellcolor[HTML]{CCE1D7}} \color[HTML]{000000} 1.00 &
{\cellcolor[HTML]{CCE1D7}} \color[HTML]{000000} 1.00 &
{\cellcolor[HTML]{CCE1D7}} \color[HTML]{000000} 1.00 &
{\cellcolor[HTML]{CCE1D7}} \color[HTML]{000000} 1.00 &
{\cellcolor[HTML]{CCE1D7}} \color[HTML]{000000} 1.00 &
{\cellcolor[HTML]{FFFCED}} \color[HTML]{000000} 0.45 &
{\cellcolor[HTML]{CCE1D7}} \color[HTML]{000000} 1.00 &
{\cellcolor[HTML]{CCE1D7}} \color[HTML]{000000} 1.00 &
{\cellcolor[HTML]{CCE1D7}} \color[HTML]{000000} 1.00 &
{\cellcolor[HTML]{CCE1D7}} \color[HTML]{000000} 1.00 &
{\cellcolor[HTML]{CCE1D7}} \color[HTML]{000000} 1.00 &
{\cellcolor[HTML]{F7D6D4}} \color[HTML]{000000} 0.10 \\

&
1024 &
{\cellcolor[HTML]{CCE1D7}} \color[HTML]{000000} 1.00 &
{\cellcolor[HTML]{CCE1D7}} \color[HTML]{000000} 1.00 &
{\cellcolor[HTML]{CCE1D7}} \color[HTML]{000000} 1.00 &
{\cellcolor[HTML]{CCE1D7}} \color[HTML]{000000} 1.00 &
{\cellcolor[HTML]{CCE1D7}} \color[HTML]{000000} 1.00 &
{\cellcolor[HTML]{CCE1D7}} \color[HTML]{000000} 1.00 &
{\cellcolor[HTML]{CCE1D7}} \color[HTML]{000000} 1.00 &
{\cellcolor[HTML]{CCE1D7}} \color[HTML]{000000} 1.00 &
{\cellcolor[HTML]{CCE1D7}} \color[HTML]{000000} 1.00 &
{\cellcolor[HTML]{CCE1D7}} \color[HTML]{000000} 1.00 &
{\cellcolor[HTML]{CCE1D7}} \color[HTML]{000000} 1.00 &
{\cellcolor[HTML]{E2F3E0}} \color[HTML]{000000} 0.78 \\

&
2048 &
{\cellcolor[HTML]{CCE1D7}} \color[HTML]{000000} 1.00 &
{\cellcolor[HTML]{CCE1D7}} \color[HTML]{000000} 1.00 &
{\cellcolor[HTML]{CCE1D7}} \color[HTML]{000000} 1.00 &
{\cellcolor[HTML]{CCE1D7}} \color[HTML]{000000} 1.00 &
{\cellcolor[HTML]{CCE1D7}} \color[HTML]{000000} 1.00 &
{\cellcolor[HTML]{CCE1D7}} \color[HTML]{000000} 1.00 &
{\cellcolor[HTML]{CCE1D7}} \color[HTML]{000000} 1.00 &
{\cellcolor[HTML]{CCE1D7}} \color[HTML]{000000} 1.00 &
{\cellcolor[HTML]{CCE1D7}} \color[HTML]{000000} 1.00 &
{\cellcolor[HTML]{CCE1D7}} \color[HTML]{000000} 1.00 &
{\cellcolor[HTML]{CCE1D7}} \color[HTML]{000000} 1.00 &
{\cellcolor[HTML]{CCE1D7}} \color[HTML]{000000} 1.00 \\

\cline{1-14}
Bilinear &
&
{\cellcolor[HTML]{CCE1D7}} \color[HTML]{000000} 1.00 &
{\cellcolor[HTML]{CCE1D7}} \color[HTML]{000000} 1.00 &
{\cellcolor[HTML]{CCE1D7}} \color[HTML]{000000} 1.00 &
{\cellcolor[HTML]{CCE1D7}} \color[HTML]{000000} 1.00 &
{\cellcolor[HTML]{CCE1D7}} \color[HTML]{000000} 1.00 &
{\cellcolor[HTML]{CCE1D7}} \color[HTML]{000000} 1.00 &
{\cellcolor[HTML]{CCE1D7}} \color[HTML]{000000} 1.00 &
{\cellcolor[HTML]{CCE1D7}} \color[HTML]{000000} 1.00 &
{\cellcolor[HTML]{CCE1D7}} \color[HTML]{000000} 1.00 &
{\cellcolor[HTML]{CCE1D7}} \color[HTML]{000000} 1.00 &
{\cellcolor[HTML]{CCE1D7}} \color[HTML]{000000} 1.00 &
{\cellcolor[HTML]{CCE1D7}} \color[HTML]{000000} 1.00 \\
\specialrule{0.6pt}{0pt}{0pt}
\end{tabularx}
}

\end{table}
\FloatBarrier

\subsection{Multiplicative vs. additive recurrence}\label{appsec:additiveterms}
In a model with bi-linear state transitions, the role of the inputs is to represent transformations on the hidden state, effectively acting as \textit{computations} performed by hidden units. This contrasts with the conventional role of hidden units in an RNN, which primarily involves information retention.
Therefore, for fully bilinear models, multiplicative interactions without additive contributions are sufficient for learning state transitions. In fact, for rotational and (block-)diagonal models, the inclusion of additive terms can even be detrimental.

We trained models under four configurations: with and without input-dependent additive contributions, with and without bias, and without any additive terms.
Table~\ref{tab:biasnew} reports the OOD (sequence length 500) accuracy for the diagonal, 2D block-diagonal, and full bilinear models on parity, modular addition, and state machine simulation tasks. 

As observed, and in line with the proposed hierarchy, the diagonal model is only capable of learning the parity task (only when the additive terms are excluded), the 2D block-diagonal model can learn modular addition (again only without the additive terms). In contrast, the full bilinear model is able to learn both modular addition and the non-commutative task, regardless of whether the additive terms are included. 

Interestingly, in line with prior work by \cite{terzic2025sdssm}, we find that additive terms sometimes improve performance on tasks outside a model’s hierarchy (e.g., the 2D block-diagonal model on non-commutative tasks), although overall accuracy remains low in such cases.

\begin{table}[htbp]
    \centering
    \caption{
    Out-of-distribution (sequence length 500 for training sequence length of 10) accuracy of real diagonal, 2D block-diagonal, and full bilinear models on parity, modular addition, and state machine simulation tasks with and without additive terms in the recurrence.
    }
    \label{tab:biasnew}
    {
\scriptsize
\renewcommand{\arraystretch}{1.5}  
\setlength{\tabcolsep}{6.2pt}      
\setlength{\arrayrulewidth}{0.1pt}

\begin{tabularx}{\textwidth}{M{1.6cm}M{2cm}|E|EEEE|EEEEE}
\toprule


& Dataset 
& \text{Parity} 
& \multicolumn{4}{c|}{\text{Modular Addition}} 
& \multicolumn{5}{c}{\text{State Machine}} \\

& Modulus/State Size 
& 2 
& 3 
& 5 
& 10 
& 25 
& 2 
& 3 
& 5 
& 10 
& 25 \\

Model 
& Additive Terms 
&  
&  
&  
&  
&  
&  
&  
&  
&  
&  \\

\specialrule{0.6pt}{0pt}{0pt}

\multirow[c]{4}{*}{Real Diag.} 
& Const. 
& {\cellcolor[HTML]{EDCCD4}} \color[HTML]{000000} 0.00 
& {\cellcolor[HTML]{EFCED4}} \color[HTML]{000000} 0.02 
& {\cellcolor[HTML]{EDCCD4}} \color[HTML]{000000} 0.00 
& {\cellcolor[HTML]{EDCCD4}} \color[HTML]{000000} 0.00 
& {\cellcolor[HTML]{EDCCD4}} \color[HTML]{000000} 0.00 
& {\cellcolor[HTML]{F8D9D5}} \color[HTML]{000000} 0.13 
& {\cellcolor[HTML]{CCE1D7}} \color[HTML]{000000} 1.00 
& {\cellcolor[HTML]{FAFDEB}} \color[HTML]{000000} 0.57 
& {\cellcolor[HTML]{FCE1D9}} \color[HTML]{000000} 0.19 
& {\cellcolor[HTML]{F8D7D4}} \color[HTML]{000000} 0.11 \\

& Input Dependent 
& {\cellcolor[HTML]{EDCCD4}} \color[HTML]{000000} 0.00 
& {\cellcolor[HTML]{EDCCD4}} \color[HTML]{000000} 0.01 
& {\cellcolor[HTML]{EDCCD4}} \color[HTML]{000000} 0.00 
& {\cellcolor[HTML]{EDCCD4}} \color[HTML]{000000} 0.00 
& {\cellcolor[HTML]{EDCCD4}} \color[HTML]{000000} 0.00 
& {\cellcolor[HTML]{F0CFD4}} \color[HTML]{000000} 0.03 
& {\cellcolor[HTML]{CCE1D7}} \color[HTML]{000000} 1.00 
& {\cellcolor[HTML]{E7F5E1}} \color[HTML]{000000} 0.75 
& {\cellcolor[HTML]{FDE6DC}} \color[HTML]{000000} 0.24 
& {\cellcolor[HTML]{F8D7D4}} \color[HTML]{000000} 0.11 \\

& Input Dep. + Const. 
& {\cellcolor[HTML]{EDCCD4}} \color[HTML]{000000} 0.00 
& {\cellcolor[HTML]{EDCCD4}} \color[HTML]{000000} 0.00 
& {\cellcolor[HTML]{EECDD4}} \color[HTML]{000000} 0.01 
& {\cellcolor[HTML]{EDCCD4}} \color[HTML]{000000} 0.00 
& {\cellcolor[HTML]{EDCCD4}} \color[HTML]{000000} 0.00 
& {\cellcolor[HTML]{EDCCD4}} \color[HTML]{000000} 0.00 
& {\cellcolor[HTML]{CCE1D7}} \color[HTML]{000000} 1.00 
& {\cellcolor[HTML]{E8F5E1}} \color[HTML]{000000} 0.74 
& {\cellcolor[HTML]{FDE6DC}} \color[HTML]{000000} 0.24 
& {\cellcolor[HTML]{F7D6D4}} \color[HTML]{000000} 0.11 \\

& None 
& {\cellcolor[HTML]{CCE1D7}} \color[HTML]{000000} 1.00 
& {\cellcolor[HTML]{EECDD4}} \color[HTML]{000000} 0.01 
& {\cellcolor[HTML]{EDCCD4}} \color[HTML]{000000} 0.00 
& {\cellcolor[HTML]{EDCCD4}} \color[HTML]{000000} 0.00 
& {\cellcolor[HTML]{EDCCD4}} \color[HTML]{000000} 0.00 
& {\cellcolor[HTML]{CCE1D7}} \color[HTML]{000000} 1.00 
& {\cellcolor[HTML]{EECDD4}} \color[HTML]{000000} 0.01 
& {\cellcolor[HTML]{EDCCD4}} \color[HTML]{000000} 0.00 
& {\cellcolor[HTML]{EDCCD4}} \color[HTML]{000000} 0.00 
& {\cellcolor[HTML]{EDCCD4}} \color[HTML]{000000} 0.00 \\

\specialrule{0.6pt}{0pt}{0pt}

\multirow[c]{4}{*}{2D Block Diag.} 
& Const. 
& {\cellcolor[HTML]{EDCCD4}} \color[HTML]{000000} 0.00 
& {\cellcolor[HTML]{EDCCD4}} \color[HTML]{000000} 0.00 
& {\cellcolor[HTML]{F1F9E4}} \color[HTML]{000000} 0.66 
& {\cellcolor[HTML]{E3F3E0}} \color[HTML]{000000} 0.77 
& {\cellcolor[HTML]{E9F6E1}} \color[HTML]{000000} 0.73 
& {\cellcolor[HTML]{EDCCD4}} \color[HTML]{000000} 0.00 
& {\cellcolor[HTML]{CCE1D7}} \color[HTML]{000000} 1.00 
& {\cellcolor[HTML]{DAEFDE}} \color[HTML]{000000} 0.84 
& {\cellcolor[HTML]{FEECDE}} \color[HTML]{000000} 0.28 
& {\cellcolor[HTML]{F8D9D5}} \color[HTML]{000000} 0.13 \\

& Input Dependent 
& {\cellcolor[HTML]{EDCCD4}} \color[HTML]{000000} 0.00 
& {\cellcolor[HTML]{EDCCD4}} \color[HTML]{000000} 0.00 
& {\cellcolor[HTML]{EFCED4}} \color[HTML]{000000} 0.02 
& {\cellcolor[HTML]{EDCCD4}} \color[HTML]{000000} 0.00 
& {\cellcolor[HTML]{EDCCD4}} \color[HTML]{000000} 0.00 
& {\cellcolor[HTML]{F1CFD4}} \color[HTML]{000000} 0.04 
& {\cellcolor[HTML]{CCE1D7}} \color[HTML]{000000} 1.00 
& {\cellcolor[HTML]{CEE5D9}} \color[HTML]{000000} 0.95 
& {\cellcolor[HTML]{FFF1E1}} \color[HTML]{000000} 0.32 
& {\cellcolor[HTML]{F8D7D5}} \color[HTML]{000000} 0.12 \\

& Input Dep. + Const. 
& {\cellcolor[HTML]{EDCCD4}} \color[HTML]{000000} 0.00 
& {\cellcolor[HTML]{FFEFE0}} \color[HTML]{000000} 0.30 
& {\cellcolor[HTML]{EDCCD4}} \color[HTML]{000000} 0.00 
& {\cellcolor[HTML]{EDCCD4}} \color[HTML]{000000} 0.00 
& {\cellcolor[HTML]{EDCCD4}} \color[HTML]{000000} 0.00 
& {\cellcolor[HTML]{EDCCD4}} \color[HTML]{000000} 0.00 
& {\cellcolor[HTML]{CCE1D7}} \color[HTML]{000000} 1.00 
& {\cellcolor[HTML]{CEE5D9}} \color[HTML]{000000} 0.96 
& {\cellcolor[HTML]{FFEFE0}} \color[HTML]{000000} 0.31 
& {\cellcolor[HTML]{F8D7D5}} \color[HTML]{000000} 0.11 \\

& None 
& {\cellcolor[HTML]{CCE1D7}} \color[HTML]{000000} 1.00 
& {\cellcolor[HTML]{CCE1D7}} \color[HTML]{000000} 1.00 
& {\cellcolor[HTML]{CCE1D7}} \color[HTML]{000000} 1.00 
& {\cellcolor[HTML]{CCE1D7}} \color[HTML]{000000} 1.00 
& {\cellcolor[HTML]{CDE2D8}} \color[HTML]{000000} 0.98 
& {\cellcolor[HTML]{CCE1D7}} \color[HTML]{000000} 1.00 
& {\cellcolor[HTML]{FFF3E3}} \color[HTML]{000000} 0.34 
& {\cellcolor[HTML]{FADCD7}} \color[HTML]{000000} 0.16 
& {\cellcolor[HTML]{F5D4D4}} \color[HTML]{000000} 0.08 
& {\cellcolor[HTML]{F2D1D4}} \color[HTML]{000000} 0.05 \\

\specialrule{0.6pt}{0pt}{0pt}

\multirow[c]{4}{*}{Bilinear} 
& Const. 
& {\cellcolor[HTML]{CCE1D7}} \color[HTML]{000000} 1.00 
& {\cellcolor[HTML]{CCE1D7}} \color[HTML]{000000} 1.00 
& {\cellcolor[HTML]{CCE1D7}} \color[HTML]{000000} 1.00 
& {\cellcolor[HTML]{CCE1D7}} \color[HTML]{000000} 1.00 
& {\cellcolor[HTML]{CCE1D7}} \color[HTML]{000000} 1.00 
& {\cellcolor[HTML]{CCE1D7}} \color[HTML]{000000} 1.00 
& {\cellcolor[HTML]{CCE1D7}} \color[HTML]{000000} 1.00 
& {\cellcolor[HTML]{CCE1D7}} \color[HTML]{000000} 1.00 
& {\cellcolor[HTML]{CCE1D7}} \color[HTML]{000000} 1.00 
& {\cellcolor[HTML]{CCE1D7}} \color[HTML]{000000} 1.00 \\

& Input Dependent 
& {\cellcolor[HTML]{CCE1D7}} \color[HTML]{000000} 1.00 
& {\cellcolor[HTML]{CCE1D7}} \color[HTML]{000000} 1.00 
& {\cellcolor[HTML]{CCE1D7}} \color[HTML]{000000} 1.00 
& {\cellcolor[HTML]{CCE1D7}} \color[HTML]{000000} 1.00 
& {\cellcolor[HTML]{CCE1D7}} \color[HTML]{000000} 1.00 
& {\cellcolor[HTML]{CCE1D7}} \color[HTML]{000000} 1.00 
& {\cellcolor[HTML]{CCE1D7}} \color[HTML]{000000} 1.00 
& {\cellcolor[HTML]{CCE1D7}} \color[HTML]{000000} 1.00 
& {\cellcolor[HTML]{CCE1D7}} \color[HTML]{000000} 1.00 
& {\cellcolor[HTML]{CCE1D7}} \color[HTML]{000000} 1.00 \\

& Input Dep. + Const. 
& {\cellcolor[HTML]{CCE1D7}} \color[HTML]{000000} 1.00 
& {\cellcolor[HTML]{CCE1D7}} \color[HTML]{000000} 1.00 
& {\cellcolor[HTML]{CCE1D7}} \color[HTML]{000000} 1.00 
& {\cellcolor[HTML]{CCE1D7}} \color[HTML]{000000} 1.00 
& {\cellcolor[HTML]{CCE1D7}} \color[HTML]{000000} 1.00 
& {\cellcolor[HTML]{CCE1D7}} \color[HTML]{000000} 1.00 
& {\cellcolor[HTML]{CCE1D7}} \color[HTML]{000000} 1.00 
& {\cellcolor[HTML]{CCE1D7}} \color[HTML]{000000} 1.00 
& {\cellcolor[HTML]{CCE1D7}} \color[HTML]{000000} 1.00 
& {\cellcolor[HTML]{CCE1D7}} \color[HTML]{000000} 1.00 \\

& None 
& {\cellcolor[HTML]{CCE1D7}} \color[HTML]{000000} 1.00 
& {\cellcolor[HTML]{CCE1D7}} \color[HTML]{000000} 1.00 
& {\cellcolor[HTML]{CCE1D7}} \color[HTML]{000000} 1.00 
& {\cellcolor[HTML]{CCE1D7}} \color[HTML]{000000} 1.00 
& {\cellcolor[HTML]{CCE1D7}} \color[HTML]{000000} 1.00 
& {\cellcolor[HTML]{CCE1D7}} \color[HTML]{000000} 1.00 
& {\cellcolor[HTML]{CCE1D7}} \color[HTML]{000000} 1.00 
& {\cellcolor[HTML]{CCE1D7}} \color[HTML]{000000} 1.00 
& {\cellcolor[HTML]{CCE1D7}} \color[HTML]{000000} 1.00 
& {\cellcolor[HTML]{CCE1D7}} \color[HTML]{000000} 1.00 \\

\specialrule{0.6pt}{0pt}{0pt}

\end{tabularx}
}
\end{table}
\FloatBarrier

Our observation is consistent with work of \cite{terzic2025sdssm}, where it is shown that a complex diagonal model can generalize to longer sequences on a commutative automaton, but only when the additive term is removed and a linear readout is used. However, they also show the same model fails to learn a non-commutative automaton. In that case, the best performance is achieved when additive terms are included and a non-linear readout layer is used. Even then, generalization remains limited, although the additive terms offer a slight improvement.

\end{document}